\definecolor{afiablue}{RGB}{61,159,207}
\definecolor{afiared}{RGB}{167,75,68}
\definecolor{afialightblue}{RGB}{158,193,232}
    \theoremstyle{plain}
    \newtheorem{theorem}{Theorem}[section]
    \newtheorem{proposition}{Proposition}[section]
    \newtheorem{lemma}{Lemma}[section]
    \theoremstyle{definition}
    \newtheorem{definition}[theorem]{Definition}
    \theoremstyle{remark}
\DeclarePairedDelimiterX{\Iintv}[1]{\llbracket}{\rrbracket}{\iintvargs{#1}}
    \NewDocumentCommand{\iintvargs}{>{\SplitArgument{1}{,}}m}
    {\iintvargsaux#1} %
    \NewDocumentCommand{\iintvargsaux}{mm} {#1\mkern1.5mu,\mkern1.5mu#2}
    \DeclareMathOperator*{\argmin}{arg\,min}
    \DeclareMathOperator{\Cov}{Cov}
    \DeclareMathOperator{\vect}{vec}
    \DeclareMathOperator{\tr}{tr}
    \DeclareMathOperator{\rank}{rank}
    \DeclareMathOperator{\Leb}{Leb}
    \DeclareMathOperator{\Unif}{Unif}
\newcommand\norm[1]{\left\lVert#1\right\rVert}
\newcommand\TV{\mathit{TV}}
\icmltitlerunning{Privacy Amplification Through Synthetic Data: Insights from Linear Regression}
\begin{document}

\twocolumn[
\icmltitle{Privacy Amplification Through Synthetic Data: Insights from Linear Regression}
          \icmlsetsymbol{equal}{*}
          
\begin{icmlauthorlist}
\icmlauthor{Clément Pierquin}{Craft,Lille}
\icmlauthor{Aurélien Bellet}{Montpellier}
\icmlauthor{Marc Tommasi}{Lille}
\icmlauthor{Matthieu Boussard}{Craft}

\end{icmlauthorlist}

\icmlaffiliation{Craft}{Craft AI, Paris, France}
\icmlaffiliation{Lille}{Université de Lille, Inria, CNRS, Centrale Lille, UMR 9189 CRIStAL, F-59000 Lille, France}
\icmlaffiliation{Montpellier}{
Inria, Université de Montpellier, INSERM}
\icmlcorrespondingauthor{Clément Pierquin}{clement.pierquin@craft-ai.fr}

\icmlkeywords{Machine Learning, ICML}

\vskip 0.3in
]
\printAffiliationsAndNotice{}

\begin{abstract}
Synthetic data inherits the differential privacy guarantees of the model used to generate it. Additionally, synthetic data may benefit from \emph{privacy amplification} when the generative model is kept hidden. While empirical studies suggest this phenomenon, a rigorous theoretical understanding is still lacking. In this paper, we investigate this question through the well-understood framework of linear regression. First, we establish negative results showing that if an adversary controls the seed of the generative model, a single synthetic data point can leak as much information as releasing the model itself. Conversely, we show that when synthetic data is generated from random inputs, releasing a limited number of synthetic data points amplifies privacy beyond the model's inherent guarantees. We believe our findings in linear regression can serve as a foundation for deriving more general bounds in the future.
\end{abstract}

\section{Introduction}
\label{sec:intro}

Differential privacy (DP)~\citep{Dwork2014} has become the gold standard for privacy-preserving data analysis. Training machine learning models with DP guarantees can be achieved through various techniques: \emph{output perturbation} \citep{chaudhuri2011differentiallyprivateempiricalrisk,Zhang20172,Lowy2024}, which adds noise to the non-private model; \emph{objective perturbation} \citep{chaudhuri2011differentiallyprivateempiricalrisk,Kifer2012,Redberg2023}, which introduces noise into the objective function; and \emph{gradient perturbation}, which injects noise into the optimization process, as in DP-SGD~\citep{private_sgd,Bassily2014a,Abadi2016,Feldman2018}. Once trained, the model can be safely released, with its privacy guarantees extending to all subsequent uses thanks to the post-processing property of DP. This is particularly relevant for \emph{differentially private generative models} \citep{Zhang2017,xie2018,mckenna2019,Jordon2018,McKenna2021,Lee2022,dockhorn2022,bie2023private}, where the synthetic data they produce inherits the same privacy guarantees as the model itself.

Empirical studies, however, suggest that synthetic data may offer even stronger privacy protection than the theoretical guarantees provided by the model~\citep{Annamalai2024}. This suggests that certain structural properties of the data or the generative process itself could contribute to an implicit privacy amplification effect. One possible intuition is that the privacy leakage might be reduced when the number of released synthetic data points is ``small'' relative to the complexity of the generative model. However, to the best of our knowledge, no existing work has formally established the existence of such a privacy amplification effect, and a rigorous quantification of differential privacy in synthetic data release remains an open question.

To address this gap, this paper takes an initial step towards developing a theoretical framework for quantifying privacy in synthetic data release. We focus on the well-studied setting of (high-dimensional) linear regression trained via a least-squares objective as a simple case study. This model has the advantage of being analytically tractable but sufficiently expressive to capture phenomena observed in more complex models---such as double descent in overparameterized regimes \cite{10.1214/21-AOS2133} and, more recently, model collapse in generative AI~\cite{Dohmatob2024, Gerstgrasser2024}.

We rely on the  $f$-Differential Privacy ($f$-DP) framework \citep{Dong2019}, which provides a flexible and robust approach to privacy analysis, allowing precise characterizations of privacy guarantees through trade-off functions. When these trade-offs functions are difficult to interpret, we also express privacy guarantees in the Rényi differential privacy (RDP) framework \citep{Mironov2017}.

Our results are two-fold. First, in Section~\ref{sec:fixed-input}, 
we present negative results in scenarios where an adversary controls the seed of the synthetic data generation process. Specifically, we show that the adversary can leverage this control to achieve privacy leakage equivalent to the bound imposed by post-processing the model using only a single synthetic sample. Second, in Section~\ref{sec:outputperturb}, we analyze the privacy guarantees when synthetic data is generated from random inputs to a private regression model obtained via output perturbation. We demonstrate that privacy amplification is possible in this setting, depending on the model size and the number of released synthetic samples. All proofs can be found in the supplementary material.

Our findings highlight the critical role of the randomness given as input to the model, which must remain concealed from the adversary in order to enable privacy amplification. While the practical impact of our results is limited, we believe they offer valuable insights and lay the groundwork for a deeper understanding of synthetic data privacy in more complex machine learning models. 

\textbf{Related work.}
To the best of our knowledge, existing methods for differentially private synthetic data generation rely on learning a differentially private generative model \citep{Yuzheng2024}. Early approaches focused on marginal-based techniques for tabular data, where a graphical model---such as a Bayesian network---is privately estimated from data and then used to generate new samples~\citep{Zhang2017, mckenna2019, McKenna2021}. More recent methods extend to other data modalities, leveraging expressive neural network-based generative models---like GANs and diffusion models---trained with differentially privacy~\citep{xie2018,Jordon2018,Lee2022,dockhorn2022,bie2023private}. A key advantage of neural networks is the availability of general differentially private training algorithms, such as DP-SGD~\citep{Abadi2016} and PATE~\citep{Papernot2017}, which can be applied across various generative models.
Crucially, all these methods rely on the post-processing theorem to ensure the privacy guarantees of the generated synthetic data—but it remains unclear whether this guarantee is tight or potentially overly conservative.

In principle, one could deviate from this dominant approach by adding noise directly to the data generated by a (non-private) generative model. In such cases, the overall privacy loss would scale with the number of released data points due to the composition property of differential privacy. However, this approach would require strong and often unrealistic assumptions about the data.  Most critically, it would lead to significant utility loss—particularly for high-dimensional perceptual data such as images, where even small perturbations can severely degrade semantic content and downstream performance. To our knowledge, no successful applications of this approach have been demonstrated in practice.

Interestingly, our results in Section~\ref{sec:outputperturb} suggest that differentially private generative models may offer the best of both worlds: the post-processing guarantee, which strictly bounds the privacy leakage when releasing a large number of samples, \emph{and simultaneously} a privacy guarantee that scales with the number of released data points, which is more favorable when only a few samples are released.

Our results relate to the concept of \emph{privacy amplification} \citep{Balle2018,Feldman2018,erlingsson2020amplification,cyffers2022privacy}, which leverages
the non-disclosure of certain intermediate computations to strengthen the privacy guarantees of existing mechanisms. We note that the form of amplification we study in the context of synthetic data release differs from privacy amplification by iteration \citep{Feldman2018}. In that setting, the final model is released after private training. In contrast, our approach withholds the model entirely and releases only synthetic data generated from random inputs to the model, introducing an additional layer of privacy protection.

We conclude our discussion of related work by mentioning a recent study that shows synthetic data can satisfy differential privacy guarantees without formal guarantees for the generative model itself~\citep{Neunhoeffer2024}. However, this work is limited to a simple model where the private training data is one-dimensional, and the synthetic data is generated from a Gaussian distribution with mean and variance estimated from the private data. 
In contrast, our paper addresses a different, more complex problem: we investigate the privacy guarantees associated with releasing the output of a differentially private model, specifically linear regression. In our case, we directly model the distribution of the output of linear regression for a random seed, which corresponds to a product of Gaussian matrices.

\section{Background \& Preliminaries}

\subsection{Differential Privacy}

In this section, we give a brief background about the technical tools we use from the differential privacy literature. Here and throughout, $\mathcal{M}$ denotes a randomized mechanism, and we say that two datasets $\mathcal{D}$ and $\mathcal{D}'$ of fixed size $m$ are adjacent if they differ in a single data point.

\textbf{Rényi Differential Privacy (RDP)} is a variant of DP which quantities the privacy guarantees in terms of a Rényi divergence between $\mathcal{M}(\mathcal{D})$ and $\mathcal{M}(\mathcal{D}')$~\citep{Mironov2017}. Formally, for $\alpha>1$, let $D_\alpha(P,Q) = \frac{1}{\alpha-1}\log\mathbb{E}_{x \sim P}\Big[\big(\frac{Q(x)}{P(x)}\big)^\alpha\Big]$ be the Rényi divergence of order $\alpha$ between distributions $P$ and $Q$. By abuse of notation, if $V \sim P$, $W \sim Q$, we will write $D_\alpha(V,W) = D_\alpha(P,Q)$. For $\varepsilon>0$, a mechanism $\mathcal{M}$ is said to satisfy $(\alpha, \varepsilon)$-RDP if, for any two adjacent datasets $\mathcal{D}$ and $\mathcal{D}'$, we have:
\[D_\alpha(\mathcal{M}(\mathcal{D}) \| \mathcal{M}(\mathcal{D}')) \leq \varepsilon.\]

\textbf{Trade-off functions and $f$-DP.}
Trade-off functions~\citep{Dong2019} capture the inherent trade-off between type I and type II errors of hypothesis tests that distinguish between outputs generated from two adjacent datasets. Let $P$ and $Q$ be two distributions and consider the following hypothesis testing problem: 
\[H_1: \text{the distribution is } P \text{~~ vs ~~} H_2: \text{the distribution is } Q.\]
For a given rejection rule $\phi\in[0,1]$, the type I error is $\mathbb{E}_P[\phi]$ and the type II error is $1-\mathbb{E}_Q[\phi]$. Then, the trade-off function $T(P,Q)$ is defined as: \[T(P,Q)(\alpha) = \inf_{\phi} \{1-\mathbb{E}_Q[\phi]:\mathbb{E}_P[\phi] \leq \alpha\}.\]
Again, by abuse of notation, if $V \sim P$, $W \sim Q$, we will write $T(V,W) = T(P,Q)$.

Let $f : [0,1] \to [0,1]$ be a decreasing convex function. A mechanism $\mathcal{M}$ is said to satisfy $f$-Differential Privacy ($f$-DP) if for any adjacent $\mathcal{D},\mathcal{D}'$, we have:
\[T(\mathcal{M}(\mathcal{D}),\mathcal{M}(\mathcal{D}')) \geq f.\]

Gaussian Differential Privacy (GDP)~\citep{Dong2019} is a special case of $f$-DP defined with Gaussian trade-off functions. For $\mu \geq 0$, we define:
\[G_\mu = T(\mathcal{N}(0,1),\mathcal{N}(\mu,1)).\]
Then, $\mathcal{M}$ is said $\mu$-GDP if it is $G_\mu$-DP.

\subsection{Linear Regression Setting}
\label{sec:setting-regression}

Throughout the paper, we consider a multi-output linear regression setting, where a dataset $\mathcal{D} = (X,Y) = ((x_1,y_1), \dots, (x_m,y_m))$ consists of $m$ labeled data points, with $X \in \mathcal{X}^{d \times m}$ the data matrix and $Y \in \mathbb{R}^{n \times m}$ the label (output) matrix. Here, $d$ denotes the number of features of the input (private) data points, and $n$ denotes the dimension of outputs (i.e., the number of features of synthetic data points).
A linear model is represented by parameters $w \in \mathbb{R}^{n \times d}$ and predicts $\hat Y=wX$.

\section{Releasing Synthetic Data from Fixed Inputs}
\label{sec:fixed-input}

In this section, we investigate the privacy guarantees of releasing synthetic data when the input to the generation process is fixed and chosen by an adversary. This represents a strong yet practically relevant threat model, encompassing situations such as when an adversary has access to an API that allows them to query the generative model.

Focusing on linear regression, we consider two standard ways to train the model with differential privacy guarantees: output perturbation \citep{chaudhuri2011differentiallyprivateempiricalrisk} and Noisy Gradient Descent \citep{private_sgd,Bassily2014a,Abadi2016}.
In both cases, we show that when the adversary controls the seed of the generation process, they can induce privacy leakage that reaches the upper bound established by post-processing, even with just a single data point. In other words, no privacy amplification occurs under this scenario. These negative results underscore the vulnerability of synthetic data generation mechanisms to adversarial manipulation.

\subsection{Output Perturbation}
\label{subsec:output-pert}

We begin with models trained with output perturbation. We make the assumption that each element $(x_i,y_i)$ of the dataset satisfies the following property: $\|x_i\| \leq M_x$ and $\|y_i\| \leq M_y$, where $\|\cdot\|$ denotes the Frobenius norm. These hypotheses are standard in private linear regression~\cite{Wang2018}. Denoting  the $\ell_2$-regularized least-square objective as $F_\lambda(w;\mathcal{D}) = \frac{1}{m} \sum_{k=1}^m \norm{wx_i-y_i}^2 + \lambda\norm{w}^2$, we know from \cite{chaudhuri2011differentiallyprivateempiricalrisk} that $\mathcal{D} \mapsto \argmin_{w \in \mathbb{R}^{n \times d}} F_\lambda(w;\mathcal{D})$ has bounded sensitivity $\Delta = 2L/m\lambda $, where $L = M_x^2 M_\theta + M_x M_y + \lambda M_\theta$ and $M_\theta$ is the upper bound of the Frobenius norm of the minimizer of $F_\lambda$, always bounded for ridge regression.
We denote the output perturbation mechanism by $\mathcal{M}(\mathcal{D}) = \argmin_{w \in \mathbb{R}^{n \times d}} F_\lambda(w;\mathcal{D}) + \sigma_\theta N$, where $N_{ij} \overset{iid}{\sim} \mathcal{N}(0,1)$. 

For two adjacent datasets $\mathcal{D}$ and $\mathcal{D}'$, let $v^* = \argmin_{w \in \mathbb{R}^{n \times d}} F_\lambda(w;\mathcal{D})$ and $w^{*} = \argmin_{w \in \mathbb{R}^{n \times d}} F_\lambda(w;\mathcal{D}')$.
With some abuse of notation, we denote 
$\mathcal{M}(v^*)=\mathcal{M}(\mathcal{D})$ and $\mathcal{M}(w^*)=\mathcal{M}(\mathcal{D}')$.

The exact trade-off function of output perturbation with the Gausian mechanism is well known \citep{Dong2019}:
\[\inf_{\substack{v,w \in \mathbb{R}^{n \times d}:\\ \norm{v-w} \leq \Delta }} T(\mathcal{M}(v),\mathcal{M}(w)) =  \inf_{\substack{\mu \in \mathbb{R}^{n \times d}:\\ \norm{\mu} \leq \Delta }}G_{\norm{\mu}/\sigma_\theta} = G_{\Delta/\sigma_\theta}. \] 

The RDP guarantees are also known \citep{Mironov2017}:
$$\sup_{\substack{v,w \in \mathbb{R}^{n \times d}:\\ \norm{v-w} \leq \Delta }} D_\alpha(\mathcal{M}(v),\mathcal{M}(w)) = \frac{\alpha \Delta^2}{2 \sigma_\theta^2}.$$

Due to translation invariance of trade-off function for Gaussian matrices and noting $\mu = w^* - v^*$, comparing $\mathcal{M}(v^*)$ and $\mathcal{M}(w^*)$ is equivalent to comparing  $V = \mathcal{M}(0_{n\times d})$ and $W = V + \mu$ where $\mu$ is called the shift between $V$ and $W$. 

We are interested in quantifying the privacy leakage of releasing the output of the model queried with a seed input. Formally, a seed is a vector $z \in \mathbb{R}^d$ and we define the corresponding query to model $v^*$ (respectively $w^*$) as $v^*z\in\mathbb{R}^n$ (respectively $w^*z$). Note that $\mathcal{M}(v^*)z$ is a Gaussian vector.
Based on the derivation above and due to translation invariance of trade-off functions for Gaussian vectors, quantifying the privacy leakage then amounts to characterizing the trade-off function $T(Vz,Wz)$.

It is clear that an adversary can recover the model parameters $v^*$ from $d$ queries $(v^*z_1,\dots,v^*z_d)$ by choosing, for $i \in \Iintv{1,d}, z_i = (\delta_{ij})_{j \in \Iintv{1,n}}$, effectively probing each coordinate individually. This leads to the maximum possible information leakage allowed by the post-processing upper bound. Strikingly, we now show that for some datasets, the adversary can in fact induce this maximal privacy leakage \emph{with just one query}.

By definition,  $Wz = Vz + \mu z$ and therefore $Wz$ is the result of shifting the distribution $Vz$ by $\mu z$. The norm of the shift between $Wz$ and $Vz$ is thus $\|\mu z\|$, while the norm of the shift between $V$ and $W$ is $\|\mu\|$. We can compare, for a given shift $\mu$ and a given seed $z$, the trade-off functions of $(V,W)$ and $(Vz,Wz)$. We have  $T(Vz,Wz) = G_{\|\mu z\|/\|z\|\sigma_\theta}$. For a given shift $\mu$ between $V$ and $W$, which is known by the adversary in the threat model of DP, the adversary can maximize $\|\mu z\|/\|z\|$ by taking $z$ to be the right singular vector corresponding to the largest singular value $\sigma_{\max}(\mu)$ of $\mu$. We directly obtain that $$T(V z,W z) = G_{|\sigma_{\max}(\mu)|/\sigma_\theta}.$$
Hence, for a hypothesis test between a particular instantiation of shift $\mu \in \mathbb{R}^{n \times d}$, the  sensitivity of the exact trade-off function between $Vz$ and $Wz$ is improved from $\norm{\mu}/ \sigma_\theta$ to $|\sigma_{\max}(\mu)|/\sigma_\theta$. However, these considerations do not imply better privacy guarantees than releasing the model parameters, as shown by the following result.

\begin{restatable}{proposition}{propOutputPertFixed}
\label{prop:output-pert-fixed}
For any fixed $z \in \mathbb{R}^d$, there exist adjacent datasets $\mathcal{D}$ and $\mathcal{D}'$ such that:
\[T(Vz,Wz) = T(V,W).\]
\end{restatable}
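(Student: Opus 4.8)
The plan is to reduce the claimed identity to a geometric condition on the shift $\mu=w^{*}-v^{*}$, and then to construct adjacent datasets realizing it. As recorded in the excerpt, $T(V,W)=G_{\norm{\mu}/\sigma_\theta}$, and (for $z\neq 0$) $T(Vz,Wz)=G_{\norm{\mu z}/(\norm{z}\sigma_\theta)}$. Since $G_\nu$ depends only on the scalar $\nu\ge 0$, it therefore suffices to find $\mathcal{D},\mathcal{D}'$ with $\norm{\mu z}/\norm{z}=\norm{\mu}$, i.e.\ $\norm{\mu z}=\norm{\mu}\,\norm{z}$. From the chain of inequalities $\norm{\mu z}\le\sigma_{\max}(\mu)\norm{z}\le\norm{\mu}\,\norm{z}$, this identity holds whenever $\mu$ is rank one with row space spanned by $z^{\top}$: if $\mu=uz^{\top}$ for some $u\in\mathbb{R}^{n}$, then $\mu z=\norm{z}^{2}u$ and $\norm{\mu}=\norm{u}\,\norm{z}$, so both sides equal $\norm{u}\,\norm{z}^{2}$. (The case $z=0$ is immediate: $Vz=Wz\equiv 0$, and taking $\mathcal{D}'$ to differ from $\mathcal{D}$ in a single label while keeping every feature vector equal to $0$ forces $v^{*}=w^{*}=0$, so both trade-off functions are $G_{0}$.)

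To produce a shift of the form $\mu=uz^{\top}$, I would take $\mathcal{D}=((0,0),\dots,(0,0))$, so that $F_{\lambda}(w;\mathcal{D})=\lambda\norm{w}^{2}$ and hence $v^{*}=0$, and let $\mathcal{D}'$ be $\mathcal{D}$ with its first point replaced by $(cz,\,y_{1})$, where $c>0$ is small enough that $\norm{cz}\le M_{x}$ and $y_{1}\in\mathbb{R}^{n}$ is any nonzero vector with $\norm{y_{1}}\le M_{y}$; these datasets are adjacent and satisfy the boundedness hypotheses. For $\mathcal{D}'$ the data matrix $X'$ has exactly one nonzero column $cz$, so the ridge normal equations read $w^{*}\bigl(\tfrac{c^{2}}{m}zz^{\top}+\lambda I_{d}\bigr)=\tfrac{c}{m}\,y_{1}z^{\top}$. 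The matrix $\tfrac{c^{2}}{m}zz^{\top}+\lambda I_{d}$ acts as the scalar $\lambda$ on $z^{\perp}$ and as a scalar on $z$, so its inverse is of the form $aI_{d}+b\,zz^{\top}$; substituting then gives $w^{*}=\gamma\,y_{1}z^{\top}$ for a nonzero scalar $\gamma$ (Sherman--Morrison yields $\gamma=1/(m\lambda+c^{2}\norm{z}^{2})$, but only the rank-one structure is needed). Hence $\mu=w^{*}-v^{*}=\gamma\,y_{1}z^{\top}$ has the desired form with $u=\gamma y_{1}$, and the reduction above gives $T(Vz,Wz)=G_{\norm{\mu}/\sigma_\theta}=T(V,W)$.

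The main conceptual step is the first reduction: recognizing that equality of the two trade-off functions amounts to the scalar identity $\norm{\mu z}=\norm{\mu}\,\norm{z}$, equivalently to $\mu$ being rank one and aligned with the query direction $z$. Once this is seen, the construction is essentially forced --- an all-zeros base dataset perturbed by a single data point proportional to $z$ --- and the computation of $w^{*}$ is routine. The only other thing to watch is compatibility with $\norm{x_{i}}\le M_{x}$ and $\norm{y_{i}}\le M_{y}$, which is harmless since $c$ may be chosen arbitrarily small and $y_{1}$ with small norm.
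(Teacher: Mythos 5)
Your proof is correct and follows essentially the same route as the paper's: reduce the claim to exhibiting adjacent datasets whose shift $\mu=w^*-v^*$ is rank one with row space spanned by $z$, then construct such datasets explicitly (the paper uses a rank-one $X=a(z/\|z\|)^T$ with $Y'$ a rescaling of $Y$, while you use an all-zero dataset perturbed by a single point $(cz,y_1)$ --- a clean instantiation of the same idea that, if anything, handles adjacency and the $z=0$ edge case more carefully). The minor slip in the value of $\gamma$ (it should be $c/(m\lambda+c^2\|z\|^2)$) is immaterial since only the rank-one structure is used.
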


In other words, for any possible query $z \in \mathbb{R}^d$, there exists two (pathological) adjacent datasets $\mathcal{D}, \mathcal{D}'$ such that performing the query $\mathcal{M}(\mathcal{D})z$ implies the same privacy leakage as directly releasing $\mathcal{M}(\mathcal{D})$.

Note however that our results show that, for a specific shift $\mu \in \mathbb{R}^{n \times d}$, the  sensitivity of the exact trade-off function between $Vz$ and $Wz$ is $|\sigma_{\max}(\mu)|/ \sigma_\theta$, which can be strictly smaller than the norm-based bound $\norm{\mu}/\sigma_\theta$. This indicates that, from an empirical standpoint, the actual privacy leakage may be lower than the worst-case upper bound implied by the post-processing theorem for realistic datasets.

We provide the detailed proof and a discussion about the choice of $z$ in Appendix~\ref{appsubsec:output-pert-fixed}.

\subsection{Noisy Gradient Descent}

We now extend the previous results to the case where the private generative model is trained with Noisy Gradient Descent (NGD) \citep{private_sgd,Bassily2014a,Abadi2016,Feldman2018,Altschuler2022}.

Specifically, we present negative results in the context of Label Differential Privacy (Label DP), where adjacent datasets only differ in the labels \citep{Ghazi2021}. Formally, adjacent datasets under label DP can be written $\mathcal{D} = (X,Y)$, $\mathcal{D}' = (X,Y')$, where $Y\in \mathbb{R}^{n \times m}$ and $Y'\in \mathbb{R}^{n \times m}$ differ in exactly one column (we say $Y,Y'$ are adjacent).
Since any two datasets that are adjacent under Label DP remain adjacent under standard DP (where both features and labels can differ), the maximum privacy leakage under standard DP is at least as large. Our negative results for Label DP thus extend to standard DP.

Let $V_0 = W_0 \in \mathbb{R}^{n \times d}$ a standard Gaussian initialization, $F(w,X,Y) = \frac{1}{m} \sum_{k=1}^m \norm{wx_k-y_k}^2 + \lambda\|w\|^2$ the objective function, $\eta > 0$, $\sigma>0$ and $\{N_t\}_t$ a sequence of i.i.d standard Gaussian matrices of $\mathbb{R}^{n \times d}$. On two adjacent datasets $\mathcal{D} = (X,Y)$, $\mathcal{D}' = (X,Y')$, NGD corresponds to the following updates:
\begin{align*}
V_{t+1} &= V_t - \eta \nabla_w F(V_t,X,Y) + \sqrt{2\eta}\sigma N_{t+1},\\
W_{t+1} &= W_t - \eta \nabla_w F(W_t,X,Y') + \sqrt{2\eta}\sigma N_{t+1}.
\end{align*}
Note that $V_t$ can be decomposed into independent Gaussian rows, and likewise for $W_t$.

In the following, we focus on the case where the model is trained until convergence (i.e., $t\rightarrow\infty$) before querying it to release synthetic data (a finite-time analysis can also be done, see Appendix~\ref{appsubsec:finite-time-noisy-sgd}). Our objective is thus to compare the trade-offs functions $T(V_\infty,W_\infty)$---for releasing the model directly---and $T(V_\infty z, W_\infty z)$---for releasing the output of the model on a query $z \in \mathbb{R}^d$.

As a discretized Langevin dynamical system with a strongly convex, smooth objective, it is known that $V_t$ converges in distribution to a limit distribution~\cite{Durmus2019}, which is the Gibbs stationary distribution when $\eta \to 0$. 
We can leverage this consideration to characterize $T(V_\infty,W_\infty)$ and $T(V_\infty z, W_\infty z)$.

\begin{restatable}{proposition}{propNoisySgdFixedTradeOff}
\label{prop:noisy-sgd-fixed-trade-off}
Let $\Sigma = \frac{1}{n}X^T X + \lambda I$, $M = I - 2\eta\Sigma$ and denote by $A$ the square root of $\Sigma^{-1} M$ and $B$ the square root of $\Sigma^{-1} M^{-1}$. Assume that $Y$ and $Y'$ are adjacent and that $\eta (\lambda + M_x^2/n) < 1$. Then:
\begin{align*}
    T(V_\infty,W_\infty) &= G_{\frac{\|A X^T (Y-Y')\|}{n\sigma}},\\
T(V_\infty z,W_\infty z) &= G_{\frac{\|z^T \Sigma^{-1} X^T (Y-Y')\|}{ n\sigma \| B z\|}}.
\end{align*}
\end{restatable}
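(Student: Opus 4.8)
The key observation is that NGD on the quadratic $F(\cdot,X,Y)$ is an \emph{affine Gaussian recursion}, so each $V_t$ (and its limit $V_\infty$) is Gaussian with moments that can be written in closed form, and the two trade-off functions then follow from the exact trade-off between two Gaussians that share a covariance matrix.

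First I would put the update in affine form. Since $\nabla_w F(w,X,Y)$ is affine in $w$, there is a matrix $C(Y)$ (proportional to $X^T(Y-Y')$ after differencing) with $\nabla_w F(w,X,Y) = 2\bigl(w\Sigma - C(Y)\bigr)$, so that
\[
V_{t+1} = V_t M + 2\eta\,C(Y) + \sqrt{2\eta}\,\sigma N_{t+1},\qquad M = I - 2\eta\Sigma,
\]
and likewise for $W_t$ with $C(Y')$. The assumption $\eta(\lambda + M_x^2/n) < 1$ controls the spectrum of $\Sigma$ so that the symmetric matrix $M$ is a contraction (and so that the matrices below are PSD). Unrolling the recursion and letting $t\to\infty$: the contribution of the shared initialization $V_0=W_0$ vanishes geometrically, the deterministic part converges to the ridge solution $\bar V_\infty = C(Y)\Sigma^{-1}$ (resp. $\bar W_\infty = C(Y')\Sigma^{-1}$), and the accumulated noise converges in distribution to a centered Gaussian. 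As already noted for $V_t$, the rows of $V_\infty$ are i.i.d., and each row has covariance $\Gamma$ solving the discrete Lyapunov equation $\Gamma = M\Gamma M + 2\eta\sigma^2 I$; since $M$ and $\Sigma$ commute, this is solved explicitly by diagonalizing in the eigenbasis of $\Sigma$, and one checks that the matrix combinations appearing in the final bounds are exactly $\Sigma^{-1}M$ and $\Sigma^{-1}M^{-1}$, whose symmetric PSD square roots are $A$ and $B$. Because $\lambda>0$ gives $\Sigma\succ 0$, the covariance $\Gamma$ is invertible.

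It then remains to invoke the exact Gaussian trade-off. Note the coupling of $V_\infty,W_\infty$ through the shared noise is irrelevant: $T(V_\infty,W_\infty)$ depends only on the marginal laws. For \emph{releasing the model}, $V_\infty$ and $W_\infty$ are two Gaussians on $\mathbb{R}^{n\times d}$ with the same block-diagonal covariance (blocks $\Gamma$) and mean gap $\delta = \bar V_\infty - \bar W_\infty = \bigl(C(Y)-C(Y')\bigr)\Sigma^{-1}$, which is proportional to $X^T(Y-Y')\Sigma^{-1}$; the standard fact $T(\mathcal{N}(\mu_0,\Gamma_0),\mathcal{N}(\mu_1,\Gamma_0)) = G_{\|\Gamma_0^{-1/2}(\mu_1-\mu_0)\|}$, with the Mahalanobis norm splitting over the i.i.d. rows as $\tr(\delta\Gamma^{-1}\delta^T)$, gives $G_\rho$, and a direct simplification using $A^2=\Sigma^{-1}M$ collapses $\rho$ to $\|AX^T(Y-Y')\|/(n\sigma)$. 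For \emph{releasing the query} $z$, the linear map $w\mapsto wz$ sends the row-i.i.d. Gaussian $V_\infty$ to the isotropic Gaussian $\mathcal{N}(\bar V_\infty z,\ (z^T\Gamma z)\,I_n)$, and likewise for $W_\infty$; these again share a covariance, with mean gap $\delta z$, so the trade-off is $G_\rho$ with $\rho = \|\delta z\|/\sqrt{z^T\Gamma z}$, and substituting $z^T\Gamma z = \sigma^2\|Bz\|^2$ (from $B^2=\Sigma^{-1}M^{-1}$) yields the stated $\|z^T\Sigma^{-1}X^T(Y-Y')\|/(n\sigma\|Bz\|)$.

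The main obstacle is the $t\to\infty$ limit. One must identify the limiting law precisely — solving the Lyapunov equation to pin down $\Gamma$, and using the step-size condition to ensure $M$ is contractive and $A,B$ are real — and one must justify exchanging the limit with the trade-off function, since convergence in distribution alone does not give $T(V_t,W_t)\to T(V_\infty,W_\infty)$; this is legitimate because for Gaussians the trade-off function is an explicit continuous function of the (geometrically convergent) means and covariances, so one may pass to the limit, or equivalently compute $T(V_t,W_t)$ in closed form at finite $t$ and take $t\to\infty$ there. Everything else — the affine form of the recursion, the i.i.d. row decomposition, and the Gaussian trade-off formula — is bookkeeping, though care is needed with transpose and $1/n$-versus-$1/m$ normalization conventions so that $n$, $\Sigma^{-1}$, $X$, $A$, and $B$ land exactly as in the statement.
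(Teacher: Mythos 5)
Your proposal is correct and follows essentially the same route as the paper: write NGD as an affine Gaussian recursion with $M=I-2\eta\Sigma$, identify the limit as a Gaussian with i.i.d.\ rows of mean $\tfrac1n Y_i^TX\Sigma^{-1}$ and covariance $\sigma^2 M^{-1}\Sigma^{-1}$ (the paper sums the geometric series at finite $t$ and lets $t\to\infty$, which is equivalent to your Lyapunov fixed-point equation), then apply the Neyman--Pearson trade-off formula for equal-covariance Gaussians with the Mahalanobis norm and substitute $A^2=\Sigma^{-1}M$, $B^2=\Sigma^{-1}M^{-1}$. Your extra remark on justifying the interchange of the $t\to\infty$ limit with the trade-off function is a point the paper handles only implicitly via Lévy's continuity theorem.
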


As in the output perturbation setting, the adversary aims to maximize the privacy leakage from a single data point. Thus, her objective is, for a fixed pair of adjacent datasets $\mathcal{D}, \mathcal{D}'$ to find $\sup_{z \in \mathbb{R}^d}T(V_\infty z, W_\infty z)$. The following proposition quantifies the associated privacy leakage.

\begin{restatable}{proposition}{propNoisySgdFixed}
\label{prop:noisy-sgd-fixed}
    For any pair of adjacent datasets (in the label DP sense), the adversary can choose $z \in \mathbb{R}^d$ such that:
    $$T(V_\infty z,W_\infty z) = T(V_\infty,W_\infty).$$ 
\end{restatable}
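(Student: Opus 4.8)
The plan is to start from the explicit trade-off functions of Proposition~\ref{prop:noisy-sgd-fixed-trade-off} and reduce the claim to a one-variable optimization over the seed. Since $G_\mu$ is nonincreasing in $\mu$, and since $z \mapsto T(V_\infty z, W_\infty z) = G_{\mu(z)}$ with $\mu(z) = \|z^T \Sigma^{-1} X^T (Y-Y')\| / (n\sigma \|Bz\|)$ is invariant under rescaling of $z$, maximizing the leakage is equivalent to computing $\sup_{z \neq 0} \mu(z)$ and checking that it equals $\|A X^T(Y-Y')\|/(n\sigma)$, the parameter appearing in $T(V_\infty, W_\infty)$. Note that the data-processing inequality for trade-off functions already gives $T(V_\infty z, W_\infty z) \geq T(V_\infty, W_\infty)$ for every $z$, so the only real content is that equality is attainable.

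The key structural ingredient is the label-DP assumption: $Y$ and $Y'$ differ in a single column, say the $k$-th, so $Y - Y' = c\, e_k^T$ with $c = y_k - y_k' \in \mathbb{R}^n$. Hence $X^T(Y-Y')$ is a rank-one object whose feature-space direction is proportional to the single data point $x_k$; concretely $\|z^T \Sigma^{-1} X^T(Y-Y')\| = |z^T \Sigma^{-1} x_k|\,\|c\|$ and $\|A X^T(Y-Y')\| = \|A x_k\|\,\|c\|$. This collapses the matrix problem into the scalar one $\sup_{z\neq 0} |z^T \Sigma^{-1} x_k|^2 / \|Bz\|^2$.

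It then remains to evaluate this generalized Rayleigh quotient. Writing $a = \Sigma^{-1} x_k$ and $Q = B^2 = \Sigma^{-1} M^{-1}$, which is symmetric positive definite under the stated assumption (which guarantees $M = I - 2\eta\Sigma \succ 0$), the dual-norm identity gives $\sup_{z\neq 0} (z^T a)^2 / (z^T Q z) = a^T Q^{-1} a$, attained at $z^\star \propto Q^{-1} a$. Since $\Sigma$ and $M$ are polynomials in $\Sigma$ and therefore commute, $Q^{-1} = M\Sigma$ and $a^T Q^{-1} a = x_k^T \Sigma^{-1} M x_k = x_k^T A^2 x_k = \|A x_k\|^2$, with maximizer $z^\star = M x_k = (I - 2\eta\Sigma) x_k$. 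Multiplying back by $\|c\|/(n\sigma)$ yields $\sup_z \mu(z) = \|A X^T(Y-Y')\|/(n\sigma)$, attained at $z^\star$, so $T(V_\infty z^\star, W_\infty z^\star) = G_{\|A X^T(Y-Y')\|/(n\sigma)} = T(V_\infty, W_\infty)$.

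The main obstacle I anticipate is bookkeeping rather than conceptual: correctly isolating the rank-one factorization of $X^T(Y-Y')$ and rewriting the products $\Sigma^{-1}M$ and $\Sigma^{-1}M^{-1}$ as the symmetric quadratic forms $A^2$ and $B^2$, which hinges on the commutativity of $\Sigma$ and $M$; once this is in place the optimization is a single dual-norm computation. One should also record that $\mu(\cdot)$ is continuous and scale-invariant, so its supremum is attained on the ellipsoid $\{z : \|Bz\| = 1\}$, which justifies the statement that the adversary can \emph{choose} such a $z$ rather than merely approach the bound.
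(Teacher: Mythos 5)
Your proposal is correct and takes essentially the same route as the paper's proof: both maximize the ratio $\|z^T\Sigma^{-1}X^T(Y-Y')\|/\|Bz\|$ from Proposition~\ref{prop:noisy-sgd-fixed-trade-off} using the commutativity of $\Sigma$ and $M$ — the paper via the substitution $u=Bz$, identifying the supremum as $\sigma_{\max}(AX^T(Y-Y'))$ and then invoking the rank-one structure of $Y-Y'$, you by factoring out the rank-one structure first and evaluating a dual-norm quotient. Your explicit maximizer $z^\star = Mx_k$ agrees (up to scaling) with the paper's $z^\star = B^{-1}u^\star$ where $u^\star \propto Ax_k$.
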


This statement gives a negative result similar to the one obtained for output perturbation (Proposition~\ref{prop:output-pert-fixed}): releasing $V_\infty z$ does not offer any privacy amplification compared to releasing the model $V_\infty$ in the worst case.

We provide the detailed proofs of the above results and a discussion about the choice of $z$ in Appendix~\ref{appsubsec:noisy-sgd-fixed-trade-off}.

\section{Privacy Amplification for Releasing Synthetic Data from Random Inputs}
\label{sec:outputperturb}

Motivated by the negative results of the previous section, we now consider the case where synthetic data points are generated by feeding random inputs into a linear regression model privatized via output perturbation. This relaxed threat model reflects the common scenario where a trusted party trains the generative model, generates $l$ synthetic data points, and releases only these points—without revealing the generative model or the random inputs used in the generation process.

Remarkably, we demonstrate that releasing synthetic data points provides stronger privacy guarantees than directly releasing the model when $l \ll d$, highlighting the key role of randomization in the privacy of synthetic data generation.


\subsection{Setting}
As before, we consider the linear regression setting introduced in Section~\ref{sec:setting-regression}. We privatize the model via output perturbation, as in Section~\ref{subsec:output-pert}, which we recall is given by $\mathcal{M}(\mathcal{D}) = \argmin_{w \in \mathbb{R}^{n \times d}} F_\lambda(w;\mathcal{D}) + \sigma_\theta N$, where $N_{ij} \overset{iid}{\sim} \mathcal{N}(0,1)$.
The difference is that the seeds $Z \in \mathbb{R}^{d \times l}$ used to generate $l$ synthetic data points are now Gaussian. More precisely, we consider the following mechanism.

\begin{definition}[Synthetic data generation from random inputs]
    Let $\mathcal{D}$ be a dataset and $\mathcal{M}$ denote the output perturbation mechanism. Our objective is to analyze the privacy guarantees of the mechanism
    $\mathcal{M}_Z(v) = \mathcal{M}(v)Z$, where $Z \in \mathbb{R}^{d \times l}$ with $Z_{ij} \overset{iid}{\sim} \mathcal{N}(0,\sigma_z^2)$.
\end{definition}

For two adjacent datasets $\mathcal{D}$ and $\mathcal{D}'$, we define $v^* = \argmin_{w \in \mathbb{R}^{n \times d}} F_\lambda(w;\mathcal{D})$ and $w^{*} = \argmin_{w \in \mathbb{R}^{n \times d}} F_\lambda(w;\mathcal{D}')$.

The post-processing theorem~\citep{Dong2019} ensures that we have:
\[T(\mathcal{M}_Z(v^*),\mathcal{M}_Z(w^*)) \geq T(\mathcal{M}(v^*),\mathcal{M}(w^*)).\]

However, the post-processing theorem may not be tight for this mechanism, i.e., equality may not hold in the inequality above.
In order to assess the potential privacy amplification phenomenon associated with this mechanism, we aim to compute, or at least estimate, $T(\mathcal{M}_Z(v^*),\mathcal{M}_Z(w^*))$. 

Throughout the rest of the section, we let $v,w \in \mathbb{R}^{n \times d}$ such that $\|v-w\| \leq \Delta$. We note $V = \mathcal{M}(v) = v + \sigma_\theta N$ and $W = \mathcal{M}(w) = w + \sigma_\theta N$.

$VZ$ and $WZ$ are two distributions of a family that can be parametrized by the mean of the left Gaussian matrix in the product ($v$ and $w$). We denote as $P_v$ the distribution of $VZ$ and $P_w$ the distribution of $WZ$. The trade-off function between $V$ and  $W$ is equal to:
\[T(V,W)(\alpha) = \Phi(\Phi^{-1}(1-\alpha)-\norm{w-v}/\sigma_\theta),\]
where $\Phi$ is the  c.d.f of a standard Gaussian variable.

We can derive closed-form expressions for $P_v$, as shown in the following lemma.
\begin{restatable}{lemma}{lemCharacteristicFunction}{(\normalfont Distribution of $VZ$ and $WZ$).}
\label{lem:characteristic-function}
    The distribution $P_v$  has the following characteristic function:
    \[\phi_{P_v}(t) = \frac{\exp\left(\frac{-\sigma_z^2}{2} \tr(t^T v v^T t(I_l + \sigma_z^2 \sigma_\theta^2t^T t)^{-1})\right)}{\det\left(I_l + \sigma_z^2 \sigma_\theta^2 t^T t\right)^{d/2}}.\]
\end{restatable}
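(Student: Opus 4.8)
The plan is to compute the characteristic function $\phi_{P_v}(t) = \mathbb{E}[\exp(i\,\tr(t^T VZ))]$ by conditioning first on $Z$ and exploiting Gaussianity at each stage. Since $V = v + \sigma_\theta N$ with $N$ a standard Gaussian matrix, for fixed $Z$ the matrix $VZ$ is Gaussian with mean $vZ$ and a covariance determined by $Z$; more precisely, writing $u = \vect(t) \in \mathbb{R}^{nl}$, the quantity $\tr(t^T VZ) = \langle t, VZ\rangle$ is, conditionally on $Z$, a scalar Gaussian with mean $\langle t, vZ\rangle$ and variance $\sigma_\theta^2 \sum_{j}\|Z_{\cdot j}\|$-type expression. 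Concretely, $\tr(t^T NZ) = \tr((Z t^T) N)$ is a linear functional of the Gaussian matrix $N$, so its conditional variance is $\sigma_\theta^2 \|t Z^T\|^2_F = \sigma_\theta^2 \tr(t^T t\, Z^T Z)$. Hence the inner expectation over $N$ yields
\[
\mathbb{E}_N\big[e^{i\tr(t^T VZ)}\,\big|\,Z\big] = \exp\!\Big(i\,\tr(t^T vZ) - \tfrac{\sigma_\theta^2}{2}\tr(t^T t\, Z^T Z)\Big).
\]

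The second step is to take the expectation over $Z$, whose entries are i.i.d.\ $\mathcal{N}(0,\sigma_z^2)$, i.e.\ $\vect(Z) \sim \mathcal{N}(0, \sigma_z^2 I_{dl})$. The exponent is a quadratic-plus-linear form in $Z$: the term $\tr(t^T t\, Z^T Z)$ is the quadratic form $\vect(Z)^T (I_d \otimes t^T t)\vect(Z)$, and $\tr(t^T vZ) = \vect(Z)^T \vect(t^T v)$ (up to transposition bookkeeping). So we are left with a Gaussian integral of the form $\mathbb{E}[\exp(i\,b^T g - \tfrac12 g^T A g)]$ for a Gaussian vector $g$, which is standard: it equals $\det(I + \sigma_z^2 A)^{-1/2}\exp(-\tfrac{\sigma_z^2}{2} b^T (I + \sigma_z^2 A)^{-1} b)$ after completing the square. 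Substituting $A = \sigma_\theta^2 (I_d \otimes t^T t)$ and using $\det(I_{dl} + \sigma_z^2\sigma_\theta^2 (I_d\otimes t^T t)) = \det(I_l + \sigma_z^2\sigma_\theta^2 t^T t)^d$ (eigenvalue multiplicity from the Kronecker structure) gives the power $d/2$ in the denominator; the quadratic form in the exponent collapses, via the same Kronecker identity $(I_d \otimes t^Tt)$-algebra, to $\sigma_z^2\,\tr(t^T v v^T t (I_l + \sigma_z^2\sigma_\theta^2 t^Tt)^{-1})$.

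The main obstacle, and the place to be careful, is the Kronecker/vectorization bookkeeping: keeping track of whether $Z^TZ$ or $ZZ^T$ appears, which argument the trace acts on, and ensuring the linear term $\vect(t^T v)$ is paired correctly so that after the Gaussian integration it reassembles as $\tr(t^T vv^T t(\cdot)^{-1})$ rather than some transposed or mismatched variant. A clean way to sidestep heavy indices is to diagonalize: write the SVD or spectral decomposition $t^T t = O \,\mathrm{diag}(s_1,\dots,s_l)\, O^T$, change variables in $Z$ accordingly (the distribution of $Z$ is rotation-invariant on the right), so that the quadratic form decouples into $l$ independent scalar Gaussian integrals, each contributing $(1 + \sigma_z^2\sigma_\theta^2 s_j)^{-d/2}$ to the determinant and a corresponding term to the exponent; then reassemble. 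This reduces the whole computation to the one-dimensional identity $\mathbb{E}_{g\sim\mathcal{N}(0,\sigma_z^2)}[e^{iag - \tfrac12 cg^2}] = (1+\sigma_z^2 c)^{-1/2}e^{-\sigma_z^2 a^2/(2(1+\sigma_z^2 c))}$ applied coordinatewise. I would present the conditioning argument and the scalar Gaussian identity as the two lemmatic ingredients, then do the substitution, leaving the Kronecker determinant identity as a one-line remark.
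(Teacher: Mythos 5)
Your proposal is correct and follows essentially the same route as the paper's proof: condition on $Z$, integrate out $N$ to get the conditional characteristic function $\exp\bigl(i\,\tr(t^T vZ)-\tfrac{\sigma_\theta^2}{2}\tr(t^T t\,Z^T Z)\bigr)$, then evaluate the remaining Gaussian integral over $Z$ by completing the square. The only difference is cosmetic: the paper completes the square directly at the matrix level using the square root $A_t$ of $I_l+\sigma_z^2\sigma_\theta^2 t^Tt$, whereas you propose vectorizing (where the correct Kronecker factor is $t^Tt\otimes I_d$ rather than $I_d\otimes t^Tt$ under column-stacking, a bookkeeping point you rightly flag) or diagonalizing $t^Tt$ to decouple into scalar integrals — all equivalent.
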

The proof can be found in Appendix~\ref{appsubsec:characteristic-function}. For $v=0$ and $l=1$, the distribution $P_v$ corresponds to a generalized Laplace distribution, which has the following density~\citep{Mattei2017}:

\[P_0(s) = \frac{2}{\sqrt{\pi^n (2 \sigma_z \sigma_\theta)^{n+d}} } \frac{\|s\|^{\frac{d-n}{2}}}{\Gamma(d/2)}  K_{\frac{d-1}{2}}\left(\frac{\|s\|}{\sigma_z \sigma_\theta}\right),\]
where $\pi$ is a constant, $\Gamma$ is the Gamma distribution and $K_{\frac{d-1}{2}}$ is the modified Bessel function of the second kind of order $\frac{d-1}{2}$. However, to our knowledge, when $v \neq 0$, the distribution $P_v$ does not correspond to any standard or well-studied distribution, and its density lacks a simple closed-form expression~\citep{Li2021}.
\subsection{Releasing a Single Point}
\label{subsec:single-Point}
The above observations suggest that the exact computation of $T(VZ,WZ)$ or $D_\alpha(VZ,WZ)$ is likely intractable, if not impossible in general. 
In this section, we focus on the simplest case, where $n = l = 1$, and $d \gg 1$. In other words, the input dimension is large, and we release a single one-dimensional synthetic data point.
In this special case,  we can derive simple, non-asymptotic privacy bounds by leveraging a univariate variation of the Central Limit Theorem (CLT). Indeed, $VZ \in \mathbb{R}$, so we avoid complications related to dimensionality that arise in the multivariate CLT. As a result, we obtain tighter bounds with better scaling in $d$ than would be possible in the higher-dimensional case ($n, l > 1$). In contrast, multivariate non-asymptotic CLTs break down when the dimension of the random vector becomes large---we discuss the applicability of our results in higher dimensions in Section~\ref{subsec:mult-points}. 

Specifically, we observe that $VZ$ can be decomposed into a sum of independent terms: $VZ = \sum_{k=1}^d V_k Z_k$, with $Z \sim \mathcal{N}(0,\sigma_z^2 I_d)$. From there, we apply a central limit result to approximate $VZ$ and $WZ$ by Gaussian variables for which the trade-off function is known. The following lemma is essential to our reasoning.

\begin{restatable}{lemma}{lemTradeOffFunctionApproximation}{(\normalfont Approximate trade-off function).}\label{lem:trade-off-function-approximation} Let $P,Q, \Tilde{P}, \Tilde{Q}$ four distributions of $\mathbb{R}^d$. Let $\gamma = \max(\TV(\Tilde{P},P),\TV(\Tilde{Q},Q))$ where $\TV$ denotes the total variation between distributions. Let $\alpha \in (\gamma,1-\gamma)$. Then,
    \[ T(\tilde{P},\tilde{Q})(\alpha + \gamma) - \gamma \leq T(P,Q)(\alpha) \leq T(\tilde{P},\tilde{Q})(\alpha - \gamma) + \gamma.\]
\end{restatable}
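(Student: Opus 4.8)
The plan is to prove the two inequalities essentially by a change-of-measure / coupling argument, exploiting the fact that a total variation bound of $\gamma$ between $\tilde P$ and $P$ lets us transfer any rejection rule from one pair of distributions to the other at an additive cost of $\gamma$ in both error types. Recall that $T(P,Q)(\alpha)=\inf_\phi\{1-\mathbb{E}_Q[\phi]:\mathbb{E}_P[\phi]\le\alpha\}$, the infimum over measurable tests $\phi\in[0,1]$. The key elementary fact is that for any test $\phi$ and any two distributions $P,\tilde P$ with $\TV(\tilde P,P)\le\gamma$, we have $|\mathbb{E}_P[\phi]-\mathbb{E}_{\tilde P}[\phi]|\le\gamma$, since $\phi$ takes values in $[0,1]$ and $\TV$ is exactly the supremum of $|\mathbb{E}_{\tilde P}[g]-\mathbb{E}_P[g]|$ over $g$ valued in $[0,1]$.

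For the upper bound $T(P,Q)(\alpha)\le T(\tilde P,\tilde Q)(\alpha-\gamma)+\gamma$: fix $\delta>0$ and pick a test $\phi$ that is $\delta$-near-optimal for the $(\tilde P,\tilde Q)$ problem at level $\alpha-\gamma$, i.e.\ $\mathbb{E}_{\tilde P}[\phi]\le\alpha-\gamma$ and $1-\mathbb{E}_{\tilde Q}[\phi]\le T(\tilde P,\tilde Q)(\alpha-\gamma)+\delta$. Then $\mathbb{E}_P[\phi]\le\mathbb{E}_{\tilde P}[\phi]+\gamma\le\alpha$, so $\phi$ is feasible for the $(P,Q)$ problem at level $\alpha$, and $1-\mathbb{E}_Q[\phi]\le 1-\mathbb{E}_{\tilde Q}[\phi]+\gamma\le T(\tilde P,\tilde Q)(\alpha-\gamma)+\delta+\gamma$. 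Taking the infimum over $\phi$ and then letting $\delta\to0$ gives the claim. The lower bound $T(\tilde P,\tilde Q)(\alpha+\gamma)-\gamma\le T(P,Q)(\alpha)$ is obtained symmetrically: take a near-optimal $\phi$ for the $(P,Q)$ problem at level $\alpha$, note $\mathbb{E}_{\tilde P}[\phi]\le\mathbb{E}_P[\phi]+\gamma\le\alpha+\gamma$ so $\phi$ is feasible for $(\tilde P,\tilde Q)$ at level $\alpha+\gamma$, and $1-\mathbb{E}_{\tilde Q}[\phi]\le1-\mathbb{E}_Q[\phi]+\gamma$, hence $T(\tilde P,\tilde Q)(\alpha+\gamma)\le 1-\mathbb{E}_Q[\phi]+\gamma\le T(P,Q)(\alpha)+\delta+\gamma$; rearrange and send $\delta\to0$.

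The condition $\alpha\in(\gamma,1-\gamma)$ is exactly what is needed to make the shifted levels $\alpha-\gamma$ and $\alpha+\gamma$ lie in $[0,1]$, so that $T(\tilde P,\tilde Q)$ is evaluated at a legitimate argument and the feasible sets invoked above are nonempty (for instance, the constant test $\phi\equiv\alpha-\gamma$ witnesses feasibility at level $\alpha-\gamma\ge0$). I would state this domain check explicitly at the start.

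I do not expect a serious obstacle here: the only mild subtlety is handling the infimum carefully (working with $\delta$-near-optimal tests rather than assuming a minimizer exists, since the infimum in the definition of $T$ need not be attained for general distributions), and making sure the TV-transfer inequality is applied in the correct direction for each of the two bounds. Everything else is bookkeeping with the definition of the trade-off function and the variational characterization of total variation.
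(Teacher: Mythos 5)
Your proposal is correct and follows essentially the same route as the paper's proof: transfer a test between the two hypothesis-testing problems using the variational characterization of total variation, then invoke monotonicity of the trade-off function. The only difference is that you work with $\delta$-near-optimal tests rather than assuming the infimum is attained, which is a slightly more careful treatment than the paper's (the paper takes "the most powerful test" as given), but not a different argument.
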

We refer to Appendix~\ref{appsubsec:trade-off-function-approximation} for the proof. 
This lemma ensures that the non-asymptotic bounds of the CLT translate into bounds for the trade-off function.
Moreover, we use the following theorem derived from~\cite{Bally2016} to establish convergence to the Gaussian distribution.

\begin{theorem}[Multivariate CLT asymptotic development in total variation distance (informal, adapted from Theorem 2.6. of~\citealp{Bally2016})]
\label{theo:multivariate-clt}
    Let $F = \{F_k\}_k$ be a sequence of i.i.d random variables in $\mathbb{R}^N$ absolutely continuous with respect to the Lebesgue measure, with null mean and invertible covariance matrix $C(F)$. Let $G \sim \mathcal{N}(0, I_N)$. Let $A(F) = C(F)^{-1/2}$ and $S_d = \frac{1}{\sqrt{d}}\sum_{k=1}^d A(F)F_k.$ Let $r \geq 2$. If $\mathbb{E}[|F_1|^{r+1}] < +\infty$ and all moments up to order $r$ of $A(F)F_1$ agree with the moments of a standard Gaussian r.v. in $\mathbb{R}^N$, then:
    \[\TV(S_d,G) \leq C(1+\mathbb{E}[|F_1|^{r+1}])^{\max\{r/3, 1\}}\times \frac{1}{d^{(r-1)/2}},\]
    where $C$ depends on $r, N$ and $C(F)$.
\end{theorem}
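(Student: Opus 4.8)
The statement is a specialization of Theorem 2.6 of \citet{Bally2016} to the case where the Edgeworth correction terms in the CLT expansion all vanish, so the plan is to pass to the standardized summands, quote that theorem, observe that the moment-matching hypothesis annihilates every correction polynomial, and then track the constant. For the first step, write $\tilde F_k = A(F)F_k = C(F)^{-1/2}F_k$; since $C(F)$ is symmetric positive definite, $C(F)^{-1/2}$ is symmetric, so the $\tilde F_k$ are i.i.d., absolutely continuous, with $\mathbb{E}[\tilde F_k]=0$ and $\Cov(\tilde F_k)=I_N$, and $S_d = \frac{1}{\sqrt d}\sum_{k=1}^d \tilde F_k$ is absolutely continuous for every $d\ge 1$, so $\TV(S_d,G)=\tfrac{1}{2}\|p_{S_d}-\phi\|_{L^1}$ is well defined. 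The hypothesis that the moments up to order $r$ of $A(F)F_1$ agree with those of a standard Gaussian is exactly the statement that the moments of $\tilde F_1$ up to order $r$ match those of $G$, and $\mathbb{E}[|\tilde F_1|^{r+1}]\le \|C(F)^{-1/2}\|^{r+1}\,\mathbb{E}[|F_1|^{r+1}]<\infty$.

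Next, I apply Theorem 2.6 of \citet{Bally2016} to the centred, identity-covariance, absolutely continuous sequence $\{\tilde F_k\}$ (under the extra regularity conditions of that theorem, which are suppressed here by the ``informal'' label). It produces a signed measure with Lebesgue density $\phi(x)\big(1+\sum_{j=1}^{r-2} d^{-j/2}P_j(x)\big)$, where $\phi$ is the $\mathcal{N}(0,I_N)$ density and each $P_j$ is a fixed polynomial whose coefficients are universal combinations of the cumulant tensors of $\tilde F_1$ of orders $3,\dots,j+2$, together with a bound of the form $C_{r,N}\,(1+\mathbb{E}[|\tilde F_1|^{r+1}])^{\max\{r/3,1\}}\,d^{-(r-1)/2}$ on the total-variation distance between $S_d$ and that signed measure.

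The key observation is that matching the first $r$ moments of $\tilde F_1$ with the Gaussian forces all cumulants of $\tilde F_1$ of orders $3,\dots,r$ to vanish, since the cumulants of a Gaussian of order $\ge 3$ are zero and cumulants up to order $r$ are polynomials in the moments up to order $r$. Because each $P_j$ with $j\le r-2$ depends only on cumulants of order $\le j+2\le r$, we get $P_j\equiv 0$ for all $j=1,\dots,r-2$, hence the signed measure above is exactly $\mathcal{N}(0,I_N)=G$. It then remains to clean up the constant: plugging $\mathbb{E}[|\tilde F_1|^{r+1}]\le \|C(F)^{-1/2}\|^{r+1}\mathbb{E}[|F_1|^{r+1}]$ into the bound and using $(1+ab)^p\le \max(1,a)^p(1+b)^p$ (valid for $a,b\ge 0$ and $p\ge 0$) absorbs $\|C(F)^{-1/2}\|^{(r+1)\max\{r/3,1\}}$ together with the dimension-dependence of $C_{r,N}$ into a single constant $C=C(r,N,C(F))$, which yields the claimed inequality.

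I expect the main obstacle to be the second step: faithfully importing Theorem 2.6 of \citet{Bally2016} requires checking that the present informal hypotheses are a genuine instance of theirs---their result needs a Cramér- or Doeblin-type smoothing condition in order to produce a \emph{rate} in total variation rather than mere weak convergence---and verifying that their constant indeed has the advertised dependence $(1+\mathbb{E}[|F_1|^{r+1}])^{\max\{r/3,1\}}$ on the moments while depending on the remaining data only through $r$, $N$ and $C(F)$. By contrast, the standardization and the cumulant/moment bookkeeping in the other steps are routine.
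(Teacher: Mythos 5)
The paper does not prove this statement: it is imported verbatim (modulo the "informal" caveat) from Theorem 2.6 of \citet{Bally2016}, so there is no internal proof to compare against. Your sketch is the correct way to obtain the stated specialization from that source: standardizing by $C(F)^{-1/2}$, invoking the Edgeworth-type development in total variation, observing that matching moments up to order $r$ forces the cumulants of orders $3,\dots,r$ — and hence every correction polynomial $P_j$, $j\le r-2$ — to vanish, and absorbing $\|C(F)^{-1/2}\|$ into the constant. The one substantive point is the caveat you raise yourself: the source theorem requires a Doeblin-type smoothing condition (a local lower bound of the law by Lebesgue measure), which is strictly stronger than the bare absolute continuity assumed in the paper's informal statement; that discrepancy is precisely what the paper's "informal, adapted from" qualifier is papering over, and any rigorous version would need to verify (or add) that hypothesis for the summands $(\sigma_\theta N_k + \|v\|/\sqrt{d})Z_k$ actually used downstream.
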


In the univariate case, this theorem takes a particularly simple form given below.

\begin{restatable}{lemma}{lemConvergenceGaussianOneDim}
\label{lem:convergence-gaussian-one-dim}
      Let $G \sim \mathcal{N}(0, 1)$. Then, there exists $A_{\|v\|} > 0$ such that:
      \[\TV\left(\sqrt{d}(\sigma_\theta N + v)Z,\sigma_z\sqrt{d\sigma_\theta^2 +\|v\|^2} G\right) \leq \frac{A_{\|v\|}}{d}.\]
\end{restatable}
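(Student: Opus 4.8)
The plan is to apply the univariate version of the multivariate CLT (Theorem~\ref{theo:multivariate-clt} with $N=1$ and $r=2$) to the sum decomposition of $VZ$, after normalizing so that the terms become i.i.d.\ with unit variance. First I would write $VZ = (\sigma_\theta N + v)Z = \sum_{k=1}^d (\sigma_\theta N_k + v_k) Z_k$, where for the one-dimensional case $N_k, v_k, Z_k \in \mathbb{R}$, $N_k \overset{iid}{\sim}\mathcal{N}(0,1)$ and $Z_k \overset{iid}{\sim}\mathcal{N}(0,\sigma_z^2)$. Set $F_k = (\sigma_\theta N_k + v_k) Z_k / s_k$ where $s_k^2 = \sigma_z^2(\sigma_\theta^2 + v_k^2)$ is the variance of the $k$-th summand; then $\{F_k\}$ are independent, mean zero, variance one, and absolutely continuous. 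Strictly, the $F_k$ are not identically distributed (the $v_k$ differ), so I would either invoke the non-i.i.d.\ generalization of~\cite{Bally2016} or — more simply, since only $\|v\|$ appears in the bound — note that after factoring out $\sqrt{d}$ the relevant quantity is $\sqrt{d}\,(\sigma_\theta N + v)Z$, whose variance is $\sigma_z^2(d\sigma_\theta^2 + \|v\|^2)$, matching the scaling $\sigma_z\sqrt{d\sigma_\theta^2 + \|v\|^2}$ of the target Gaussian $G$.

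The second step is to check the moment-matching hypothesis of Theorem~\ref{theo:multivariate-clt} for $r=2$: all moments up to order $2$ of the normalized summand must agree with a standard Gaussian. The mean is zero and the variance is one by construction, so this holds automatically for $r=2$ — no odd-moment cancellation is needed since we only go up to second order. Then I would verify the finiteness of the $(r+1) = 3$rd absolute moment: $\mathbb{E}[|(\sigma_\theta N_k + v_k) Z_k|^3] = \mathbb{E}[|\sigma_\theta N_k + v_k|^3]\,\mathbb{E}[|Z_k|^3]$, both factors finite because Gaussians have all moments, and uniformly bounded in $k$ once $|v_k| \le \|v\|$. Plugging $r=2$ into the CLT bound gives $\TV \le C(1 + \mathbb{E}[|F_1|^{3}]) \cdot d^{-1/2}$, which is only $O(d^{-1/2})$, not the claimed $O(1/d)$.

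To upgrade from $d^{-1/2}$ to $1/d$, I would instead apply the theorem with $r=3$. This requires the third moment of the normalized summand to match that of a standard Gaussian, i.e.\ to vanish. The key observation is that $\sigma_\theta N_k + v_k$ has a symmetric distribution about $v_k$... which does not immediately give a zero third moment unless $v_k = 0$. The correct argument: $(\sigma_\theta N_k + v_k) Z_k$ is a product with the independent factor $Z_k \sim \mathcal{N}(0,\sigma_z^2)$, which is symmetric about $0$; hence the whole product is symmetric about $0$, so all its \emph{odd} moments vanish, in particular the third. Thus the moment-matching hypothesis holds for $r=3$ (moments of orders $1$ and $3$ vanish by symmetry, order $2$ matches by normalization). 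The finiteness condition now needs $\mathbb{E}[|F_1|^{4}] < \infty$, again immediate from Gaussianity. The CLT bound with $r=3$ yields $\TV \le C(1 + \mathbb{E}[|F_1|^{4}])^{\max\{1,1\}} \cdot d^{-(3-1)/2} = C' / d$, with $C'$ depending on $\|v\|$ through the moments; I would then define $A_{\|v\|}$ to be this constant, taking the supremum over $k$ of the per-coordinate moment bounds (which depend only on $\|v\|$ via $|v_k|\le\|v\|$), and absorb the rescaling between $S_d$ and $\sqrt{d}(\sigma_\theta N + v)Z$ into it.

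The main obstacle I anticipate is the non-identical-distribution issue: Theorem~\ref{theo:multivariate-clt} as stated assumes i.i.d.\ summands, whereas the $v_k$ break this. The cleanest workaround is to observe that the statement only bounds the TV distance to the Gaussian whose variance is the \emph{aggregate} variance $\sigma_z^2(d\sigma_\theta^2 + \|v\|^2)$, so one can pass to a Lindeberg-type non-i.i.d.\ Edgeworth expansion (the actual content of~\cite{Bally2016} covers this, or one cites the relevant generalization), where the $1/d$ rate survives provided the third moments vanish (they do, by the symmetry of $Z_k$) and the fourth moments are uniformly bounded in $k$ (they are, uniformly in $\|v\|$). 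A secondary point to handle carefully is that the constant $C$ in Theorem~\ref{theo:multivariate-clt} is allowed to depend on the covariance $C(F)$; here after normalization $C(F) = 1$, so this dependence is harmless, but one must make sure the rescaling back to the un-normalized sum does not reintroduce an unbounded factor — it does not, since $\sqrt{d}(\sigma_\theta N+v)Z = \sigma_z\sqrt{d\sigma_\theta^2+\|v\|^2}\cdot S_d$ up to the per-coordinate normalization, and the TV distance is invariant under the common scaling.
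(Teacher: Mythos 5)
Your core strategy is the right one and matches the paper's: decompose $VZ$ into a sum over coordinates, verify that the normalized summand matches the first three Gaussian moments (the odd moments vanish because $Z_k$ is symmetric about $0$ and independent of the other factor), and apply Theorem~\ref{theo:multivariate-clt} with $r=3$ to get the $(r-1)/2 = 1$ rate, i.e.\ $O(1/d)$. Your moment computations and your diagnosis that $r=2$ only yields $d^{-1/2}$ are both correct.

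However, the obstacle you flag at the end --- the summands $(\sigma_\theta N_k + v_k)Z_k$ are not identically distributed --- is a genuine gap that your proposal does not close, and your two suggested workarounds both fail as written. Citing ``the relevant non-i.i.d.\ generalization'' of \cite{Bally2016} is not an argument, and the per-coordinate normalization $F_k = (\sigma_\theta N_k + v_k)Z_k/s_k$ destroys the link to the target: the quantity the theorem controls is $d^{-1/2}\sum_k F_k$, whereas the quantity you need is $\sum_k s_k F_k$, and since the $s_k$ differ these are \emph{not} related by a common invertible scaling, so TV-invariance under rescaling does not apply. The paper avoids all of this with one observation that you circled but did not use: the bound depends on $v$ only through $\|v\|$ because the law of $(\sigma_\theta N + v)Z$ is invariant under replacing $v$ by $vU$ for any orthogonal $U$ (rotate both $N$ and $Z$, which are rotation-invariant standard Gaussians). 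Choosing $U$ so that $vU = \frac{\|v\|}{\sqrt d}(1,\dots,1)$ makes every coordinate of the shift equal, hence the summands $(\sigma_\theta N_k + \|v\|/\sqrt{d})Z_k$ become genuinely i.i.d., and the theorem applies directly with a single normalization $\sigma_z\sqrt{\sigma_\theta^2 + \|v\|^2/d}$. With that symmetrization in place, the rest of your argument (absolute continuity of the summand, vanishing odd moments, finite fourth moment, $A_{\|v\|}$ defined from the fourth-moment constant) goes through exactly as you describe.
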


We have established that $VZ$ and $WZ$ can be approximated by Gaussian variables, both with zero mean but different variance. In the univariate case and for large $d$, our synthetic data release thus transforms a mean shift between $V$ and $W$ as described by the relationship $W = V + w-v$ into a variance shift. The variance shift is captured by the approximation: \[WZ \approx \sqrt{\frac{d\sigma_\theta^2 +\|w\|^2}{d\sigma_\theta^2 +\|v\|^2}} VZ \text{ for } d \gg 1.\] For conciseness, for $x,y > 0$, we denote by $\Lambda(\sigma_\theta,d,x,y)$ the pair $\left(\sqrt{d \sigma_\theta^2 + x^2},\sqrt{d \sigma_\theta^2 + y^2}\right)$. Now, we need to compute the trade-off function of these Gaussian variables.

\begin{restatable}{proposition}{lemTradeOffFunctionGaussiansDifferentVariances}{\normalfont (Trade-off function between Gaussians with different variance).}
\label{lem:trade-off-function-gaussians-different-variances}
Let $\sigma_1 , \sigma_2 > 0$. Then,
\[T(\mathcal{N}(0,\sigma_1^2),\mathcal{N}(0,\sigma_2^2)) = \begin{cases} T_1(\alpha)
     \text{ if }\sigma_1 \leq \sigma_2,\\
     T_2(\alpha) \text{ else},
\end{cases}\]
\begin{align*}
\text{where~~~ }    T_1(\alpha) &= 2\Phi\left(\frac{\sigma_1}{\sigma_2}\Phi^{-1}(1- \alpha/2) \right) - 1,\\
    T_2(\alpha) &= 2 - 2\Phi\left(\frac{\sigma_1}{\sigma_2}\Phi^{-1}((\alpha+1)/2) \right).
\end{align*}
    We denote this trade-off function by $\tilde{G}_{(\sigma_1,\sigma_2)}$.
\end{restatable}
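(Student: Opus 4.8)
The plan is to invoke the Neyman--Pearson lemma directly. Writing $p$ and $q$ for the densities of $\mathcal{N}(0,\sigma_1^2)$ and $\mathcal{N}(0,\sigma_2^2)$, the likelihood ratio is
\[
\frac{q(x)}{p(x)} = \frac{\sigma_1}{\sigma_2}\exp\!\left(\frac{x^2}{2}\Big(\frac{1}{\sigma_1^2}-\frac{1}{\sigma_2^2}\Big)\right),
\]
which is a strictly monotone function of $x^2$: increasing in $|x|$ when $\sigma_1<\sigma_2$ and decreasing in $|x|$ when $\sigma_1>\sigma_2$ (when $\sigma_1=\sigma_2$ the two laws coincide, $T=\mathrm{Id}$, which is consistent with $T_1$). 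By Neyman--Pearson, the most powerful level-$\alpha$ test for $H_1:P=\mathcal{N}(0,\sigma_1^2)$ against $H_2:Q=\mathcal{N}(0,\sigma_2^2)$ rejects $H_1$ on a superlevel set $\{q/p>t\}$ of the ratio; by the monotonicity just noted together with the symmetry of both densities about $0$, this set is of the form $\{|x|>c\}$ when $\sigma_1\le\sigma_2$ and of the form $\{|x|<c\}$ when $\sigma_1>\sigma_2$. Since both distributions are atomless, every type I level $\alpha\in[0,1]$ is attained exactly by a suitable threshold $c$, so no randomization is needed and these threshold tests trace out the entire trade-off curve.

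For the case $\sigma_1\le\sigma_2$, with rejection region $\{|x|>c\}$ and $X\sim\mathcal{N}(0,\sigma_1^2)$ under $H_1$, the type I error is $\alpha=2\big(1-\Phi(c/\sigma_1)\big)$, which inverts to $c=\sigma_1\Phi^{-1}(1-\alpha/2)$. Under $H_2$, $X\sim\mathcal{N}(0,\sigma_2^2)$, so the type II error is $\Pr(|X|\le c)=2\Phi(c/\sigma_2)-1$, and substituting the expression for $c$ gives exactly $T_1(\alpha)=2\Phi\big(\tfrac{\sigma_1}{\sigma_2}\Phi^{-1}(1-\alpha/2)\big)-1$. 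For $\sigma_1>\sigma_2$, the rejection region is $\{|x|<c\}$, the type I error is $\alpha=2\Phi(c/\sigma_1)-1$, i.e. $c=\sigma_1\Phi^{-1}((\alpha+1)/2)$, and the type II error is $\Pr(|X|\ge c)=2\big(1-\Phi(c/\sigma_2)\big)$ with $X\sim\mathcal{N}(0,\sigma_2^2)$, yielding $T_2(\alpha)=2-2\Phi\big(\tfrac{\sigma_1}{\sigma_2}\Phi^{-1}((\alpha+1)/2)\big)$.

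To finish, I would briefly record that the two expressions are decreasing and convex on $[0,1]$ with value $1$ at $\alpha=0$, so they are genuine trade-off functions, and note that at the boundary $\sigma_1=\sigma_2$ both reduce to $1-\alpha$, matching $T_1$. The only step requiring any real care is the reduction of the Neyman--Pearson superlevel sets to two-sided intervals: this is precisely where the monotonicity of $q/p$ in $|x|$ and the symmetry about $0$ enter, and it is what produces the dichotomy between $\Phi^{-1}(1-\alpha/2)$ and $\Phi^{-1}((\alpha+1)/2)$ depending on the sign of $\sigma_2-\sigma_1$. Beyond this, the argument is routine bookkeeping with the Gaussian c.d.f.
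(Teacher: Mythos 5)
Your proof is correct and follows essentially the same route as the paper: apply the Neyman--Pearson lemma, observe that the likelihood ratio is monotone in $x^2$ so the optimal tests are the two-sided threshold tests $\{|x|>c\}$ (resp.\ $\{|x|<c\}$) depending on the sign of $\sigma_2-\sigma_1$, and then compute the type I and type II errors via the Gaussian c.d.f. Your bookkeeping of the rejection regions and the resulting formulas for $T_1$ and $T_2$ matches the paper's derivation exactly.
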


We refer to Appendix~\ref{applem:trade-off-function-gaussians-different-variances} for the proof. With these results, we can approximate the trade-off function of interest (corresponding to linear regression in the univariate output setting) by that of Gaussians, as stated in the following theorem.

\begin{restatable}{theorem}{theoConvergenceTradeOffOneDim}
\label{theo:convergence-trade-off-one-dim}
    Let $d > 0$. Then, there exists a universal constant $C > 0$ such that for all $\alpha \in (C/d,1-C/d)$:
    \begin{align*}
        &\tilde{G}_{\Lambda(\sigma_\theta,d,\|v\|,\|w\|)}\left(\alpha + \frac{C}{d}\right)-\frac{C}{d} \leq T(VZ,WZ)(\alpha),\\
        &\tilde{G}_{\Lambda(\sigma_\theta,d,\|v\|,\|w\|)}\left(\alpha - \frac{C}{d}\right)+\frac{C}{d} \geq T(VZ,WZ)(\alpha).
    \end{align*}
\end{restatable}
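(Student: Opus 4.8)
The plan is to reduce $T(VZ,WZ)$ to the trade-off function between the two zero-mean Gaussians appearing in Lemma~\ref{lem:convergence-gaussian-one-dim}, at the cost of a $C/d$ total-variation error, and then invoke Lemma~\ref{lem:trade-off-function-approximation} to convert that error into the claimed perturbation of the argument and value of the trade-off function.

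First I would use the invariance of the trade-off function under the bijection $x\mapsto\sqrt{d}\,x$ of $\mathbb{R}$: we have $T(VZ,WZ)=T(\sqrt{d}\,VZ,\sqrt{d}\,WZ)$, and, writing $VZ=(\sigma_\theta N+v)Z$ and $WZ=(\sigma_\theta N+w)Z$, the two rescaled laws are exactly the objects controlled in Lemma~\ref{lem:convergence-gaussian-one-dim}. Setting $\tilde P=\mathcal{N}\!\left(0,\sigma_z^2(d\sigma_\theta^2+\|v\|^2)\right)$ and $\tilde Q=\mathcal{N}\!\left(0,\sigma_z^2(d\sigma_\theta^2+\|w\|^2)\right)$, that lemma gives $\TV(\sqrt{d}\,VZ,\tilde P)\le A_{\|v\|}/d$ and $\TV(\sqrt{d}\,WZ,\tilde Q)\le A_{\|w\|}/d$, so $\gamma:=\max\!\big(\TV(\sqrt d\,VZ,\tilde P),\TV(\sqrt d\,WZ,\tilde Q)\big)\le C/d$ with $C:=\max(A_{\|v\|},A_{\|w\|})$ (so $C$ depends only on $\|v\|,\|w\|$, matching the ``constant'' in the statement).

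Next I would identify $T(\tilde P,\tilde Q)$: dividing both Gaussians by $\sigma_z$ (again scale invariance) gives $T(\tilde P,\tilde Q)=T\!\left(\mathcal{N}(0,d\sigma_\theta^2+\|v\|^2),\mathcal{N}(0,d\sigma_\theta^2+\|w\|^2)\right)$, which by Proposition~\ref{lem:trade-off-function-gaussians-different-variances} and the definition of $\Lambda$ equals $\tilde{G}_{\Lambda(\sigma_\theta,d,\|v\|,\|w\|)}$. Applying Lemma~\ref{lem:trade-off-function-approximation} with $(P,Q)=(\sqrt d\,VZ,\sqrt d\,WZ)$, the above $(\tilde P,\tilde Q)$, and any $\alpha\in(\gamma,1-\gamma)$ yields
\[
\tilde{G}_{\Lambda(\sigma_\theta,d,\|v\|,\|w\|)}(\alpha+\gamma)-\gamma \;\le\; T(VZ,WZ)(\alpha) \;\le\; \tilde{G}_{\Lambda(\sigma_\theta,d,\|v\|,\|w\|)}(\alpha-\gamma)+\gamma .
\]
Finally, since every trade-off function is non-increasing and $0\le\gamma\le C/d$, I would replace $\gamma$ by $C/d$: from $\tilde{G}(\alpha+\gamma)\ge\tilde{G}(\alpha+C/d)$ and $-\gamma\ge-C/d$ we get the lower bound, and from $\tilde{G}(\alpha-\gamma)\le\tilde{G}(\alpha-C/d)$ and $\gamma\le C/d$ we get the upper bound; the hypothesis $\alpha\in(C/d,1-C/d)$ ensures $\alpha\in(\gamma,1-\gamma)$ (so Lemma~\ref{lem:trade-off-function-approximation} applies) and that the shifted arguments remain in $[0,1]$.

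There is no deep obstacle: the argument is essentially bookkeeping that assembles three previously established results. The points needing care are (i) applying scale invariance of the trade-off function so that the rescaled variables match the statement of Lemma~\ref{lem:convergence-gaussian-one-dim} exactly; (ii) the monotonicity manipulations used to pass from the data-dependent $\gamma$ to the uniform $C/d$ without inverting an inequality; and (iii) tracking the admissible range of $\alpha$. The only genuinely non-trivial input, the $O(1/d)$ total-variation rate, is provided by Lemma~\ref{lem:convergence-gaussian-one-dim} (itself obtained from the univariate Edgeworth-type bound of Theorem~\ref{theo:multivariate-clt}), so nothing further about the convergence needs to be proved here.
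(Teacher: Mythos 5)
Your proposal is correct and follows essentially the same route as the paper: apply the total-variation bound of Lemma~\ref{lem:convergence-gaussian-one-dim} to each of $VZ$ and $WZ$ (up to the harmless $\sqrt{d}$ rescaling, under which both TV and trade-off functions are invariant), identify the limiting trade-off function via Proposition~\ref{lem:trade-off-function-gaussians-different-variances}, and convert the TV error into the shift of the trade-off function with Lemma~\ref{lem:trade-off-function-approximation}. The only cosmetic point is that your $C=\max(A_{\|v\|},A_{\|w\|})$ is data-dependent whereas the statement claims a universal constant; the paper resolves this by noting $A_{\|v\|}\le 9C_0$ uniformly in $\|v\|$, so your argument closes the same way.
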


This theorem states that the trade-off function between $VZ$ and $WZ$ converges in $\mathcal{O}(1/d)$ to a trade-off function between two Gaussian variables with different variances. Combining our bounds with the  post-processing theorem, we have shown that:
\begin{equation*}
\label{TradeoffUpperBound}
    T(VZ,WZ)\geq \max\begin{cases}
    T(V,W),\\
    \tilde{G}_{\Lambda(\sigma_\theta,d,\|v\|,\|w\|)}\left(\cdot + \frac{C}{d}\right)-\frac{C}{d}.
\end{cases}
\end{equation*}
Below, we will denote this lower bound by $h$. Figure~\ref{figureApproxOneDim} represents a numerical computation of the upper bound.

\begin{figure}[ht]
    \centering
    \includegraphics[width=0.4\textwidth]{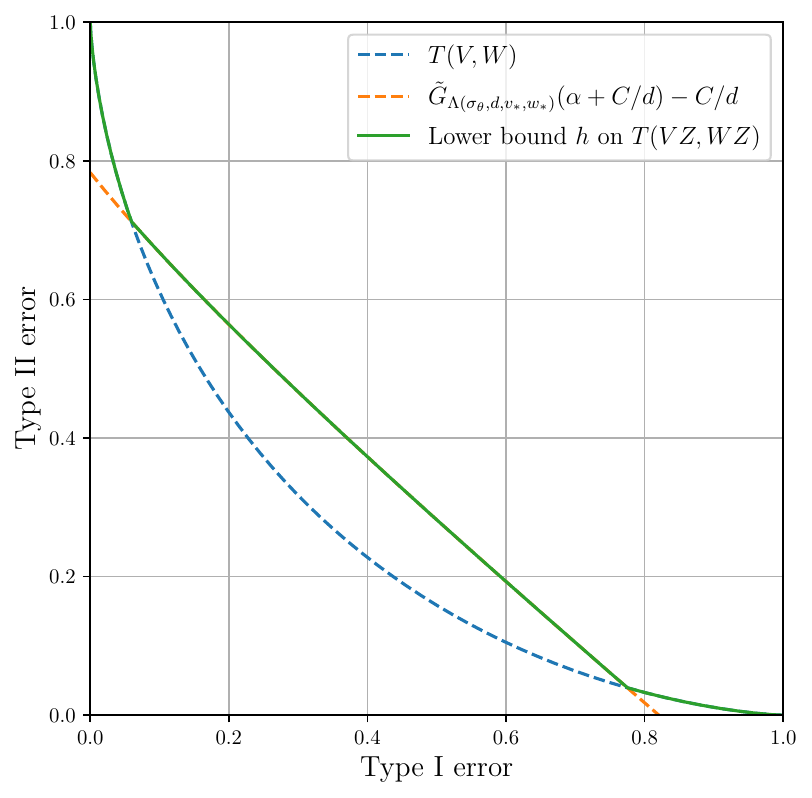}
    \caption{Comparison of the lower bound $h$ on $T(VZ,WZ)$ to $T(V,W)$ and $\tilde{G}_{\Lambda(\sigma_\theta,d,\Delta)}\left(\alpha + \frac{C}{d}\right)-\frac{C}{d}$ for $C=1,\Delta=1,\sigma_\theta=1,d=12$.}
\label{figureApproxOneDim}
\end{figure}

\textbf{Interpretation with Rényi divergences.}
Translating the convergence of the trade-off function into a formal privacy amplification bound is nontrivial. 
We can first examine the Rényi divergence between the limiting Gaussian distributions.  We denote $\nu_v^d = \mathcal{N}(0,\sigma_z^2(d\sigma_\theta^2 + v^2))$. The Rényi divergence is maximized for $v = v_*, w_* = v_* + \Delta$  for some value $v_*$ which depends on $d,\Delta$ and $\sigma_\theta$ (more details in Appendix~\ref{appsubsec:renyi-divergence-experiments}). Then, it can be shown that:
\begin{equation*}
\label{privacylossoneoutput}
D_\alpha(\nu_{v_*}^d,\nu_{w_*}^d) = \frac{\alpha\Delta^2}{4 d \sigma_\theta^2} + o(d^{-1}).
\end{equation*}

This result yields two key insights. First, as $d \to \infty$, $\nu_{v_*}^d$ is indistinguishable from $\nu_{w_*}^d$, supporting the idea that $VZ$ becomes indistinguishable from $WZ$. Second, we have:
\[D_\alpha(\nu_{v_*}^d,\nu_{w_*}^d) \approx \frac{1}{2d}D_\alpha(V,W).\]
This comparison with the post-processing upper bound $D_\alpha(V,W)$ suggests a privacy amplification of order $\mathcal{O}(1/d)$ in the asymptotic regime.
However, this observation alone does not suffice to conclude that  $D_\alpha(VZ,WZ) = \mathcal{O}(1/d)$. The difficulty arises from the fact that the $\mathcal{O}(1/d)$ convergence rate of the lower bound $h$ on the trade-off function does not directly imply a corresponding convergence rate for the Rényi divergence between the original (non-Gaussian) distributions.

We thus numerically approximate $D_\alpha(VZ,WZ)$ using $h$ to gain more intuition. This is done via the following conversion result from~\citet{Dong2019}.

\begin{proposition}[Conversion from $f$-DP to RDP~\cite{Dong2019}]
If a mechanism is $f$-DP, then it is $(\alpha,l_\alpha(f))$-RDP for all $\alpha >1$ with:
\[l_\alpha(f) = \begin{cases}
    \frac{1}{\alpha-1}\log \int |f'(t)|^{1-\alpha} dt \text{ if } z_f=1,\\ +\infty \text{ else,}
\end{cases}\]
with $z_f = \inf\{t \in (0,1); f(t) = 0\} =1$.
\end{proposition}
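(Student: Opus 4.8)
The plan is to express the Rényi divergence $D_\alpha(\mathcal{M}(\mathcal{D}),\mathcal{M}(\mathcal{D}'))$ as an integral functional of the trade-off function $T(\mathcal{M}(\mathcal{D}'),\mathcal{M}(\mathcal{D}))$ and then to exploit the fact that $f$-DP forces this trade-off function to dominate $f$ pointwise. We may assume $l_\alpha(f)<+\infty$, since otherwise there is nothing to prove; this forces $z_f=1$, hence $f>0$ on $[0,1)$ and $|f'|>0$ there.

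The first step is the identity, valid for any two distributions $A,B$ with densities $a,b$:
\[
\exp\!\big((\alpha-1)\,D_\alpha(A,B)\big)
= \mathbb{E}_{x\sim A}\!\Big[\big(\tfrac{b(x)}{a(x)}\big)^{\alpha}\Big]
= \int_0^1 \big|T(B,A)'(t)\big|^{1-\alpha}\,dt .
\]
To prove it, rewrite the middle term as $\int a^{1-\alpha}b^{\alpha} = \mathbb{E}_{x\sim B}\big[(a(x)/b(x))^{1-\alpha}\big]$ and set $R=a/b$. By the Neyman-Pearson lemma the most powerful level-$t$ test for ``$B$ versus $A$'' rejects when $R>c_t$, and the standard identity for the slope of the trade-off (ROC) curve gives $-T(B,A)'(t)=c_t$, where $c_t$ is defined by $B(R>c_t)=t$; thus $t\mapsto c_t$ is the generalized inverse of $t\mapsto B(R>t)$, so the change of variables $t\mapsto c_t$ yields $\int_0^1 c_t^{1-\alpha}\,dt=\mathbb{E}_B[R^{1-\alpha}]$, which is the quantity we started from. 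Atoms of $R$ under $B$ are handled by randomized tests / generalized quantiles in the usual way. Taking $A=\mathcal{M}(\mathcal{D})$ and $B=\mathcal{M}(\mathcal{D}')$ turns our target into $\tfrac{1}{\alpha-1}\log\int_0^1|g'(t)|^{1-\alpha}\,dt$ with $g:=T(\mathcal{M}(\mathcal{D}'),\mathcal{M}(\mathcal{D}))$.

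The second step is a monotonicity lemma: if $g\ge f$ are trade-off functions (convex, non-increasing, with $g(1)=f(1)=0$), then $\int_0^1|g'(t)|^{1-\alpha}\,dt\le\int_0^1|f'(t)|^{1-\alpha}\,dt$ for all $\alpha>1$. Since $f$-DP is quantified over all adjacent ordered pairs, it gives $g=T(\mathcal{M}(\mathcal{D}'),\mathcal{M}(\mathcal{D}))\ge f$, so the lemma combined with the identity and division by $\alpha-1>0$ after taking logarithms yields $D_\alpha(\mathcal{M}(\mathcal{D}),\mathcal{M}(\mathcal{D}'))\le l_\alpha(f)$, which is the claim. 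To prove the lemma I would substitute $t\mapsto 1-t$, writing $G(t)=|g'(1-t)|$ and $F(t)=|f'(1-t)|$, which are non-negative and non-decreasing; using $g(t)=\int_t^1|g'|$, the hypothesis $g\ge f$ becomes $\int_0^\tau G\ge\int_0^\tau F$ for all $\tau\in[0,1]$. With $\phi(x)=-x^{1-\alpha}$, which is increasing and concave on $(0,\infty)$, the desired inequality reads $\int_0^1\phi(G)\ge\int_0^1\phi(F)$; the tangent-line inequality for concave $\phi$ gives $\phi(G)-\phi(F)\ge\phi'(G)(G-F)$, and since $\phi'\ge 0$ is decreasing and $G$ is non-decreasing, $t\mapsto\phi'(G(t))$ is non-negative and non-increasing, so integration by parts (Abel summation) against $\int_0^\tau(G-F)\ge0$ gives $\int_0^1\phi'(G)(G-F)\ge0$. (Alternatively, this monotonicity is an instance of Blackwell's comparison of experiments: a pointwise-larger trade-off function corresponds to a less informative experiment, and every $f$-divergence, Rényi included, decreases under garbling.)

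The main obstacle is getting the trade-off-to-Rényi identity exactly right: one must track the paper's non-standard orientation of $D_\alpha$ and the order of the arguments of $T(\cdot,\cdot)$, invoke the Neyman-Pearson / slope fact in the correct direction, and deal with atoms of the likelihood ratio as well as the endpoint behavior where $|f'|$ may tend to $0$, which is precisely the regime where $l_\alpha(f)=+\infty$ and the statement is vacuous. Once correctly phrased, the monotonicity lemma is elementary.
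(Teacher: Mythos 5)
The paper does not prove this proposition: it is imported verbatim from \citet{Dong2019} (their conversion from $f$-DP to divergence-based DP), so there is no in-paper proof to compare against. Judged on its own, your argument is essentially the standard proof of that cited result and is sound in strategy. The identity $\exp\bigl((\alpha-1)D_\alpha(A,B)\bigr)=\int_0^1\lvert T(B,A)'(t)\rvert^{1-\alpha}\,dt$ is correct (I checked it reproduces $D_\alpha=\alpha\mu^2/2$ for $G_\mu$), the orientation matches the paper's convention $D_\alpha(P,Q)=\frac{1}{\alpha-1}\log\mathbb{E}_P[(Q/P)^\alpha]$, and your reduction of the monotonicity lemma to $\int_0^\tau G\ge\int_0^\tau F$ plus the tangent-line/Abel argument for the concave $\phi(x)=-x^{1-\alpha}$ is a correct elementary substitute for the Blackwell comparison that Dong et al.\ use.

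Two points would need tightening in a full write-up. First, the key identity as stated presumes mutual absolute continuity of $A$ and $B$; when $B$ charges a set where $A$ vanishes, both sides are $+\infty$ (the likelihood-ratio quantile $c_t$ hits $0$ on a set of positive measure, so $c_t^{1-\alpha}$ is non-integrable), which is consistent but should be said, alongside the atom case you do mention. Second, in the Abel/integration-by-parts step the weight $\phi'(G(t))=(\alpha-1)G(t)^{-\alpha}$ can blow up as $t\to0^+$ when $\lvert g'(1^-)\rvert=0$, so the inequality $\int_0^1\phi'(G)(G-F)\ge0$ should be established on $[\epsilon,1]$ and passed to the limit (or one should first dispose of the case $\int_0^1 F^{1-\alpha}=+\infty$, where the claim is vacuous, and note that finiteness of $\int F^{1-\alpha}$ together with $\int_0^\tau G\ge\int_0^\tau F$ controls $G$ near $0$). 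Neither issue reflects a wrong idea; they are the standard measure-theoretic caveats of this proof.
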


While it is difficult to theoretically compute $h$, for $d$ large enough, there exist $0 < c_1 < c_2 < 1$ such that:
    \[h(\alpha) = \begin{cases}
        T(V,W)(\alpha) \text{ if } \alpha \in (0,c_1) \cup (c_2,1),\\
        \tilde{g}(\alpha) \text{ if } \alpha\in (c_1,c_2),
    \end{cases}\]
where $\tilde{g}(\alpha) = \tilde{G}_{\Lambda(\sigma_\theta,d,\Delta)}\left(\alpha + \frac{C}{d}\right)-\frac{C}{d}$. This matches the numerical representation in Figure~\ref{figureApproxOneDim}.
Working with $h$ rather than $\tilde{g}$ alone is essential, because $z_{\tilde{g}} < 1$ implies $l_\alpha(\tilde{g}) = +\infty$, yielding no meaningful RDP guarantee. In contrast, $l_\alpha(h)$ remains finite, allowing us to derive valid bounds.

For multiple values of $d$ and $\Delta$, we first determine $c_1$ and $c_2$. Then, we estimate $l_\alpha(h)$ using a Monte Carlo approximation:
\[l_\alpha(h) \approx \frac{1}{L (\alpha-1)} \log\sum_{k=1}^L |h'(X_k)|^{1-\alpha},\]
where $(X_k)_{k \geq 1} \overset{iid}{\sim} \Unif([0,1])$ and $L = 10^6$ is the number of samples. We run the procedure $ M =50$ times, and the estimates are averaged. Figure~\ref{fig:renyi-divergence-single-output} reports the logarithm of the estimated divergence as a function of $\log(d)$ for multiple values of $\Delta$. Standard deviations are not shown as they are negligible compared to the estimated values. 
The results suggest a convergence rate $l_\alpha(h) \approx \mathcal{O}(1/d)$ in the high privacy regime $\Delta < 1$. 
The initial plateau observed for small $d$ arises from limitations in our analysis. Specifically, it reflects conservative constants in Lemma~\ref{lem:convergence-gaussian-one-dim} and the worst case-approximation of trade-off functions of Theorem~\ref{theo:convergence-trade-off-one-dim}, rather than an intrinsic property of the divergence itself.  More details about the experiments can be found in Appendix~\ref{appsubsec:renyi-divergence-experiments}.

\begin{figure}[ht]
    \centering
    \includegraphics[width=0.4\textwidth]{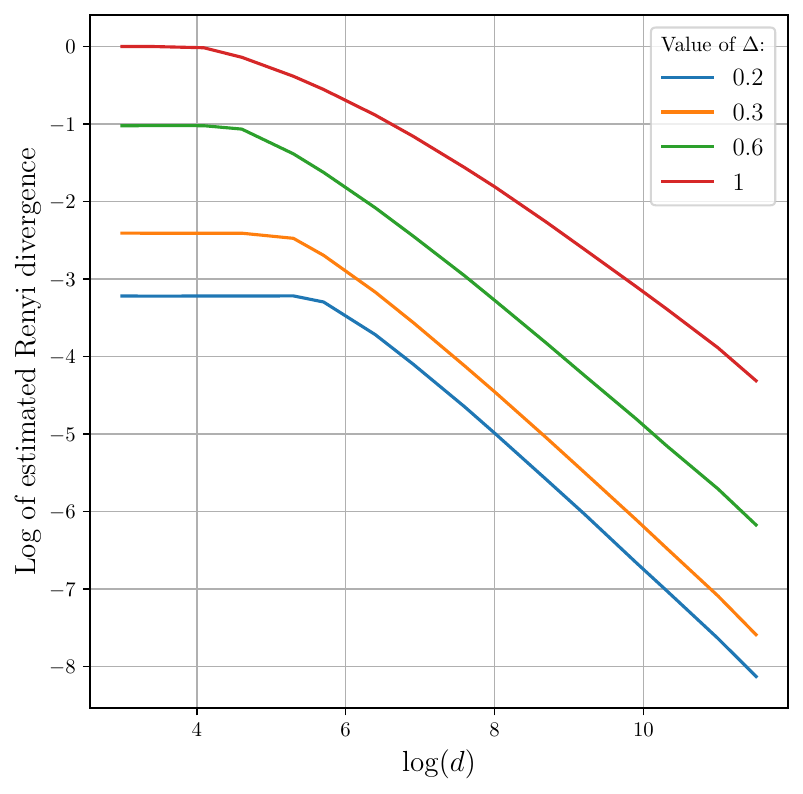}
    \caption{Privacy amplification in Rényi DP when releasing a single point: estimation of $D_\alpha(VZ,WZ)$ as a function of $\log(d)$ for different values of $\Delta$.}
    \label{fig:renyi-divergence-single-output}
\end{figure} 

\subsection{Releasing Multiple Points}
\label{subsec:mult-points}

We now consider the more general case where $l\geq 1$ synthetic data points in dimension $n\geq 1$ are released. In this case, the adversary can leverage correlations between the different outputs to better reconstruct $V$ or $W$ and thus yield higher privacy leakage. In fact, the elements of $VZ$ are not independent. This means that we cannot directly leverage the results of the previous section and apply composition theorems to obtain an approximation of the trade-off function $T(VZ,WZ)$. However, $VZ$ can be vectorized and decomposed into a sum of independent vectorized matrices: $\vect(VZ) = \sum_{k=1}^d \vect(V_{\cdot, k} Z_k)$, where $V_{\cdot, k}$ is the $k$-th column of $V$. Then, we can apply Theorem~\ref{theo:multivariate-clt} for random vectors and expect the distribution of $\vect(VZ) \in \mathbb{R}^{nl}$ to converge to a Gaussian. However, the constants in Theorem~\ref{theo:multivariate-clt} depend on the dimension of $\vect(VZ)$, here $N = nl$. To our knowledge, no multivariate central limit theorem currently provides total variation distance bounds with explicit constants yet. \citet{Rai__2019} studied Berry-Esseen bounds for the central limit theorem and obtained universal constants equal to $42N^{1/4} +16$, which become uninformative when $N$ is larger than some power of $d$.

A recent work solved this issue in the special case of the product of i.i.d Gaussian matrices.

\begin{theorem}[Convergence of product of Gaussian matrices (adapted from Theorem 1 of~\cite{Li2021}]
\label{theo:convergence-product-gaussian-matrices}
    Suppose that $d \geq \max\{n,l\}$ and $N \in \mathbb{R}^{n \times d}, Z\in \mathbb{R}^{d \times l}, G \in \mathbb{R}^{n \times l}$ are standard Gaussian matrices. Then, there exists $C > 0$ such that:
    \[TV(NZ, \sqrt{d}G) \leq C \sqrt{\frac{nl}{d}}.\]
\end{theorem}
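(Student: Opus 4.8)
The plan is to reduce the statement to an estimate on a Kullback--Leibler divergence and finish with Pinsker's inequality: it is enough to prove $\mathrm{KL}(NZ\,\|\,\sqrt{d}\,G)=\mathcal{O}(nl/d)$, since then $\TV(NZ,\sqrt{d}\,G)\le\sqrt{\tfrac12\,\mathrm{KL}(NZ\,\|\,\sqrt{d}\,G)}=\mathcal{O}(\sqrt{nl/d})$. Throughout I view $NZ$, $G$ and $\sqrt{d}\,G$ as vectors in $\mathbb{R}^{nl}$ via $\vect(\cdot)$. The key preliminary observation is an exact identity for this divergence: using that $N$ and $Z$ are independent and $\mathbb{E}\norm{NZ}^2=\mathbb{E}[\tr(Z^{\top}N^{\top}NZ)]=n\,\mathbb{E}[\tr(Z^{\top}Z)]=nld$, a direct computation of the cross-entropy of $NZ$ relative to $\mathcal{N}(0,dI_{nl})$ returns exactly the differential entropy $h(\sqrt{d}\,G)$, hence $\mathrm{KL}(NZ\,\|\,\sqrt{d}\,G)=h(\sqrt{d}\,G)-h(NZ)$. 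After rescaling both arguments by $1/\sqrt d$, this equals $h(G)-h(S_d)$, the \emph{entropy deficit} of the normalized partial sum $S_d:=\tfrac{1}{\sqrt d}\sum_{k=1}^{d}X_k$ with respect to its Gaussian limit, where the summands $X_k:=\vect\!\big(N_{\cdot,k}\,Z_{k,\cdot}\big)$ are i.i.d.\ vectors in $\mathbb{R}^{nl}$ — outer products of two independent standard Gaussian vectors — with mean $0$ and covariance $I_{nl}$.

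As a warm-up, and to see precisely where the difficulty lies, I would first record the easy but suboptimal bound obtained by conditioning on $Z$: the law of $NZ$ given $Z$ is the centered Gaussian with covariance $(Z^{\top}Z)\otimes I_n$, so by joint convexity of $\TV$, the closed form of the KL between centered Gaussians, and Pinsker's inequality,
\[
\TV(NZ,\sqrt{d}\,G)\ \le\ \mathbb{E}_Z\,\TV\!\big(\mathcal{N}(0,(Z^{\top}Z)\otimes I_n),\mathcal{N}(0,dI_{nl})\big)\ \lesssim\ \sqrt{n}\;\mathbb{E}_Z\norm{\tfrac1d Z^{\top}Z-I_l}_F .
\]
Since $\tfrac1d Z^{\top}Z$ is a normalized Wishart matrix, $\mathbb{E}\norm{\tfrac1d Z^{\top}Z-I_l}_F^2=l(l+1)/d$, giving $\TV(NZ,\sqrt{d}\,G)=\mathcal{O}(l\sqrt{n/d})$, and symmetrically $\mathcal{O}(n\sqrt{l/d})$ by conditioning on $N$ instead. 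This already proves the claim up to a spurious factor $\min(\sqrt l,\sqrt n)$, and in particular settles the regime $\min(n,l)=\mathcal{O}(1)$; the slack comes from the crude mixture inequality, which does not exploit that $\mathbb{E}[\tfrac1d Z^{\top}Z]=I_l$, so that the first-order fluctuation of the conditional covariance cancels in the mixture.

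To recover the sharp rate I would instead exploit the entropy-deficit identity together with Fisher-information and entropy-monotonicity techniques for the CLT, which give $h(G)-h(S_d)=\mathcal{O}(1/d)$ with a constant controlled by the Fisher information (a Poincar\'e-type constant) of a normalized block of summands. The main obstacle — which I expect to be the crux of the argument — is twofold. First, a single summand $X_k$ is supported on the rank-one matrices, hence is \emph{not} absolutely continuous with respect to Lebesgue measure once $\min(n,l)\ge 2$, has infinite Fisher information, and the deficit bound cannot be applied term by term; the remedy is to group a bounded number of consecutive summands — each block being itself a Gaussian matrix product whose shape makes it absolutely continuous with a well-behaved density — which is where the hypothesis $d\ge\max\{n,l\}$ enters. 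Second, one must track how the resulting constant depends on the ambient dimension $nl$ and on the moments of the Gaussian outer products, and show that it contributes a factor of order $nl$ rather than a larger power, so that the deficit is $\mathcal{O}(nl/d)$ and Pinsker delivers the claimed $\mathcal{O}(\sqrt{nl/d})$. Making these two points quantitative is exactly the content of Theorem~1 of \citet{Li2021}; once the normalizations are matched — our $N,Z$ are standard Gaussian and $\sqrt d\,G$ has entrywise variance $d$, which is that theorem's setting after a trivial rescaling — the statement follows directly.
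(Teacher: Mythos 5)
The paper does not actually prove this statement---it is imported (up to the trivial rescaling by $\sqrt{d}$) from Theorem~1 of \citet{Li2021}---and your argument ultimately rests on exactly the same citation, so the two treatments coincide where it matters. Your supplementary material is sound: the cross-entropy computation showing $\mathrm{KL}(NZ\,\|\,\sqrt{d}G)=h(\sqrt{d}G)-h(NZ)$ is correct since $\mathbb{E}\|NZ\|^2=nld$, and the conditioning-on-$Z$ Wishart bound correctly yields the weaker rate with the extra $\min\{\sqrt{n},\sqrt{l}\}$ factor; but, as you acknowledge, the entropy-deficit/Fisher-information route to the sharp $\sqrt{nl/d}$ rate (handling the singularity of individual rank-one summands and the dimension dependence of the constants) is precisely the content of the cited theorem rather than something your sketch establishes independently.
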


\looseness=-1 This theorem must be adapted to our case, where $N$ is shifted by some matrix $v$. 
The following theorem establishes convergence even the presence of a shift $v$ that induces dependence between $NZ$ and $v Z$, complicating the analysis. We provide a non asymptotic upper bound on the TV distance between $(\sigma_\theta N + v)Z$ and a Gaussian matrix. Strikingly, our bound does not depend on the norm of the shift $v$.

\begin{restatable}{theorem}{theoConvergenceProductGaussianMatricesShifted}{\normalfont (Convergence of product of Gaussian matrices, shifted version).}
\label{theo:convergence-product-gaussian-matrices-shifted}
    Let $N \in \mathbb{R}^{n \times d}, Z, Z'\in \mathbb{R}^{d \times l}, G \in \mathbb{R}^{n \times l}$ be independent standard Gaussian matrices. Let $s = \rank(v)$. Assume that $d \geq \max\{n,l\}$. Then, there exists $C' > 0$ such that:
    \[TV\left((\sigma_\theta N + v)Z, \sigma_\theta\sqrt{d-s} G + vZ'\right) \leq C'\sqrt{\frac{nls}{d-s}}.\]
\end{restatable}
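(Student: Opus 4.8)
The plan is to reduce the shifted problem to the unshifted Theorem~\ref{theo:convergence-product-gaussian-matrices} together with a Gaussian-smoothing estimate, by rotating the $d$ columns so as to align them with the row space of $v$. First I would choose an orthogonal $O\in\mathbb{R}^{d\times d}$ whose last $d-s$ columns span $\ker(v)$, so that $vO=[\tilde v\mid 0]$ with $\tilde v\in\mathbb{R}^{n\times s}$ of full column rank. Since Gaussian laws are rotation invariant, $OZ\overset{d}{=}Z$, $NO\overset{d}{=}N$ and $OZ'\overset{d}{=}Z'$, so it suffices to bound $\TV$ between $(\sigma_\theta N_1+\tilde v)Z_1+\sigma_\theta N_2Z_2$ and $\sigma_\theta\sqrt{d-s}\,G+\tilde vZ_1'$, where $N=[N_1\mid N_2]$ ($N_1\in\mathbb{R}^{n\times s}$) and $Z=\bigl(\begin{smallmatrix}Z_1\\ Z_2\end{smallmatrix}\bigr)$, $Z'=\bigl(\begin{smallmatrix}Z_1'\\ Z_2'\end{smallmatrix}\bigr)$ ($Z_1,Z_1'\in\mathbb{R}^{s\times l}$) are the corresponding block partitions. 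The key structural fact is that $(\sigma_\theta N_1+\tilde v)Z_1$ and $\sigma_\theta N_2Z_2$ are independent, because $(N_1,Z_1)$ and $(N_2,Z_2)$ are; likewise $\tilde v Z_1'$ is independent of $G$. We may assume $v\neq 0$, i.e. $s\geq 1$.

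Next I would introduce the intermediate law of $(\sigma_\theta N_1+\tilde v)Z_1+\sigma_\theta\sqrt{d-s}\,G$, with $G$ drawn independently of everything else, and split via the triangle inequality for $\TV$. For the first gap, convolution with the law of $(\sigma_\theta N_1+\tilde v)Z_1$ is a Markov kernel, so the data-processing inequality bounds it by $\TV(\sigma_\theta N_2Z_2,\,\sigma_\theta\sqrt{d-s}\,G)=\TV(N_2Z_2,\,\sqrt{d-s}\,G)$, which is $\mathcal{O}(\sqrt{nl/(d-s)})$ by Theorem~\ref{theo:convergence-product-gaussian-matrices} applied in ambient dimension $d-s$. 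This step needs $d-s\geq\max\{n,l\}$, but when that fails one has $\sqrt{nls/(d-s)}>1$ and the claimed bound is vacuous.

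For the second gap I would condition on $Z_1=Z_1'=z_1$: both laws then become products over the $n$ rows of Gaussians with the common mean $\tilde v z_1$ and covariances $\sigma_\theta^2\bigl(z_1^\top z_1+(d-s)I_l\bigr)$ and $\sigma_\theta^2(d-s)I_l$ respectively, so the means cancel and the conditional $\TV$ equals $\TV\bigl(\mathcal{N}(0,z_1^\top z_1+(d-s)I_l)^{\otimes n},\mathcal{N}(0,(d-s)I_l)^{\otimes n}\bigr)$. I would bound this by Pinsker: writing $\lambda_j$ for the eigenvalues of $z_1^\top z_1/(d-s)$, the KL divergence is $\tfrac n2\sum_j[\lambda_j-\log(1+\lambda_j)]\leq\tfrac n4\sum_j\lambda_j^2=\tfrac n4\|z_1^\top z_1\|_F^2/(d-s)^2$ via $x-\log(1+x)\leq x^2/2$ on $[0,\infty)$. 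Averaging over $z_1$, using Jensen and the exact fourth moment $\mathbb{E}\|Z_1^\top Z_1\|_F^2=ls(l+s+1)\leq 3dls$, the second gap is $\mathcal{O}\bigl(\sqrt{d\,nls}/(d-s)\bigr)$.

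Finally I would assemble the two estimates. If $\sqrt{nls/(d-s)}\geq 1$ the target inequality holds trivially with $C'\geq 1$; otherwise $d-s>nls\geq s$ forces $s<d/2$, so $d/(d-s)<2$ and both gaps are $\mathcal{O}\bigl(\sqrt{nls/(d-s)}\bigr)$, which yields the claim. The main obstacle is controlling the second gap uniformly in the rank $s$: a direct Pinsker bound carries a spurious factor $\sqrt{d/(d-s)}$, and the resolution is the observation that this factor is harmless precisely in the regime $s\lesssim d$ where the bound is not already trivial; a secondary bookkeeping point is the Wick-formula computation of $\mathbb{E}\|Z_1^\top Z_1\|_F^2$.
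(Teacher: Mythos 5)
Your proposal is correct and follows the same overall architecture as the paper's proof: an orthogonal change of coordinates isolating the $s$ columns that carry the shift, a split of $(\sigma_\theta N+v)Z$ into an independent shift-carrying block and a shift-free block, a triangle inequality through the same intermediate law $(\sigma_\theta N_1+\tilde v)Z_1+\sigma_\theta\sqrt{d-s}\,G$, and data processing plus Theorem~\ref{theo:convergence-product-gaussian-matrices} for the first gap. The difference lies in the second gap. The paper invokes a shift-reduction lemma for KL divergences: choosing the auxiliary shift $W=\sigma_\theta\sum_{k\le s}N_{\cdot,k}Z'_k$ makes $\Sigma Z'+W$ equal in law to $H$, so the only surviving term is $\mathbb{E}_w[D_{KL}(\sqrt{d-s}G-w,\sqrt{d-s}G)]=\mathbb{E}\|W\|^2/(2\sigma_\theta^2(d-s))=nls/(2(d-s))$, giving $\sqrt{nls/(4(d-s))}$ directly with no extraneous factors. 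You instead couple $Z_1=Z_1'$, reduce to an explicit TV bound between centered Gaussians with covariances $\sigma_\theta^2(z_1^\top z_1+(d-s)I_l)$ and $\sigma_\theta^2(d-s)I_l$, and control it via Pinsker and the Wishart fourth moment $\mathbb{E}\|Z_1^\top Z_1\|_F^2=ls(l+s+1)$; this is elementary and self-contained but picks up a spurious $\sqrt{d/(d-s)}$, which you correctly dispose of by observing that the target bound is vacuous unless $d-s>nls\ge s$. Both routes are valid; the shift-reduction argument is cleaner, while your version is more careful than the paper on two boundary points (the requirement $d-s\ge\max\{n,l\}$ for the unshifted theorem, and the degenerate case $s=0$, where the stated bound in fact fails and one must assume $s\ge 1$).
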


\begin{proof}[Sketch of proof (details in Appendix~\ref{appsubsec:convergence-product-gaussian-matrices-shifted}):]
For simplicity, we set $\sigma_\theta = 1$. The idea of the proof is to use the SVD of $v = F\Sigma S^T$ and use the invertibility of $F$ and $S$ and the invariance of Gaussian matrices by orthogonal transformation to write $TV((N+v)Z, \sqrt{d-s}G + v Z') = TV(( N + \Sigma)Z,\sqrt{d-s}G + \Sigma Z')$ and observe that for $d > \max\{n,l\}$, $\Sigma = (\lambda_1,\dots,\lambda_s, \underbrace{0 , \dots, 0}_{n-s \text{ times}})$. 

Then, we decompose $(N+v)Z$ into a part that depends on the shift $v$ and vanishes as $d \to +\infty$ and another part that we can approximate with a Gaussian matrix: $H = \sum_{k=1}^s (N_{\cdot,k} + \Sigma_{\cdot,k} )Z_k$ and $N_{-s}Z_{-s} = \sum_{k=s+1}^d N_{\cdot,k} Z_k$, and $NZ = N_{-s}Z_{-s} + H$. Note that $N_{-s}Z_{-s}$ and $H$ are independent.

By Theorem~\ref{theo:convergence-product-gaussian-matrices}, we can approximate $N_{-s}Z_{-s}$ as a Gaussian matrix $\sqrt{d-s}G$. Using the triangle inequality for the TV distance:
    \begin{alignat}{2}
        & TV\left(N_{-s}Z_{-s} + H, \sqrt{d-s}G + v Z'\right) &&\nonumber\\
        \leq \;& TV\left(N_{-s}Z_{-s} + H, \sqrt{d-s}G + H\right) && \label{eq:TV1}\\
        + \;&TV\left( \sqrt{d-s}G + H , \sqrt{d-s}G + \Sigma Z'\right). && \label{eq:TV2}
    \end{alignat}

Using independence in the decomposition and the post-processing theorem, we have:
\[\eqref{eq:TV1} \leq TV\left(\left(N_{-s}Z_{-s} ,  H\right) , (\sqrt{d-s}G , H)\right) \leq C\sqrt{\frac{nl}{d-s}}.\]
Also, \eqref{eq:TV2} can be seen as the total variation distance between the convolution of two distributions by a third one. We use Pinsker inequality to relate this TV distance to a KL divergence. Then, we define a random variable $W$ and write: $G+H = G+W +H-W$. Using the post-processing inequality, the chain rule and convexity for KL divergences, we obtain:
\begin{align*}
    &D_{KL}(\sqrt{d-s}G+\Sigma Z',\sqrt{d-s}G+H) \\ \leq & D_{KL}(\Sigma Z' + W, H) \\+ &\mathbb{E}_{w \sim W}[D_{KL}(\sqrt{d-s}G - w, \sqrt{d-s}G)]. 
\end{align*}
Setting $W = \sum_{k=1}^s N_{\cdot,k} Z_k'$, we get $D_{KL}(\Sigma Z' + W, H) = 0$ and:
\[D_{KL}(\sqrt{d-s}G+\Sigma Z',\sqrt{d-s}G+H) \leq \frac{nls}{2(d-s)}.\]
This concludes the proof.
\end{proof}

This theorem allows us to recover privacy amplification results in the multiple outputs scenario.
By abuse of notation, for $G \in \mathbb{R}^{n \times l}$ a Gaussian matrix independent of $Z$, we denote $\tilde{G}_{(\sigma_\theta,d-n,v,w)} = T(\sigma_\theta \sqrt{d-n}G + vZ, \sigma_\theta \sqrt{d-n}G + wZ)$.

\begin{theorem}
\label{theo:convergence-trade-off-multi-dim}
    Let $d > 0$. Then, there exists a universal constant $C' > 0, C_{n,l,d} =  C'n \sqrt{\frac{l}{d-n}}$ such that for all $\alpha \in (C_{n,l,d},1-C_{n,l,d})$:
    \begin{align*}
        \tilde{G}_{(\sigma_\theta,d-n,v,w)}\left(\alpha + C_{n,l,d}\right) -C_{n,l,d} &\leq T(VZ,WZ)(\alpha) \\\tilde{G}_{(\sigma_\theta,d-n,v,w)}\left(\alpha - C_{n,l,d}\right)+ C_{n,l,d} &\geq T(VZ,WZ)(\alpha) .
    \end{align*}
\end{theorem}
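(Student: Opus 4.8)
The plan is to combine the shifted matrix CLT of Theorem~\ref{theo:convergence-product-gaussian-matrices-shifted} with the trade-off approximation machinery (Lemma~\ref{lem:trade-off-function-approximation}), in exactly the way Theorem~\ref{theo:convergence-trade-off-one-dim} was obtained in the $n=l=1$ case. First I would apply Theorem~\ref{theo:convergence-product-gaussian-matrices-shifted} twice, once with mean $v$ and once with mean $w$: since $VZ = (\sigma_\theta N + v)Z$ and $WZ = (\sigma_\theta N + w)Z$, the theorem gives independent Gaussian matrices and copies $Z', Z''$ of $Z$ such that
\[
\TV\bigl(VZ,\ \sigma_\theta\sqrt{d-s_v}\,G + vZ'\bigr) \le C'\sqrt{\tfrac{n l s_v}{d-s_v}}, \qquad
\TV\bigl(WZ,\ \sigma_\theta\sqrt{d-s_w}\,G + wZ''\bigr) \le C'\sqrt{\tfrac{n l s_w}{d-s_w}},
\]
where $s_v = \rank(v)$ and $s_w = \rank(w)$. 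Since $v,w \in \mathbb{R}^{n\times d}$ with $n \le d$, both ranks are at most $n$, so each bound is at most $C'\sqrt{n l \cdot n/(d-n)} = C' n\sqrt{l/(d-n)} =: C_{n,l,d}$ (using that $s/(d-s)$ is increasing in $s$). Taking $\tilde P = \sigma_\theta\sqrt{d-n}\,G + vZ'$, $\tilde Q = \sigma_\theta\sqrt{d-n}\,G + wZ''$ as the surrogate distributions — note the paper's $\tilde G_{(\sigma_\theta,d-n,v,w)}$ is defined precisely as $T(\tilde P,\tilde Q)$ — I would set $\gamma = C_{n,l,d} = \max(\TV(\tilde P, VZ), \TV(\tilde Q, WZ))$ and invoke Lemma~\ref{lem:trade-off-function-approximation} with $P = VZ$, $Q = WZ$. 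That lemma immediately yields, for every $\alpha \in (\gamma, 1-\gamma)$,
\[
\tilde G_{(\sigma_\theta,d-n,v,w)}(\alpha+\gamma) - \gamma \le T(VZ,WZ)(\alpha) \le \tilde G_{(\sigma_\theta,d-n,v,w)}(\alpha-\gamma) + \gamma,
\]
which is the claimed statement with $C' $ absorbing the constant from Theorem~\ref{theo:convergence-product-gaussian-matrices-shifted}.

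One subtlety I would be careful about is that Theorem~\ref{theo:convergence-product-gaussian-matrices-shifted} as stated uses $\sqrt{d-s}$ with $s$ the \emph{exact} rank of $v$, whereas the statement of Theorem~\ref{theo:convergence-trade-off-multi-dim} uses the fixed value $d-n$. To reconcile these, I would either (a) argue that in the relevant regime $v$ and $w$ are generically full row rank so $s_v = s_w = n$, or more safely (b) add an extra step bounding $\TV\bigl(\sigma_\theta\sqrt{d-s}\,G + vZ',\ \sigma_\theta\sqrt{d-n}\,G + vZ'\bigr)$ by something of the same $\mathcal{O}(n\sqrt{l/(d-n)})$ order — this reduces to a one-dimensional-variance comparison between $\mathcal{N}(0,(d-s)\sigma_\theta^2)$ and $\mathcal{N}(0,(d-n)\sigma_\theta^2)$ convolved with $vZ'$, controllable via Pinsker since the KL between the two Gaussian parts is $\mathcal{O}((n-s)/(d-n))$. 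Since this only changes the universal constant, I would fold it into $C'$. A second point worth a line: the two applications of Theorem~\ref{theo:convergence-product-gaussian-matrices-shifted} can be run with the \emph{same} underlying Gaussian $N$ and same $G$ (it is only the surrogate that is compared to $VZ$ or $WZ$ separately), so no coupling argument across the two is needed — Lemma~\ref{lem:trade-off-function-approximation} treats $\TV(\tilde P,P)$ and $\TV(\tilde Q,Q)$ independently.

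The main obstacle is not any of the above plumbing — it is making sure the surrogate trade-off $\tilde G_{(\sigma_\theta,d-n,v,w)} = T(\sigma_\theta\sqrt{d-n}\,G + vZ, \sigma_\theta\sqrt{d-n}\,G + wZ)$ is itself a legitimate, well-defined trade-off function to plug into Lemma~\ref{lem:trade-off-function-approximation}, and that the inequality direction and the shifted arguments $\alpha \pm \gamma$ land exactly as in the lemma's conclusion. Unlike the $n=l=1$ case, here the surrogate is not a clean Gaussian-with-different-variance pair (Proposition~\ref{lem:trade-off-function-gaussians-different-variances} does not apply), because $vZ$ and $wZ$ are genuinely non-Gaussian matrix-valued and, crucially, correlated with the common $G$ term only trivially but correlated with each other through the shared $Z$. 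So the theorem is best read as reducing the problem to computing/estimating $\tilde G_{(\sigma_\theta,d-n,v,w)}$ rather than giving a closed form — which is consistent with how the paper frames it. I would therefore keep the proof at the level of "apply Theorem~\ref{theo:convergence-product-gaussian-matrices-shifted}, bound $s \le n$, apply Lemma~\ref{lem:trade-off-function-approximation}", relegating the $s$-vs-$n$ reconciliation to a remark or a short lemma in the appendix, and not attempt to further unpack $\tilde G_{(\sigma_\theta,d-n,v,w)}$.
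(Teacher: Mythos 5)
Your proposal is correct and matches the paper's evident intent: the paper gives no separate appendix proof for this theorem, and it is meant to follow by combining Theorem~\ref{theo:convergence-product-gaussian-matrices-shifted} (with $s=\rank(v)\le n$, so $C'\sqrt{nls/(d-s)}\le C'n\sqrt{l/(d-n)}$) with Lemma~\ref{lem:trade-off-function-approximation}, exactly mirroring the proof of Theorem~\ref{theo:convergence-trade-off-one-dim}. Your handling of the $d-s$ versus $d-n$ discrepancy is a legitimate fix (one can also just rerun the decomposition in the proof of Theorem~\ref{theo:convergence-product-gaussian-matrices-shifted} over the first $n$ columns instead of the first $s$), and your worry about $\tilde G_{(\sigma_\theta,d-n,v,w)}$ is unfounded since a trade-off function $T(\tilde P,\tilde Q)$ depends only on the two marginal laws, not on any coupling through the shared $Z$.
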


Combining our bounds with the post-processing theorem and denoting $\tilde{g}(\alpha) = \tilde{G}_{(\sigma_\theta,d-n,v,w)}\left(\alpha + C_{n,l,d}\right) -C_{n,l,d}$, we have shown that:
\[T(VZ,WZ) \geq \max\{T(V,W), \tilde{g}\}.\]
We note $h$ this upper bound, with a slight abuse of notation.

\textbf{Interpretation with Rényi divergences.}
Leveraging Rényi divergences allows us to interpret our result both as a form of privacy amplification via synthetic data release and as a composition theorem for the ``release one point'' mechanism of Section~\ref{subsec:single-Point}, which is valid when only a small number  of synthetic points are released, i.e., $\max\{n,l\} \leq d$. 

Similar to the case $l=n=1$, we can derive Rényi divergence privacy bounds for the limiting distributions in order to get some intuition about the relationship between the parameters and the privacy loss. Denoting $ G_v = \sigma_\theta \sqrt{d-n}G + vZ, G_w = \sigma_\theta \sqrt{d-n}G + wZ$, $D_\alpha\left(G_v, G_w\right)$ can be upper bounded as follows (see Appendix~\ref{appsubsec:renyi-divergence-experiments}):
\begin{align*}
    &D_\alpha\left(G_v, G_w\right) \leq \frac{\alpha nl \Delta^2}{4(d-n) \sigma_\theta^2} + o(d^{-1}).
\end{align*}

Analogous to Section~\ref{subsec:single-Point}, $G_v$ and $G_w$ converge in distribution when $d \to +\infty$. However, this time, there is a dependence in $nl$ which behaves as a composition result of the ``release one point'' mechanism. Comparing to the post-processing upper bound $D_\alpha(V,W)$, we now have:
\[D_\alpha(G_v,G_w) \lesssim \frac{nl}{2(d-n)}D_\alpha(V,W).\]

Following the same procedure described in Section~\ref{subsec:single-Point}, we numerically estimate the Rényi divergence between $VZ$ and $WZ$ for various values of $d$. Figure~\ref{fig:renyi-divergence-multiple-output} suggests an approximate convergence rate of $\mathcal{O}(d^{-1/2})$ when $l$ and $n$ are fixed. More details about the experiments can be found in Appendix~\ref{appsubsec:renyi-divergence-experiments}.

\begin{figure}[ht]
    \centering
    \includegraphics[width=0.4\textwidth]{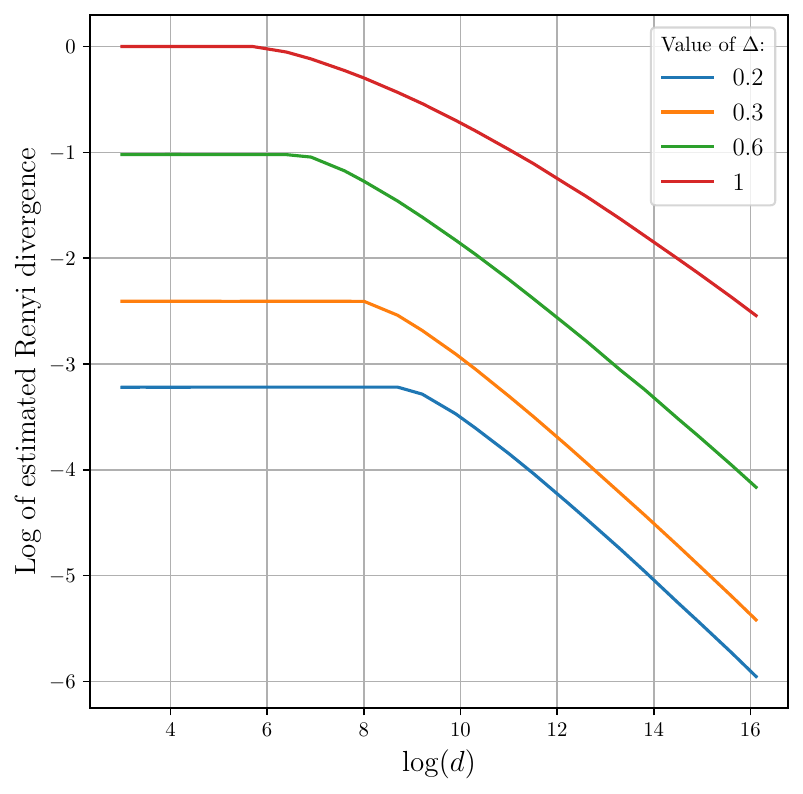}
    \caption{Privacy amplification in Rényi DP when releasing multiple points ($l=10, n=1$): estimation of $D_\alpha(VZ,WZ)$ as a function of $\log(d)$ for different values of $\Delta$.}
    \label{fig:renyi-divergence-multiple-output}
\end{figure}

\textbf{Discussion.}
When $d$ is sufficiently large, the privacy loss incurred by releasing $nl$ synthetic data points is lower than that of directly releasing the model parameters.  The resulting guarantees are comparable in order to those obtained by training the model non-privately and releasing $nl$ noisy predictions—each individually satisfying differential privacy. However, as discussed in Section~\ref{sec:intro}, directly adding noise to model outputs is undesirable, as it requires difficult and often loose sensitivity analyses and often degrade utility.
In contrast, although we privatize the model itself, we obtain bounds that resemble the composition of the "release one point" mechanism. It is important to note, however, that this convergence analysis holds only when $d \geq \max\{n,l\}$.

We believe that this behavior may hold in more general settings, but our proof techniques heavily rely on the convergence theorem of products of Gaussians. Unfortunately, there is no hope to apply this technique in the "small $d$" regime, as proven by~\citet{Li2021}.

\begin{theorem}[Non convergence of Gaussian product for small $d$, adapted from Theorem 1 in~\citealp{Li2021}]
    Let $G_1 \in \mathbb{R}^{n \times d}, G_2\in \mathbb{R}^{d \times l}, G \in \mathbb{R}^{n \times l}$ be Gaussian matrices. Suppose that \[d \leq C''\max\{n,l\}^{1/2} \min\{n,l\}^{3/2},\] with $C''$ an universal constant. Then: \[TV(G_1G_2, \sqrt{d}G) \geq 2/3.\]
\end{theorem}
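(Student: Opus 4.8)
This is (the non-convergence direction of) Theorem~1 in \cite{Li2021}, so one option is to cite that result directly; below I sketch the argument one would use to establish it. Begin with two reductions. Transposing a product of standard Gaussian matrices yields another such product with the roles of $n$ and $l$ exchanged, and the hypothesis on $d$ is symmetric in $n,l$, so assume $n\le l$. If $d<n$, then $\rank(G_1G_2)\le d<n=\min\{n,l\}$ almost surely while $\rank(\sqrt dG)=n$ almost surely, so the two laws are mutually singular and $TV=1\ge 2/3$; hence assume $n\le d$ from now on. It is worth recording the structural fact that, writing the SVD $G_1=U\Sigma_1V^\top$ and using orthogonal invariance of Gaussian matrices, $(G_1G_2)^\top(G_1G_2)\overset{d}{=}\sum_{i=1}^{n}a_i\,h_ih_i^\top$ with $h_i\overset{iid}{\sim}\mathcal N(0,I_l)$ (independent of the $a_i$) and $(a_1,\dots,a_n)$ the eigenvalues of the $n$-dimensional Wishart matrix $G_1G_1^\top$, which has $d$ degrees of freedom; the analogous object for $\sqrt dG$ is $d\sum_{i=1}^{n}g_ig_i^\top$. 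Thus the two matrices are bi-orthogonally invariant and differ only through the ``weights'' entering a Wishart-type sum: a constant $d$ for $\sqrt dG$ versus the $a_i$'s, which are dispersed around $d$ at scale $\sqrt d$, for $G_1G_2$. Equivalently, $TV(G_1G_2,\sqrt dG)$ is the total variation distance between two $n$-dimensional singular-value ensembles that differ only in this weighting.

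The plan is to separate these two laws with the low-degree spectral statistic $T(M)=\sum_i\sigma_i(M)^4=\|M^\top M\|_F^2=\tr\big((M^\top M)^2\big)$, a degree-four polynomial in the entries of $M$. Routine Wishart moment identities give $\mathbb E_P[T]-\mathbb E_Q[T]=nld(nl+n+l+3)=\Theta(n^2l^2d)$, where $P,Q$ denote the laws of $G_1G_2$ and $\sqrt dG$; and, since under $Q$ one has $T=d^2\tr((G^\top G)^2)$ with $GG^\top$ an $n$-dimensional Wishart with $l$ degrees of freedom, $\Var_Q[T]=\Theta\!\big(d^4(n^3l+n^2l^2+nl^3)\big)=\Theta(d^4nl^3)$ because $n\le l$. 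One also bounds $\Var_P[T]$ and the fourth central moments of $T$ under both laws. A Chebyshev bound for $Q$ together with a Paley--Zygmund-type anti-concentration bound for $P$ then yields an interval $I$ with $Q(T\in I)\to 0$ while $P(T\in I)$ stays bounded away from $0$; the resulting separation exceeds $2/3$ precisely when $(\mathbb E_P[T]-\mathbb E_Q[T])^2/\Var_Q[T]\gtrsim 1$, i.e.\ when $d^2\lesssim n^3l=\max\{n,l\}\min\{n,l\}^3$, which is exactly the stated condition, and choosing the universal constant $C''$ small enough calibrates the separation to $2/3$.

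The main obstacle is the control of $T$ under the non-Gaussian law $P$. Since $P$ is the law of a product of two independent Gaussian matrices---equivalently, of a product of two independent Wishart matrices---obtaining a variance and fourth-moment bound sharp enough to convert the mean/variance gap above into a genuine total-variation lower bound of \emph{exactly} $2/3$, uniformly over the entire admissible range of $d$, is the crux. In particular, when $\max\{n,l\}\gg\min\{n,l\}^3$ the variance of this simple statistic under $P$ becomes too large for the naive separation argument, and one must either use a better-conditioned statistic or compare the two Laguerre-type singular-value ensembles directly through their explicit joint densities; this is the delicate part carried out in \cite{Li2021}, whose analysis one would follow.
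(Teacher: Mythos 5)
The paper does not prove this statement: it is imported (up to normalization) from Theorem~1 of Li (2021) and used as a black box, so your primary move---citing that result directly---is exactly what the paper does. Your accompanying moment-method sketch is a reasonable plausibility argument (the mean gap $ndl(nl+n+l+3)$ for the degree-four statistic and the resulting threshold $d^2\lesssim \min\{n,l\}^3\max\{n,l\}$ do match the stated hypothesis, and you correctly flag that controlling the variance of the statistic under the product law is the unproved crux), but since the paper supplies no proof of its own there is nothing further to compare it against.
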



\section{Conclusion \& Perspectives}
We have shown that there exists a privacy amplification phenomenon for synthetic data in the context of linear regression. However, there is no amplification when the adversary has control over the seed of the synthesizer.

This negative result could inform the development of tighter privacy auditing strategies for synthetic data release \citep{Annamalai2024}. By quantifying the degradation in privacy guarantees, our findings offer insights that can help design more robust auditing methods in adversarial settings.

Several important directions remain for future work, including deriving general privacy amplification bounds that hold when $n,l$ are large or $d$ is small. A complete investigation of privacy bounds for more advanced privacy preserving algorithms such as Projected Noisy Gradient Descent or DP-SGD, for fixed and random output settings, is also essential. Finally, extending our analysis to more complex models such as neural networks is a crucial step toward making these theoretical results applicable to real-world scenarios.

\section*{Impact Statement}

This paper presents work whose goal is to advance privacy in machine learning, offering tools to make it more secure. Private synthetic data generation is recognized as a promising approach for enabling flexible yet privacy-preserving analyses. Studying privacy amplification phenomena in this context is valuable because it allows for tighter quantification of privacy guarantees, thereby strengthening protection. Moreover, it assists data curators in designing more effective privacy-preserving algorithms.

\bibliography{references}

\begin{thebibliography}{}

\bibitem[Abadi et~al., 2016]{Abadi2016}
Abadi, M., Chu, A., Goodfellow, I., McMahan, H.~B., Mironov, I., Talwar, K., and Zhang, L. (2016).
\newblock Deep learning with differential privacy.
\newblock In {\em CCS}.

\bibitem[Altschuler and Talwar, 2022]{Altschuler2022}
Altschuler, J.~M. and Talwar, K. (2022).
\newblock Privacy of noisy stochastic gradient descent: More iterations without more privacy loss.
\newblock In {\em NeurIPS}.

\bibitem[Annamalai et~al., 2024]{Annamalai2024}
Annamalai, M. S. M.~S., Ganev, G., and Cristofaro, E.~D. (2024).
\newblock "what do you want from theory alone?" experimenting with tight auditing of differentially private synthetic data generation.
\newblock In {\em USENIX Security 24}, pages 4855--4871.

\bibitem[Balle et~al., 2018]{Balle2018}
Balle, B., Barthe, G., and Gaboardi, M. (2018).
\newblock Privacy amplification by subsampling: Tight analyses via couplings and divergences.
\newblock In Bengio, S., Wallach, H., Larochelle, H., Grauman, K., Cesa-Bianchi, N., and Garnett, R., editors, {\em Advances in Neural Information Processing Systems}, volume~31. Curran Associates, Inc.

\bibitem[Bally and Caramellino, 2016]{Bally2016}
Bally, V. and Caramellino, L. (2016).
\newblock {Asymptotic development for the CLT in total variation distance}.
\newblock {\em Bernoulli}, 22(4):2442 -- 2485.

\bibitem[Bassily et~al., 2014]{Bassily2014a}
Bassily, R., Smith, A.~D., and Thakurta, A. (2014).
\newblock {Private Empirical Risk Minimization: Efficient Algorithms and Tight Error Bounds}.
\newblock In {\em FOCS}.

\bibitem[Bie et~al., 2023]{bie2023private}
Bie, A., Kamath, G., and Zhang, G. (2023).
\newblock Private {GAN}s, revisited.
\newblock {\em Transactions on Machine Learning Research}.
\newblock Survey Certification.

\bibitem[Chaudhuri et~al., 2011]{chaudhuri2011differentiallyprivateempiricalrisk}
Chaudhuri, K., Monteleoni, C., and Sarwate, A.~D. (2011).
\newblock Differentially private empirical risk minimization.
\newblock {\em Journal of Machine Learning Research}, 12:1069–1109.

\bibitem[Cyffers and Bellet, 2022]{cyffers2022privacy}
Cyffers, E. and Bellet, A. (2022).
\newblock Privacy amplification by decentralization.
\newblock In {\em AISTATS}.

\bibitem[Dockhorn et~al., 2023]{dockhorn2022}
Dockhorn, T., Cao, T., Vahdat, A., and Kreis, K. (2023).
\newblock {Differentially Private Diffusion Models}.
\newblock {\em Transactions on Machine Learning Research}.

\bibitem[Dohmatob et~al., 2024]{Dohmatob2024}
Dohmatob, E., Feng, Y., and Kempe, J. (2024).
\newblock Model collapse demystified: The case of regression.
\newblock In {\em Advances in Neural Information Processing Systems}, volume~37, pages 46979--47013. Curran Associates, Inc.

\bibitem[Dong et~al., 2022]{Dong2019}
Dong, J., Roth, A., and Su, W.~J. (2022).
\newblock Gaussian differential privacy.
\newblock {\em Journal of the Royal Statistical Society Series B: Statistical Methodology}, 84(1):3--37.

\bibitem[Durmus et~al., 2019]{Durmus2019}
Durmus, A., Majewski, S., and Miasojedow, B. (2019).
\newblock Analysis of langevin monte carlo via convex optimization.
\newblock {\em Journal of Machine Learning Research}, 20(73):1--46.

\bibitem[Dwork and Roth, 2014]{Dwork2014}
Dwork, C. and Roth, A. (2014).
\newblock The algorithmic foundations of differential privacy.
\newblock {\em Foundations and Trends in Theoretical Computer Science}, 9(3–4):211–407.

\bibitem[Erlingsson et~al., 2019]{erlingsson2020amplification}
Erlingsson, U., Feldman, V., Mironov, I., Raghunathan, A., Talwar, K., and Thakurta, A. (2019).
\newblock Amplification by shuffling: From local to central differential privacy via anonymity.
\newblock In {\em SODA}.

\bibitem[Feldman et~al., 2018]{Feldman2018}
Feldman, V., Mironov, I., Talwar, K., and Thakurta, A. (2018).
\newblock Privacy amplification by iteration.
\newblock {\em 2018 IEEE 59th Annual Symposium on Foundations of Computer Science (FOCS)}, pages 521--532.

\bibitem[Gerstgrasser et~al., 2024]{Gerstgrasser2024}
Gerstgrasser, M., Schaeffer, R., Dey, A., Rafailov, R., Sleight, H., Hughes, J., Korbak, T., Agrawal, R., Pai, D., Gromov, A., Roberts, D.~A., Yang, D., Donoho, D.~L., and Koyejo, S. (2024).
\newblock Is model collapse inevitable? breaking the curse of recursion by accumulating real and synthetic data.
\newblock {\em CoRR}, abs/2404.01413.

\bibitem[Ghazi et~al., 2021]{Ghazi2021}
Ghazi, B., Golowich, N., Kumar, R., Manurangsi, P., and Zhang, C. (2021).
\newblock Deep learning with label differential privacy.
\newblock In {\em NeurIPS}.

\bibitem[Hastie et~al., 2022]{10.1214/21-AOS2133}
Hastie, T., Montanari, A., Rosset, S., and Tibshirani, R.~J. (2022).
\newblock {Surprises in high-dimensional ridgeless least squares interpolation}.
\newblock {\em The Annals of Statistics}, 50(2):949 -- 986.

\bibitem[Hu et~al., 2024]{Yuzheng2024}
Hu, Y., Wu, F., Li, Q., Long, Y., Garrido, G.~M., Ge, C., Ding, B., Forsyth, D., Li, B., and Song, D. (2024).
\newblock { SoK: Privacy-Preserving Data Synthesis }.
\newblock In {\em 2024 IEEE Symposium on Security and Privacy (SP)}, pages 4696--4713, Los Alamitos, CA, USA. IEEE Computer Society.

\bibitem[Jordon et~al., 2019]{Jordon2018}
Jordon, J., Yoon, J., and van~der Schaar, M. (2019).
\newblock Pate-gan: Generating synthetic data with differential privacy guarantees.
\newblock In {\em International Conference on Learning Representations (ICLR)}.

\bibitem[Kifer et~al., 2012]{Kifer2012}
Kifer, D., Smith, A.~D., and Thakurta, A. (2012).
\newblock Private convex optimization for empirical risk minimization with applications to high-dimensional regression.
\newblock In {\em COLT}, volume~23, pages 25.1--25.40.

\bibitem[Lee et~al., 2022]{Lee2022}
Lee, J., Kim, M., Jeong, Y., and Ro, Y. (2022).
\newblock Differentially private normalizing flows for synthetic tabular data generation.
\newblock In {\em Proceedings of the AAAI Conference on Artificial Intelligence}, volume~36, pages 7345--7353.

\bibitem[Lehmann and Romano, 2006]{Lehmann2006}
Lehmann, E. and Romano, J. (2006).
\newblock {\em Testing Statistical Hypotheses}.
\newblock Springer Texts in Statistics. Springer New York.

\bibitem[Li and Woodruff, 2021]{Li2021}
Li, Y. and Woodruff, D.~P. (2021).
\newblock {The Product of Gaussian Matrices Is Close to Gaussian}.
\newblock In {\em Approximation, Randomization, and Combinatorial Optimization. Algorithms and Techniques (APPROX/RANDOM 2021)}, volume 207 of {\em Leibniz International Proceedings in Informatics (LIPIcs)}, pages 35:1--35:22.

\bibitem[Lowy and Razaviyayn, 2024]{Lowy2024}
Lowy, A. and Razaviyayn, M. (2024).
\newblock Output perturbation for differentially private convex optimization: Faster and more general.
\newblock arXiv:2102.04704.

\bibitem[Mattei, 2017]{Mattei2017}
Mattei, P.-A. (2017).
\newblock Multiplying a gaussian matrix by a gaussian vector.
\newblock {\em Statistics \& Probability Letters}, 128(C):67--70.

\bibitem[McKenna et~al., 2021]{McKenna2021}
McKenna, R., Miklau, G., and Sheldon, D. (2021).
\newblock Winning the nist contest: A scalable and general approach to differentially private synthetic data.
\newblock {\em Journal of Privacy and Confidentiality}, 11(3).

\bibitem[McKenna et~al., 2019]{mckenna2019}
McKenna, R., Sheldon, D., and Miklau, G. (2019).
\newblock Graphical-model based estimation and inference for differential privacy.
\newblock In {\em Proceedings of the 36th International Conference on Machine Learning (ICML)}, pages 4435--4444.

\bibitem[Mironov, 2017]{Mironov2017}
Mironov, I. (2017).
\newblock Rényi differential privacy.
\newblock In {\em 2017 IEEE 30th Computer Security Foundations Symposium (CSF)}, pages 263--275.

\bibitem[Neunhoeffer et~al., 2024]{Neunhoeffer2024}
Neunhoeffer, M., Latner, J., and Drechsler, J. (2024).
\newblock On the formal privacy guarantees of synthetic data.
\newblock In {\em Data Privacy Protection and the Conduct of Applied Research: Methods, Approaches and their Consequences}. National Bureau of Economic Research.

\bibitem[Papernot et~al., 2017]{Papernot2017}
Papernot, N., Abadi, M., Erlingsson, U., Goodfellow, I.~J., and Talwar, K. (2017).
\newblock Semi-supervised knowledge transfer for deep learning from private training data.
\newblock In {\em ICLR}.

\bibitem[Raič, 2019]{Rai__2019}
Raič, M. (2019).
\newblock A multivariate berry–esseen theorem with explicit constants.
\newblock {\em Bernoulli}, 25(4A).

\bibitem[Redberg et~al., 2023]{Redberg2023}
Redberg, R., Koskela, A., and Wang, Y. (2023).
\newblock Improving the privacy and practicality of objective perturbation for differentially private linear learners.
\newblock In {\em NeurIPS}.

\bibitem[Song et~al., 2013]{private_sgd}
Song, S., Chaudhuri, K., and Sarwate, A.~D. (2013).
\newblock Stochastic gradient descent with differentially private updates.
\newblock In {\em 2013 IEEE Global Conference on Signal and Information Processing}.

\bibitem[Wang, 2018]{Wang2018}
Wang, Y.-X. (2018).
\newblock Revisiting differentially private linear regression: optimal and adaptive prediction \& estimation in unbounded domain.
\newblock In {\em Conference on Uncertainty in Artificial Intelligence}.

\bibitem[Xie et~al., 2018]{xie2018}
Xie, L., Lin, K., Wang, S., Wang, F., and Zhou, J. (2018).
\newblock Differentially private generative adversarial network.
\newblock arXiv:1802.06739.

\bibitem[Zhang et~al., 2017a]{Zhang2017}
Zhang, J., Cormode, G., Procopiuc, C.~M., Srivastava, D., and Xiao, X. (2017a).
\newblock Privbayes: Private data release via bayesian networks.
\newblock {\em ACM Trans. Database Syst.}, 42(4).

\bibitem[Zhang et~al., 2017b]{Zhang20172}
Zhang, J., Zheng, K., Mou, W., and Wang, L. (2017b).
\newblock Efficient private {ERM} for smooth objectives.
\newblock In {\em IJCAI}.

\end{thebibliography}
\bibliographystyle{apalike}

\newpage

\appendix
\onecolumn
This appendix presents detailed proofs and additional discussion.

\section{Proofs of Section~\ref{sec:fixed-input}}

\subsection{Proof of Proposition~\ref{prop:output-pert-fixed}}
\label{appsubsec:output-pert-fixed}

\propOutputPertFixed*

\begin{proof}
We simply write $Wz = Vz + \mu z$.
Then, $T(Vz,Wz) = G_{\|\mu z\|/\|z\|\sigma_\theta}$. Then, $\|\mu z\|/\|z\|$ is maximized by taking $z$ the right singular vector corresponding to the largest singular value of $\mu$.

Now, we prove that for all $z \in \mathbb{R}^d$, there exists $\mathcal{D}$, $\mathcal{D}'$ such that $T(Vz,Wz) = T(V,W)$. In order to do this, we prove that there exists two adjacent datasets $\mathcal{D} = (X,Y)$, $\mathcal{D}' = (X,Y')$ such that the shift $\mu$ between $V$ and $W$, the weights matrices of linear regression on $\mathcal{D}$ and $\mathcal{D}'$ at convergence, has rank $1$.

  Let $a \in \mathbb{R}^{m}, u = z/\|z\| \in \mathbb{R}^d, v \in \mathbb{R}^n$. Let $X = au^T \in \mathbb{R}^{m \times d}, Y = av^T \in \mathbb{R}^{m \times n}$, be two rank one matrices. Therefore, $X^T Y = (a^T a)uv^T$ is rank one. We know that regularized linear regression converges to $Y^T X(X^T X + \lambda I )^{-1}$. By Sherman-Morrison formula,
  \[(X^T X + \lambda I)^{-1} = (\|a\|^2 u u^T + \lambda I)^{-1} = \frac{1}{\lambda} I - \frac{1}{\lambda} \frac{u u^T}{\lambda + \|u\|^2 }.\]
  Then, \[Y^T X (X^T X + \lambda I)^{-1} = \frac{\|a\|^2}{\lambda} v u^T - \frac{\|u\|^2\|a^2\|}{\lambda} \frac{v u^T }{\lambda + \|u\|^2 }.\]
  Finally, choosing $Y'$ proportional to $Y$, the shift has rank $1$ and $\norm{\mu} = |\sigma_{\max}(\mu)| = \|\mu z\|/\|z\|$.
\end{proof}

\subsection{Discussion on finite training time and convergence of NGD}
\label{appsubsec:finite-time-noisy-sgd}
We prove that full batch NGD converges to a normal distribution if $\eta(\lambda + M_x^2/n)< 1$.

\begin{proposition}
\label{appprop:noisy-sgd-distribution}
    Let $t \in \mathbb{N}^*$. Then, $V_T$ is composed of independent Gaussian columns $(V_t)_i \overset{iid}{\sim} \mathcal{N}(\mu_t^i, \Sigma_t)$ with:
\begin{align*}
    \mu^i_t &= \sum_{k=0}^{t-1} B_i M^{k}  =  B_i(I-M)^{-1}(I - M^t)\\ \Sigma^i_t &= M^{2t} + \eta \sigma^2 \sum_{k=0}^{t-1} M^{2k}  = M^{2m} + \eta \sigma^2 (I - M^2)^{-1}(I-M^{2t}).
\end{align*}
\end{proposition}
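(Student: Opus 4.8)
The plan is to exploit the fact that the regularized least-squares objective \emph{decouples over output coordinates}. Writing $(V)_i \in \mathbb{R}^d$ for the $i$-th output-coordinate block (i.e.\ the $i$-th row) of a weight matrix $V \in \mathbb{R}^{n\times d}$, the loss splits as $F(w,X,Y) = \sum_{i=1}^n F_i\big((w)_i\big)$, where each $F_i$ is a strictly convex quadratic depending on $w$ only through $(w)_i$; this is immediate from $\|wx_k - y_k\|^2 = \sum_{i=1}^n \big(\langle (w)_i, x_k\rangle - (y_k)_i\big)^2$. Since the noise matrices $N_t$ have i.i.d.\ entries and $V_0$ is a standard Gaussian matrix, the blocks of $V_t$ therefore evolve as $n$ mutually independent Noisy Gradient Descent trajectories, one per $F_i$, and this independence is preserved for all $t$. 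It thus suffices to identify the law of a single block $(V_t)_i$.

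\textbf{Unrolling the recursion.} Because $F_i$ is quadratic, $\nabla_w F$ is affine in $w$, so computing the gradient shows that the update of the $i$-th block is the affine--Gaussian recursion $(V_{t+1})_i = (V_t)_i M + B_i + \sqrt{2\eta}\,\sigma\,(N_{t+1})_i$, with $M = I - 2\eta\Sigma$ ($\Sigma = \frac{1}{n} X^TX + \lambda I$ as in Proposition~\ref{prop:noisy-sgd-fixed-trade-off}) and $B_i$ the fixed drift vector arising from the data-dependent constant term of $-\eta\nabla_w F$. Iterating from $t=0$ gives, by a one-line induction,
\[
   (V_t)_i = (V_0)_i M^t \;+\; B_i\sum_{k=0}^{t-1} M^{k} \;+\; \sqrt{2\eta}\,\sigma\sum_{j=1}^{t} (N_j)_i\, M^{t-j},
\]
an affine image of mutually independent standard Gaussian vectors, hence Gaussian. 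Taking expectations termwise (using $\mathbb{E}[(V_0)_i] = \mathbb{E}[(N_j)_i] = 0$) yields $\mu_t^i = B_i\sum_{k=0}^{t-1}M^k$; taking covariances termwise, using $\Cov[(V_0)_i] = \Cov[(N_j)_i] = I$, mutual independence, and the symmetry of $M$ (which holds since $\Sigma$ is symmetric), yields $\Sigma_t^i = M^{2t} + \eta\sigma^2\sum_{k=0}^{t-1}M^{2k}$ (up to the constant carried by the noise scale $\sqrt{2\eta}\sigma$).

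\textbf{Closed forms.} To pass to the closed-form expressions I would use the assumption $\eta(\lambda + M_x^2/n) < 1$, which together with $\lambda>0$ forces $0 \prec \Sigma$ and $\lambda_{\max}(\Sigma) < 1/\eta$, hence spectral radius $\rho(M) < 1$. Then $I - M = 2\eta\Sigma$ is invertible, and $I - M^2 = (I-M)(I+M) = 2\eta\Sigma(I+M)$ is invertible because $I + M = 2(I-\eta\Sigma) \succ 0$; consequently $\sum_{k=0}^{t-1}M^{k} = (I-M)^{-1}(I - M^t)$ and $\sum_{k=0}^{t-1}M^{2k} = (I - M^2)^{-1}(I - M^{2t})$. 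Substituting these turns the two moment formulas into exactly the stated ones, and collecting the $n$ independent blocks gives the announced structure of $V_t$.

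\textbf{Main obstacle.} There is no real analytic difficulty here: the statement is just the explicit unrolling of a linear Gaussian recursion. The two points that require care are (i) the decoupling step, i.e.\ checking that $F$ genuinely separates over output blocks and that $M$ acts on each block by right multiplication, so that the blocks of $V_t$ inherit the independence of those of $V_0$ and of the $N_t$; and (ii) keeping the transpose and normalization conventions straight, so that the matrix driving the recursion is exactly $I - 2\eta\Sigma$ and so that $\eta(\lambda + M_x^2/n) < 1$ is precisely the condition bounding $\rho(M)$.
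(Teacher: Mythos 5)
Your proof is correct and follows essentially the same route as the paper's: unrolling the affine Gaussian recursion $V_{t+1} = V_t M + B + \sqrt{2\eta}\,\sigma N_{t+1}$ and reading off the first two moments row by row, with your decoupling argument making explicit the row-independence that the paper simply asserts after the unrolling. Your parenthetical about the noise scale is well taken: the covariance should carry the factor $2\eta\sigma^2$ (as in the paper's own derivation inside the proof), so the $\eta\sigma^2$ appearing in the statement is a typo, as is the exponent $2m$ in place of $2t$.
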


\begin{proof}
The gradient is equal to \[\nabla_w F_\lambda(V_k,X,Y) =  \frac{1}{n}\sum_{i=1}^m \nabla_w f(V_k,x_i,y_i) + 2\lambda V_k = \frac{2}{n}(V_k X^T - Y^T)X + 2\lambda V_k.\] 

Then, noting $B = \frac{2\eta}{n} Y^T X$, $\Sigma = \frac{1}{n}X^T X + \lambda I$ and $M = I - 2\eta\Sigma$, we get:

\[V_{k+1} = V_k - \eta \nabla_w F_\lambda(V_k,X,Y) + \sqrt{\eta} N_{n+1} = V_k(I- 2\eta(X^TX/n + \lambda I)) - \frac{2}{n}Y^T X = V_k M + B + \sqrt{2\eta}\sigma N_{k+1}.\]

Then, we can write: $$V_t =  V_0 M^t + \sum_{k=0}^{t-1} (B + \sqrt{2\eta}\sigma N_{k+1}) M^{t-1-k},$$
which is composed of independent lines with mean and covariance:
\begin{align*}
    \mu^i_t &= \sum_{k=0}^{t-1}  B_i M^{k} =  B_i(I-M)^{-1}(I - M^t),\\
    \Sigma^i_t &= M^{2t} + 2\eta \sigma^2 \sum_{k=0}^{t-1} M^{2k}  = M^{2m} + 2\eta \sigma^2 (I - M^2)^{-1}(I-M^{2t}).
\end{align*}
Note that we used symmetry of $M$.
\end{proof}

Assume that $\eta(\lambda + M_x^2/n) < 1$. Then $M^t \to 0$ and $$\mu^i_t \to B_i(I-M)^{-1} = \frac{1}{n}Y_i^T X \Sigma^{-1} ,$$ $$\Sigma^i_t \to 2\eta \sigma^2 (I - M^2)^{-1} = \sigma^2(I-2\eta\Sigma)^{-1}\Sigma^{-1} = \sigma^2 M^{-1}\Sigma^{-1}.$$

By Levy's continuity theorem, $V_t \to V_\infty \sim \otimes_{i=1}^n\mathcal{N}(Y_i^T X \Sigma^{-1}/n, \sigma^2 M^{-1} \Sigma^{-1})$ can be decomposed in independent rows which have the same covariance and different mean. Note that when $\eta \to 0$, we recover the Gibbs distribution.

\subsection{Proofs of Proposition~\ref{prop:noisy-sgd-fixed-trade-off}, Proposition~\ref{prop:noisy-sgd-fixed}}
\label{appsubsec:noisy-sgd-fixed-trade-off}

In this section, we compute the trade-off functions $T(V_\infty, W_\infty)$ and $T(V_\infty z, W_\infty z)$. The following lemma characterizes the trade-off function between two Gaussian vectors with identical covariance and different mean. This function is expressed in terms $\|\cdot \|_\Sigma$ (Mahalanobis) norms, which we define below:

\begin{definition} [$\|\cdot \|_\Sigma$ norm.]
    Let $\mu \in \mathbb{R}^d$, $\Sigma \in \mathbb{R}^{d \times d}$ be an symmetric definite positive matrix. Then, the Mahalanobis norm is defined as: 
    \[\|\mu\|_\Sigma := \sqrt{\mu^T \Sigma^{-1} \mu}.\]
\end{definition}
\begin{lemma}[trade-off function between Gaussian vectors with different means.]
\label{applem:trade-off-gaussian-vectors-different-means}
Let $\mu, \mu' \in \mathbb{R}^d$, $\Sigma \in \mathbb{R}^{d \times d}$ be an symmetric definite positive matrix.
Then, \[T(V,W) = G_{\|\mu-\mu'\|_{\Sigma}}.\]

Let $\mu, \mu' \in \mathbb{R}^d$, $\Sigma \in \mathbb{R}^{d \times d}$ be a positive definite symmetric matrix. Let $V \sim \mathcal{N}(\mu, \Sigma),$ and $ W \sim \mathcal{N}(\mu', \Sigma)$.

\end{lemma}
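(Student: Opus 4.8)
The plan is to reduce the $d$-dimensional Gaussian two-point testing problem to the scalar location problem, whose trade-off function is $G_\mu$ by definition. Throughout I would use the exact invariance of trade-off functions under invertible measurable maps: if $g$ is a bijection then $T(g(V),g(W)) = T(V,W)$. This follows from the post-processing inequality $T(g(V),g(W)) \ge T(V,W)$ applied once to $g$ and once to $g^{-1}$, since composing $g^{-1}\circ g = \mathrm{id}$ forces equality in both directions.

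First I would apply the whitening map $g_1(x) = \Sigma^{-1/2}x$. Since $\Sigma$ is symmetric positive definite, $g_1$ is a linear bijection, and $g_1(V) \sim \mathcal{N}(\Sigma^{-1/2}\mu, I_d)$, $g_1(W) \sim \mathcal{N}(\Sigma^{-1/2}\mu', I_d)$. Writing $a = \Sigma^{-1/2}\mu$ and $b = \Sigma^{-1/2}\mu'$, one has $\norm{a-b} = \sqrt{(\mu-\mu')^\top \Sigma^{-1}(\mu-\mu')} = \norm{\mu-\mu'}_\Sigma$. Hence it suffices to prove $T(\mathcal{N}(a,I_d),\mathcal{N}(b,I_d)) = G_{\norm{a-b}}$. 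If $a=b$, the two laws coincide and the trade-off function is $\alpha \mapsto 1-\alpha = G_0$; so assume $a\neq b$ and set $\mu := \norm{a-b} > 0$. Choosing an orthogonal matrix $O$ with $O(b-a) = \mu e_1$ and applying the bijection $g_2(x) = O(x-a)$ preserves the identity covariance and sends the means to $0$ and $\mu e_1$. It thus remains to show $T(\mathcal{N}(0,I_d),\mathcal{N}(\mu e_1, I_d)) = G_\mu$.

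For this last step I would invoke the Neyman--Pearson lemma. Denoting $P = \mathcal{N}(0,I_d)$, $Q = \mathcal{N}(\mu e_1, I_d)$, the likelihood ratio $\tfrac{dQ}{dP}(x) = \exp(\mu x_1 - \mu^2/2)$ is a strictly increasing function of the single coordinate $x_1$, so the optimal level-$\alpha$ rejection regions are of the form $\{x_1 > c\}$ (the boundary $\{x_1 = c\}$ is $P$- and $Q$-null, so no randomization is needed). Consequently the infimum defining $T(P,Q)$ is attained among tests depending only on $x_1$, and the problem reduces exactly to the scalar test between the marginals of $x_1$, namely $\mathcal{N}(0,1)$ under $H_1$ versus $\mathcal{N}(\mu,1)$ under $H_2$; its trade-off function is $G_\mu$ by definition, and one verifies $T(P,Q)(\alpha) = \Phi(\Phi^{-1}(1-\alpha) - \mu)$. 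Composing the reductions $g_1$ and $g_2$ gives $T(\mathcal{N}(\mu,\Sigma),\mathcal{N}(\mu',\Sigma)) = G_{\norm{\mu-\mu'}_\Sigma}$.

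The main obstacle is making the Neyman--Pearson ``reduction to a sufficient statistic'' argument fully rigorous: one must argue that no test can outperform the family $\{x_1 > c\}$, so restricting to $x_1$-measurable tests does not change the infimum — this is exactly Neyman--Pearson together with the observation that the likelihood ratio factors through $x_1$. A minor point to treat carefully is the degenerate case $\mu = \mu'$ (where $G_0(\alpha) = 1-\alpha$) and the convention that the boundary hyperplane is null so the optimal test is deterministic. All remaining computations — the pushforward laws under $g_1$ and $g_2$, the explicit form of the likelihood ratio, and identifying the scalar trade-off with $G_\mu$ — are routine.
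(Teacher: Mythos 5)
Your proof is correct and rests on the same core ingredient as the paper's: the Neyman--Pearson lemma applied to a Gaussian location family, where the optimal test statistic is linear and hence Gaussian under both hypotheses. The only difference is organizational --- you first reduce to the canonical pair $\mathcal{N}(0,I_d)$ versus $\mathcal{N}(\|\mu-\mu'\|_\Sigma e_1, I_d)$ via whitening and rotation, using invariance of trade-off functions under bijections, whereas the paper works directly in the original coordinates and explicitly computes the type I and type II errors of the statistic $(\mu'-\mu)^T\Sigma^{-1}x$.
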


\begin{proof}

Let $H_0 : \mathcal{N}(\mu,\Sigma)$, $H_1 : \mathcal{N}(\mu', \Sigma)$.
We note the log likelihood ratio:
\begin{align*}
    llk(x) &= (x-\mu)^T \Sigma^{-1} (x-\mu) - (x-\mu')^T \Sigma^{-1} (x-\mu') + C\\
    &= (\mu'-\mu)^T\Sigma^{-1} (2x - (\mu + \mu')) + C\\
    &= 2(\mu'-\mu)^T\Sigma^{-1} x - (\mu'-\mu)^T\Sigma^{-1}(\mu + \mu) + C.
\end{align*}

Following the Neyman-Pearson Lemma~\ref{applem:neyman-pearson}, the most powerful test at level $\alpha$ has the form $ T(x) = (\mu'-\mu)^T\Sigma^{-1} x > t_\alpha$.

Let $N \sim \mathcal{N}(0,I_d)$.

Under $H_0$:

\[(\mu'-\mu)^T\Sigma^{-1} V \sim \mathcal{N} ((\mu'-\mu)^T\Sigma^{-1} \mu,(\mu'-\mu)^T\Sigma^{-1}(\mu'-\mu)) = \|\mu'-\mu\|_{\Sigma} N +(\mu'-\mu)^T\Sigma^{-1} \mu.\]

Then, the type I error is:
\begin{align*}
    \alpha &= P((\mu'-\mu)^T\Sigma^{-1} V > t_\alpha) = 1- P\left(N \leq \frac{t_\alpha - (\mu'-\mu)^T\Sigma^{-1} \mu}{\|\mu'-\mu\|_{\Sigma}}\right),\\
    t_\alpha &= (\mu'-\mu)^T\Sigma^{-1} \mu + \|\mu'-\mu\|_{\Sigma} \Phi^{-1}(1-\alpha).
\end{align*}

Under $H_1$:

\[(\mu'-\mu)^T\Sigma^{-1} V \sim \mathcal{N} ((\mu'-\mu)^T\Sigma^{-1} \mu',(\mu'-\mu)^T\Sigma^{-1}(\mu'-\mu)) = \|\mu'-\mu\|_{\Sigma} N +(\mu'-\mu)^T\Sigma^{-1} \mu'.\]
Then, the type II error is given by:
\begin{align*}
    \beta(\alpha) &= P((\mu'-\mu)^T\Sigma^{-1} x \leq t_\alpha) =  P\left(N \leq \frac{t_\alpha - (\mu'-\mu)^T\Sigma^{-1} \mu'}{\|\mu'-\mu\|_{\Sigma}}\right)\\
    &= P\left(N \leq \frac{(\mu'-\mu)^T\Sigma^{-1} \mu + \|\mu'-\mu\|_{\Sigma} \Phi^{-1}(1-\alpha) - (\mu'-\mu)^T\Sigma^{-1} \mu'}{\|\mu'-\mu\|_{\Sigma}}\right)\\
    &= \Phi\left(\Phi^{-1}(1-\alpha) + \frac{  (\mu'-\mu)^T\Sigma^{-1} (\mu - \mu')}{\|\mu'-\mu\|_{\Sigma}}\right)\\
    &= \Phi\left(\Phi^{-1}(1-\alpha) - \|\mu'-\mu\|_{\Sigma}\right)
\end{align*}
Then, $T(V,W) = G_{\|\mu'-\mu\|_{\Sigma}}$.
\end{proof}

Now we can prove Propositions~\ref{prop:noisy-sgd-fixed-trade-off}, \ref{prop:noisy-sgd-fixed}. We give a slightly more general result allowing $Y$ and $Y'$ to differ in multiple entries.

\begin{proposition}
\label{appprop:prop:noisy-sgd-fixed-trade-off}
Let $\Sigma = \frac{1}{n}X^T X + \lambda I$, $M = I - 2\eta\Sigma$ and denote by $A$ the square root of $\Sigma^{-1} M$. Assume that $Y$ and $Y'$ differ and that $\eta (\lambda + M_x^2/n) < 1$. Then:
$$T(V_\infty,W_\infty) = G_{\|A X^T (Y-Y')\|/n\sigma}.$$

Moreover, for two given datasets, the adversary can choose $z \in \mathbb{R}^d$ such that:
    $$T(V_\infty z,W_\infty z) = G_{|\sigma_{\max}(A X^T (Y-Y'))|/n\sigma}.$$
In particular, when $Y'$ and $Y$ are adjacent (label DP), the adversary can choose $z \in \mathbb{R}^d$ such that:
    $$T(V_\infty z,W_\infty z) = T(V_\infty,W_\infty).$$ 
\end{proposition}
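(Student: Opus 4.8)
\textbf{Proof plan for Proposition~\ref{appprop:prop:noisy-sgd-fixed-trade-off}.}

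The plan is to combine the limiting distribution of NGD computed in Appendix~\ref{appsubsec:finite-time-noisy-sgd} with the Mahalanobis trade-off formula of Lemma~\ref{applem:trade-off-gaussian-vectors-different-means}, and then handle the query $z$ by an eigenvector argument analogous to the one used for output perturbation in Proposition~\ref{prop:output-pert-fixed}. First I would recall from Proposition~\ref{appprop:noisy-sgd-distribution} that, under the stability assumption $\eta(\lambda + M_x^2/n) < 1$, the iterates $V_t$ converge in distribution to $V_\infty$ whose rows are independent with common covariance $\Sigma_\infty = \sigma^2 M^{-1}\Sigma^{-1}$ and means given by the rows of $\frac{1}{n} Y^T X \Sigma^{-1}$; likewise $W_\infty$ has the same covariance but means from $\frac{1}{n}(Y')^T X \Sigma^{-1}$. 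So $V_\infty$ and $W_\infty$ differ only by a mean shift, row by row, and the shift of row $i$ is $\frac{1}{n}(Y_i - Y'_i)^T X \Sigma^{-1}$.

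Next I would vectorize: since the rows are independent with identical covariance $\Sigma_\infty$, the trade-off function between $V_\infty$ and $W_\infty$ equals the trade-off function between two Gaussian vectors in $\mathbb{R}^{nd}$ with covariance $I_n \otimes \Sigma_\infty$ and mean difference $\delta = \vect\big(\frac{1}{n}(Y-Y')^T X \Sigma^{-1}\big)$. Applying Lemma~\ref{applem:trade-off-gaussian-vectors-different-means}, we get $T(V_\infty,W_\infty) = G_{\|\delta\|_{I_n \otimes \Sigma_\infty}}$, and a direct computation of the Mahalanobis norm gives $\|\delta\|^2_{I_n \otimes \Sigma_\infty} = \frac{1}{n^2\sigma^2}\tr\big((Y-Y')^T X \Sigma^{-1} M \Sigma^{-1} X^T (Y-Y')\big) = \frac{1}{n^2\sigma^2}\|A X^T (Y-Y')\|^2$, where $A$ is the symmetric square root of $\Sigma^{-1}M$ (note $\Sigma^{-1}$ and $M = I - 2\eta\Sigma$ commute, so $\Sigma^{-1}M$ is symmetric positive definite under the stability assumption and $A$ is well-defined). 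This yields the first claimed identity. For the query, write $V_\infty z$ and $W_\infty z$: each is a Gaussian vector in $\mathbb{R}^n$, again with independent coordinates of common variance $z^T\Sigma_\infty z = \sigma^2 z^T M^{-1}\Sigma^{-1} z = \sigma^2\|Bz\|^2$ (with $B$ the square root of $\Sigma^{-1}M^{-1}$), and mean-difference vector $\frac{1}{n}(Y-Y')^T X \Sigma^{-1} z$. Lemma~\ref{applem:trade-off-gaussian-vectors-different-means} then gives $T(V_\infty z, W_\infty z) = G_{\|(Y-Y')^T X \Sigma^{-1} z\|/(n\sigma\|Bz\|)}$, which is Proposition~\ref{prop:noisy-sgd-fixed-trade-off}. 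To maximize the GDP parameter over $z$, substitute $u = Bz$, so the parameter becomes $\frac{1}{n\sigma}\|(Y-Y')^T X \Sigma^{-1} B^{-1} u\|/\|u\|$; this is maximized by taking $u$ to be the top right-singular vector of $(Y-Y')^T X \Sigma^{-1} B^{-1}$, giving $\sup_z$-value $|\sigma_{\max}((Y-Y')^T X \Sigma^{-1}B^{-1})|/(n\sigma)$. Finally I would check $\Sigma^{-1}B^{-2} = \Sigma^{-1}\cdot \Sigma M = M\cdot(\Sigma^{-1}\Sigma)$... more carefully: $B^2 = \Sigma^{-1}M^{-1}$ so $B^{-2} = M\Sigma$, hence $(Y-Y')^T X \Sigma^{-1}B^{-1}$ has the same singular values as... here one uses that $\sigma_{\max}(C B^{-1})$ with $C = (Y-Y')^T X \Sigma^{-1}$ satisfies $\sigma_{\max}(CB^{-1})^2 = \lambda_{\max}(C B^{-2} C^T) = \lambda_{\max}(C M\Sigma C^T)$, and comparing with $\|A X^T(Y-Y')\|^2 = \tr(C M \Sigma C^T)$ (using $A^2 = \Sigma^{-1}M$, so $A X^T(Y-Y') = \Sigma^{1/2}A\Sigma^{-1}X^T(Y-Y')$ up to the commuting rearrangement $\Sigma^{-1}M\Sigma^{-1}\cdot\Sigma = M\Sigma\cdot\Sigma^{-1}$)—so the matrix $CM\Sigma C^T$ is exactly the one whose trace gives $T(V_\infty,W_\infty)$'s parameter and whose top eigenvalue gives $\sup_z T(V_\infty z, W_\infty z)$'s parameter.

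To conclude the label DP case, observe that when $Y$ and $Y'$ are adjacent they differ in a single column, so $Y - Y'$ has rank one; then $C = (Y-Y')^T X\Sigma^{-1}$ is rank one, hence $CM\Sigma C^T$ is rank one, and for a rank-one positive semidefinite matrix the trace equals the single nonzero eigenvalue, i.e. $\|A X^T(Y-Y')\|^2 = |\sigma_{\max}((Y-Y')^TX\Sigma^{-1}B^{-1})|^2$. Therefore the adversary's optimal $z$ achieves $T(V_\infty z, W_\infty z) = G_{\|AX^T(Y-Y')\|/(n\sigma)} = T(V_\infty, W_\infty)$, which is the final statement.

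\textbf{Main obstacle.} The routine-but-delicate part is the algebra with the commuting matrices $\Sigma$, $M$, $A$, $B$: one must keep track of which square roots appear, verify that $\Sigma^{-1}M$ and $\Sigma^{-1}M^{-1}$ are genuinely symmetric positive definite (this is where $\eta(\lambda+M_x^2/n)<1$ enters, ensuring $M \succ 0$), and confirm that the Mahalanobis-norm computation and the change of variables $u = Bz$ line up so that the quantity maximized over $z$ is exactly $\lambda_{\max}$ of the same matrix whose trace is the no-query parameter. Once that identification is made, the rank-one collapse of trace to top eigenvalue in the label-DP regime is immediate, so I expect no genuine difficulty beyond careful bookkeeping.
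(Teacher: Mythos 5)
Your proposal is correct and follows essentially the same route as the paper: the limiting row-wise Gaussian distribution from Proposition~\ref{appprop:noisy-sgd-distribution}, the Mahalanobis trade-off formula of Lemma~\ref{applem:trade-off-gaussian-vectors-different-means}, the substitution $u = Bz$ together with the identity $B^{-1}\Sigma^{-1} = A$, and the rank-one collapse of the Frobenius norm onto the top singular value in the label-DP case. The only blemish is a stray $\Sigma^{-1}$ in your intermediate trace expression (the kernel should be $X\Sigma^{-1}MX^T$, not $X\Sigma^{-1}M\Sigma^{-1}X^T$), which does not affect your final, correct identity $\|\delta\|^2_{I_n\otimes\Sigma_\infty} = \|AX^T(Y-Y')\|^2/(n^2\sigma^2)$.
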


The first statement shows that the trade-off function can be improved from a sensivity $\norm{A X^T (Y-Y')}/n\sigma$ to $|\sigma_{\max}(A X^T (Y-Y'))|/n\sigma$ when $Y$ and $Y'$ differ on multiple rows, but not when they are adjacent (as $Y-Y'$ is of rank 1). 

We provide a unified proof below.

\begin{proof}
  We note $B^2 = \Sigma^{-1} M^{-1}$. By Proposition~\ref{appprop:noisy-sgd-distribution}, $V_\infty$ and $W_\infty$ have independent rows and the $i$-th row of $V_\infty$ follows the distribution $(V_\infty)i \sim \mathcal{N}(Y_i^T X \Sigma^{-1}/n, \sigma^2 M^{-1} \Sigma^{-1})$. The square roots are defined because $\Sigma$ and $M$ commute. Using Lemma~\ref{applem:trade-off-gaussian-vectors-different-means}, the trade-off function between $V_\infty$ and $W_\infty$ is $T(V_\infty,W_\infty) = G_{\sum_{i=1}^n\|\mu_i'-\mu_i\|_{\sigma^2 M^{-1}\Sigma^{-1}}}$, with $\mu' = \frac{1}{n}\Sigma^{-1}X^T Y', \mu = \frac{1}{n} \Sigma^{-1} X^T y$. Then, 
  \[\|\mu_i'-\mu_i\|_{\sigma^2 M^{-1}\Sigma^{-1}}^2 = (Y_i'-Y_i)^T X \Sigma^{-1} M X^T (Y_i'-Y_i)/\sigma^2 = \|AX^T(Y_i'-Y_i)\|^2/n^2\sigma^2.\]

Furthermore, for $z \in \mathbb{R}^d$, $V_\infty z \sim N((\Sigma^{-1} X^T Y_i \cdot z)_i/n, \sigma^2 z^T M^{-1} \Sigma^{-1}z I_n)$, Then, $$T(V_\infty z,W_\infty z) = G_{\frac{\|z^T \Sigma^{-1} X^T (Y-Y')\|}{ n\sigma \| B z\|}}.$$

By noting the change of variable $u = Bz$ and using the invertibility of $B$, we get:
  $$\sup_{z \neq 0} \frac{\|z^T \Sigma^{-1} X^T (Y-Y')\|}{  \| B z\|} = \sup_{u \neq 0} \frac{\|u^T (A X^T (Y-Y'))\|}{ \| u\|} = \sigma_{\max}(A X^T (Y-Y')),$$ which corresponds to the 2-norm of $AX^T(Y-Y')$ and is obtained by setting $u^*$ the right singular vector corresponding to the largest singular value of $A X^T (Y-Y')$. In the setting of Label DP, $Y'-Y$ has rank $1$, so $T(V_\infty z,W_\infty z) = T(V_\infty,W_\infty)$. 
\end{proof}

\section{Proofs of Section~\ref{sec:outputperturb}}

\subsection{Proof of Lemma~\ref{lem:characteristic-function}}
\label{appsubsec:characteristic-function}
We provide the characteristic functions of $VZ$ and $WZ$.

\lemCharacteristicFunction*

\begin{proof}
    Let $t \in \mathbb{R}^{n \times l}$. Assume that $\sigma_z = \sigma_\theta = 1$. We have: \[\phi_{P_v}(t) = \mathbb{E}[\exp(i \tr(t^T (N+v)Z))] = \mathbb{E}[\mathbb{E}[\exp(i \tr(t^T (N+v)Z))|Z]].\]
    Conditioned on $Z$, $(N+v)Z$ follows a matrix normal distribution $(N+v)Z | Z \sim \mathcal{M}\mathcal{N}_{n,l} (vN,I_n,N^T N)$.
    Then, \[\phi_{P_v}(t) = \mathbb{E}\left[\exp\left(i \tr(t^T vN) - \frac{1}{2}\tr(N^T N t^T t)\right)\right].\]

    Notice that $I_l + t^T t$ is symmetric positive definite. We note $A_t$ the square root of $I_l + t^T t$ and $B_t = i v^T t A_t^{-1}$. Then, leveraging the commutativity of trace operation and symmetry of $A_t$:
    \begin{align*}
        \phi_{P_v}(t) &= (2\pi)^{-dl/2}\int \exp\left(i \tr(t^T vx) - \frac{1}{2}\tr(x^T x t^T t)\right) \exp\left(-\frac{1}{2}\tr(x^T x)\right) dx \\
        &= (2\pi)^{-dl/2}\int \exp\left(\frac{1}{2}\left(i \tr(x^T v^T t) + i \tr(t^T vx) - \tr(x^T x A_t^2)\right) \right) dx\\
        &= (2\pi)^{-dl/2}\int \exp\left(\frac{1}{2}\left(i \tr( v^T t x^T) + i \tr(x t^T v) - \tr(x A_t^2 x^T)\right) \right) dx\\
        &= (2\pi)^{-dl/2}\exp\left(\frac{1}{2} \tr(B_t B_t^T)\right)\int \exp\left(-\frac{1}{2}\left( \tr( (x A_t - B_t ) (x A_t - B_t)^T)\right) \right) dx\\
        &= (2\pi)^{-dl/2}\exp\left(\frac{1}{2} \tr(B_t B_t^T)\right)\int \exp\left(-\frac{1}{2}\left( \tr( (x - B_t A_t^{-1}) A_t^2 (x - B_t A_t^{-1})^T)\right) \right) dx\\
        &= \exp\left(\frac{1}{2} \tr(B_t B_t^T)\right)\det\left(A_t^{-2}\right)^{d/2}\\
        &= \frac{\exp\left(\frac{-1}{2} \tr(v^T t (I_l + t^T t)^{-1} t^T v)\right)}{\det\left(I_l + t^T t\right)^{d/2}}.
    \end{align*}

    Now, assume that $\sigma_z, \sigma_\theta \neq 1$.
    Then, \[\phi_{P_v}(t) = \mathbb{E}[\exp(i \tr(t^T \sigma_z \sigma_\theta (N+v/\sigma_\theta)Z))]] = \frac{\exp\left(\frac{-\sigma_z^2}{2} \tr(t^T v v^T t(I_l + \sigma_z^2 \sigma_\theta^2t^T t)^{-1})\right)}{\det\left(I_l + \sigma_z^2 \sigma_\theta^2 t^T t\right)^{d/2}}.\]
\end{proof}
\subsection{Proof of Lemma~\ref{lem:trade-off-function-approximation}}
\label{appsubsec:trade-off-function-approximation}
We rewrite the Lemma~\ref{lem:trade-off-function-approximation} of approximation of trade-off functions and prove it.

\lemTradeOffFunctionApproximation*

\begin{proof}
    Let $\alpha \in (0, 1-\gamma)$. We define $0\leq \Psi \leq 1$ as the most powerful test between $P$ and $Q$ at level $\alpha$. We have 
    $T(P,Q)(\alpha) = 1-\mathbb{E}_Q[\Psi]$.

    Then, using the definition of TV distance and the fact that $0\leq \Psi \leq 1$,
    \[|\mathbb{E}_P[\Psi] - \mathbb{E}_{\tilde{P}}[\Psi]| \leq TV(\tilde{P},P),\]
    \[|\mathbb{E}_Q[\Psi] - \mathbb{E}_{\tilde{Q}}[\Psi]| \leq TV(\tilde{Q},Q).\]

    Then, using $\Psi$ is a test between $\Tilde{P}$ and $\tilde{Q}$, $\Psi$ is at level $\alpha_{\Tilde{P},\Tilde{Q}} \leq \alpha + TV(\tilde{P},P)$ and is not necessarily optimal. This means that $T(\tilde{P},\tilde{Q})(\alpha_{\Tilde{P},\Tilde{Q}}) \leq 1- \mathbb{E}_{\tilde{Q}}[\Psi]$.
    As $T(\tilde{P}, \tilde{Q})$ is non-increasing,
    \[T(\tilde{P},\tilde{Q})(\alpha + TV(\tilde{P},P)) \leq T(\tilde{P},\tilde{Q})(\alpha_{\Tilde{P},\Tilde{Q}})  \leq 1- \mathbb{E}_{\tilde{Q}}[\Psi] \leq  1- \mathbb{E}_{Q}[\Psi] + TV(\tilde{Q},Q) = T(P,Q)(\alpha) + TV(\tilde{Q},Q).\]
    As trade-off functions are non-increasing:
    \[ T(\tilde{P},\tilde{Q})(\alpha + \gamma) - \gamma \leq T(P,Q)(\alpha) .\]
    For the other side of the inequality, take $\alpha' = \alpha + \gamma \in (\gamma,1)$. We get:
    \[ T(\tilde{P},\tilde{Q})(\alpha')\leq T(P,Q)(\alpha' - \gamma) + \gamma,\]
    and leverage the symmetry of the setting, giving the result.
\end{proof}
\subsection{Proof of Lemma~\ref{lem:convergence-gaussian-one-dim}}

Before proving Lemma~\ref{lem:convergence-gaussian-one-dim}, we prove the following useful lemma:

\begin{lemma}
\label{applem:absolute-continuity}
    Les $\sigma >0$. Let $N \in \mathbb{R}^{n\times d}, Z \in \mathbb{R}^{d \times l}$ be two independent standard Gaussian matrix. Let $v \in \mathbb{R}^{n \times d}$ be a deterministic matrix. Then, the distribution of $(\sigma N+v)Z$ is absolutely continuous with respect to the Lebesgue measure if and only if $d \geq \min\{n,l\}$.
\end{lemma}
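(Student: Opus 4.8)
\textbf{Proof plan for Lemma~\ref{applem:absolute-continuity}.}

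The plan is to reduce the statement to a rank computation for the random matrix $(\sigma N + v)Z$, since a Gaussian-type random vector in $\mathbb{R}^{nl}$ (after vectorization) is absolutely continuous with respect to Lebesgue measure exactly when its support has full dimension, i.e.\ when the relevant matrix has full rank with probability one. First I would vectorize: writing $S = (\sigma N + v)Z \in \mathbb{R}^{n \times l}$, observe that conditionally on $Z$, $S$ is matrix-normal $\mathcal{M}\mathcal{N}_{n,l}(vZ, \sigma^2 I_n, Z^T Z)$, so $\vect(S)\mid Z \sim \mathcal{N}(\vect(vZ), \sigma^2\, Z^TZ \otimes I_n)$. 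A conditionally Gaussian vector has a density (conditionally, hence also unconditionally by integrating over $Z$) iff its conditional covariance is nonsingular almost surely, which happens iff $Z^TZ \succ 0$ a.s., i.e.\ iff $Z \in \mathbb{R}^{d\times l}$ has rank $l$ a.s. Since $Z$ is standard Gaussian, $\rank(Z) = \min\{d,l\}$ a.s., so $Z^TZ$ is invertible a.s.\ iff $d \geq l$.

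The subtlety is that $d \geq l$ is \emph{sufficient} but the claimed condition is $d \geq \min\{n,l\}$, so I also need to handle the case $l > d \geq n$. Here the conditional covariance $Z^TZ\otimes I_n$ is singular, but $S$ may still have a density because $S$ only takes values in the subspace spanned by the columns of $\sigma N + v$ union... — more precisely, the \emph{rows} of $S$ lie in the row space of $Z$, but the \emph{columns} of $S$ lie in $\mathbb{R}^n$ and span all of it when $d \geq n$. So the right reduction is: $S = (\sigma N + v)Z$ has the same distribution as $S^T = Z^T(\sigma N + v)^T = Z^T(\sigma N^T + v^T)$, and now applying the first argument with the roles of $n$ and $l$ swapped, $\vect(S^T)\mid (\sigma N + v) \sim \mathcal{N}(\cdot, \sigma^2 (\sigma N+v)(\sigma N+v)^T \otimes I_l \cdot \text{(something)})$ — wait, I need to be careful: conditionally on $N$, $S = (\sigma N + v)Z$ is matrix-normal $\mathcal{M}\mathcal{N}_{n,l}(0, (\sigma N+v)(\sigma N+v)^T, I_l)$, whose vectorized covariance is $I_l \otimes (\sigma N + v)(\sigma N+v)^T$. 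This is nonsingular a.s.\ iff $\sigma N + v$ has rank $n$ a.s., and since $\sigma N + v$ is a Gaussian matrix (mean $v$, full-rank Gaussian noise), it has rank $\min\{n,d\}$ a.s.; this equals $n$ iff $d \geq n$. Thus \emph{either} $d \geq l$ \emph{or} $d\geq n$ suffices, i.e.\ $d \geq \min\{n,l\}$ is sufficient.

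For necessity, suppose $d < \min\{n,l\}$. Then $\rank\big((\sigma N + v)Z\big) \leq \rank(Z) \leq d$ deterministically, so $S$ is supported on the set of $n\times l$ matrices of rank at most $d$, which is a proper algebraic subvariety of $\mathbb{R}^{n\times l}$ of dimension $d(n+l-d) < nl$ (strictly, because $d < n$ and $d < l$), hence Lebesgue-null; so $S$ cannot be absolutely continuous. Assembling the two directions gives the claim. The main obstacle I anticipate is the bookkeeping in the sufficiency direction: making the ``conditionally Gaussian with a.s.-nonsingular conditional covariance implies unconditionally absolutely continuous'' step rigorous (it follows because the unconditional density is $\int p(\,\cdot \mid Z)\, dP_Z$, a mixture of densities, hence itself a density), and correctly identifying which of the two conditionings ($N$ fixed vs.\ $Z$ fixed) to use in each regime; the rank/dimension facts about Gaussian matrices and determinantal varieties are standard.
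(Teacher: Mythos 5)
Your proposal is correct and follows essentially the same route as the paper's proof: condition on one of the two Gaussian factors (on $N$ when $d\geq n$, on $Z$ when $d\geq l$) to get an a.s.\ nondegenerate conditional Gaussian, integrate the resulting densities, and for necessity use the rank bound $\rank((\sigma N+v)Z)\leq d$ together with the dimension count $d(n+l-d)<nl$ for the determinantal variety. The self-corrections about which Kronecker factor carries the conditional covariance resolve to the right formulas, so no gap remains.
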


\begin{proof}
    First, let us assume that $d < \min\{n,l\}$.
    We know that $\rank( (\sigma N+v)Z) \leq \min\{\rank(N+v),\rank(Z)\}$, and because $N$ and $Z$ are Gaussian matrices, $\rank(N+v) = \min\{n,d\}$ and $\rank(Z) = \min\{d,l\}$.
    Then, $\rank( (\sigma N+v)Z) \leq \min\{n,d,l\} \leq d$, and $(N+v)Z$ lies in the manifold $\mathcal{M}_{n,l}^d(\mathbb{R)}$ of matrices of size $nl$ with rank $d$ which has dimension $d (n+l-d)$. Then, there exists a Borel set $A \subset \mathcal{M}_{n,l}^d(\mathbb{R)}$ such that $P((\sigma N+v)Z \in A) > 0$. However, as $(n-d)(l-d)> 0$, $nl > d(n+l-d)$ and $\Leb_{nl}(A) = 0$. Thus, $(\sigma N+v)Z$ is not absolutely continuous with respect to the Lebesgue measure.

    Now, assume that $d \geq n$.
    Let $A$ be Borel set of $\mathbb{R}^{n \times l}$ such that  $\Leb_{nl} (A) = 0$. 
    Then, we write $P((\sigma N+v)Z \in A) = \int P_{\sigma N + v}(x) P(xZ \in A) dx$. For all $x \in \mathbb{R}^{n \times d}$ such that $\rank(x) = n$, $xZ$ is a Gaussian matrix and admits a density [REF], which means that $P(xZ \in A) = 0$. Then, $P((\sigma N+v)Z \in A) = \int_{\rank(x) < n} P_{\sigma N + v}(x) P(xZ \in A) dx$. Also, for all $x \in \mathbb{R}^{n \times d}$, $P(xZ \in A) \leq 1$. Then, $P((\sigma N+v)Z \in A) \leq \int_{\rank(x) < n} P_{\sigma N + v}(x)dx = P(\rank(\sigma N+v) < n) = 0$, because $N+v$ is a Gaussian matrix and is absolutely continuous with respect to the Lebesgue measure. Then, $(N+v)Z$ is absoolutely continuous with respect to the Lebesgue measure.
    In the case $n \geq l$, we apply the same reasoning to the decomposition $P((\sigma N+v)Z \in A) = \int P_{Z}(y) P((\sigma N+v)y \in A) dy$.
\end{proof}

\lemConvergenceGaussianOneDim*

\begin{proof}
    We note $U \in \mathbb{R}^{d}$ the orthogonal matrix such that $v U^T =  \bar{v} = \frac{1}{\sqrt{d}}(\|v\|,\dots,\|v\|)$. Then, using invariance of Gaussian matrices by orthogonal transformation, $VZ = (N + \bar{v} U^T)Z = (NU + \bar{v})U^TZ  \overset{d}{=} (N +\bar{v})Z$.
    We write $VZ = \sum_{k=1}^d V_k Z_k \overset{d}{=} \sum_{k=1}^d (\sigma_\theta N_{k} + \frac{1}{\sqrt{d}}\|v\|)Z_k$, which is a sum of iid components. By Lemma~\ref{applem:absolute-continuity}, $(\sigma_\theta N_1 + \frac{1}{\sqrt{d}}\|v\|)Z_1$ is absolutely continuous with respect to the Lebesgue measure. 
    We compute:
    \begin{align*}
        &\mathbb{E}\left[(\sigma_\theta N_1 + \frac{1}{\sqrt{d}}\|v\|)Z_1\right] = \mathbb{E}\left[(\sigma_\theta N_1 + \frac{1}{\sqrt{d}}\|v\|)\right]\mathbb{E}\left[Z_1\right] = 0,\\
        &\mathbb{E}\left[(\sigma_\theta N_1 + \frac{1}{\sqrt{d}}\|v\|)^2Z_1^2\right] = \mathbb{E}\left[(\sigma_\theta N_1 + \frac{1}{\sqrt{d}}\|v\|)^2\right]\mathbb{E}\left[Z_1^2\right] = (\sigma_\theta^2 + \frac{1}{d}\|v\|^2)\sigma_z^2,\\
        &\mathbb{E}\left[(\sigma_\theta N_1 + \frac{1}{\sqrt{d}}\|v\|)^2 Z_1^3\right] =  0.\\
        &= \mathbb{E}\left[\left(\sigma_\theta N_1 + \frac{1}{\sqrt{d}}\|v\|\right)^4 Z_1^4\right] = 3\sigma_z^4(3\sigma_\theta^4 + 6 \sigma_\theta^2 \|v\|^2 /d + \|v\|^4/d^2)
    \end{align*}
    Then, $\frac{1}{\sigma_z\sqrt{\sigma_\theta^2 + \frac{1}{d}\|v\|^2}} (\sigma_\theta N_1 + \frac{1}{\sqrt{d}}\|v\|)Z_1$ match the $3$ first moments of a Gaussian variable.
    
    Then, leveraging Theorem~\ref{theo:multivariate-clt}, \[\TV\left(VZ,\sigma_z\sqrt{\sigma_\theta^2 + \frac{1}{d}\|v\|^2} G\right) \leq  C \left(1+ \frac{\mathbb{E}\left[\left(\sigma_\theta N_1 + \frac{1}{\sqrt{d}}\|v\|\right)^4 Z_1^4\right]}{\sigma_z^4 \left(\sqrt{\sigma_\theta^2 + \frac{1}{d}\|v\|^2}\right)^4}\right) \frac{1}{d} \leq \frac{A_{\|v\|}}{d},\]
    with $A_{\|v\|} = C (9 - \frac{6}{(1+ d\sigma_\theta^2/\|v\|^2)^2})$.
\end{proof}

This means that for any $v \in \mathbb{R}^d$, $A_{\|v\|} \leq 9C$. 

Note that $C$ does not depend on $\sigma_\theta, v$ and $d$. In fact, noting $Y_k = \frac{1}{\sigma_z\sqrt{\sigma_\theta^2 + \frac{1}{d}\|v\|^2}} (\sigma_\theta N_k + \frac{1}{\sqrt{d}}\|v\|)Z_k$, we have $\Cov(Y_1) = I_1 = 1$, and we apply Theorem~\ref{theo:multivariate-clt} on the random variable $\frac{1}{\sqrt{d}}\sum_{k=1}^d Y_k$, removing the dependence.  The final bound is obtained by invariance of TV distance by invertible transformation.

\begin{subsection}{Proof of Proposition~\ref{lem:trade-off-function-gaussians-different-variances}}
\label{applem:trade-off-function-gaussians-different-variances}

Before proving the result, we recall the Neyman-Pearson lemma.

\begin{lemma}[Neyman-Pearson lemma~\cite{Lehmann2006}]
\label{applem:neyman-pearson}
    Let $P$ and $Q$ be probability distributions on $\Omega$ admitting densities $p$ and $q$, respectively with respect to some measure $\nu$. For the hypothesis testing problem $H_0 : P$ vs $H_1 : Q$, a test $\phi : \Omega \to [0, 1]$ is the most powerful test at level $\alpha$ if and only if there exists a constant $h > 0$ such that $\phi$ has the form:
    \[\phi(w) = \begin{cases}1 \text{ if } q(w) > hp(w)\\ 0 \text{ if } q(w) < hp(w),   
    \end{cases}\]
    and $\mathbb{E}_P[\phi] = \alpha$.
\end{lemma}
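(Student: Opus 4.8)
The plan is to prove the Neyman--Pearson lemma by the classical argument: a pointwise sign identity for the likelihood ratio gives sufficiency, and a monotonicity argument produces a test attaining the prescribed level exactly, which then yields necessity by comparison. Throughout I fix $\alpha \in (0,1)$ and use that $P(\{p=0\}) = 0$. First I would construct a canonical test of the required form. For $t \ge 0$ define $R(t) = P\big(q(W) > t\,p(W)\big)$ with $W \sim P$; since $P(\{p=0\})=0$, this is the $P$-survival function of the likelihood ratio $q/p$ restricted to $\{p>0\}$, hence non-increasing, right-continuous, with $R(t)\to 0$ as $t\to\infty$. Set $h = \inf\{t \ge 0 : R(t) \le \alpha\}$, so that $R(h) \le \alpha \le R(h^-) = P(q \ge h\,p)$, and pick $\gamma \in [0,1]$ with $R(h) + \gamma\,P(q = h\,p) = \alpha$. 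Then $\phi_0 = \mathbf{1}_{\{q > hp\}} + \gamma\,\mathbf{1}_{\{q = hp\}}$ has the stated form and satisfies $\mathbb{E}_P[\phi_0] = \alpha$.

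For the sufficiency direction I would argue as follows. Let $\phi$ be any test of the stated form with constant $h > 0$ and $\mathbb{E}_P[\phi] = \alpha$, and let $\psi$ be an arbitrary competitor with $\mathbb{E}_P[\psi] \le \alpha$. The crucial observation is the pointwise inequality $\big(\phi(w) - \psi(w)\big)\big(q(w) - h\,p(w)\big) \ge 0$ for every $w$: on $\{q > hp\}$ one has $\phi = 1 \ge \psi$ and $q - hp > 0$; on $\{q < hp\}$ one has $\phi = 0 \le \psi$ and $q - hp < 0$; on $\{q = hp\}$ the second factor vanishes. Integrating against $\nu$ and rearranging gives $\mathbb{E}_Q[\phi] - \mathbb{E}_Q[\psi] \ge h\big(\mathbb{E}_P[\phi] - \mathbb{E}_P[\psi]\big) = h\big(\alpha - \mathbb{E}_P[\psi]\big) \ge 0$, where $h>0$ is used in the last step. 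Hence $\phi$ has maximal power among level-$\alpha$ tests.

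For the necessity direction I would compare an arbitrary most powerful level-$\alpha$ test $\psi$ to the canonical $\phi_0$ from the first step, which is most powerful by the sufficiency direction. Then $\mathbb{E}_Q[\psi] = \mathbb{E}_Q[\phi_0]$, so the inequality chain above applied to the pair $(\phi_0,\psi)$ collapses: $0 = \mathbb{E}_Q[\phi_0] - \mathbb{E}_Q[\psi] \ge h(\alpha - \mathbb{E}_P[\psi])$ forces $\mathbb{E}_P[\psi] \ge \alpha$, hence $\mathbb{E}_P[\psi] = \alpha$. Moreover the nonnegative integrand $(\phi_0 - \psi)(q - hp)$ integrates to zero, so it vanishes $\nu$-almost everywhere; on $\{q > hp\}$ this forces $\psi = 1$ and on $\{q < hp\}$ it forces $\psi = 0$, i.e. $\psi$ coincides $\nu$-a.e.\ with a test of exactly the stated form, with the same constant $h$.

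I expect the main obstacle to be the measure-theoretic bookkeeping rather than any conceptual difficulty: carefully handling the randomization region $\{q = hp\}$ and the $\nu$-null set $\{p = 0\}$ in the construction (right-continuity and atoms of $R$, existence of a valid $\gamma$), and justifying in the necessity step that a most powerful test must exhaust the level $\alpha$ — which is precisely where positivity $h > 0$ enters — since for degenerate instances (e.g.\ $\alpha \in \{0,1\}$, or $q = 0$ on a set of large $P$-mass) this can fail; these are the cases the standard formulation we cite implicitly rules out.
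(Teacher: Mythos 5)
The paper does not actually prove this lemma: it is stated and attributed to \citet{Lehmann2006} (Lehmann and Romano's textbook) and invoked as a black box in the downstream computations of trade-off functions. So there is no in-paper proof to compare against. That said, your proposal is a correct and careful rendition of the classical textbook argument, and it is worth recording a few remarks.

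The skeleton is exactly the standard one: construct a likelihood-ratio test $\phi_0$ attaining the level exactly via the survival function $R(t)=P(q>tp)$ and a randomization weight $\gamma$ on the boundary $\{q=hp\}$; prove sufficiency from the pointwise sign identity $(\phi-\psi)(q-hp)\ge 0$ and integration against $\nu$; then obtain necessity by comparing an arbitrary most powerful test $\psi$ to $\phi_0$ and noting that equality of power forces the nonnegative integrand to vanish $\nu$-a.e. Two points deserve emphasis, both of which you already flag. First, the necessity direction yields that $\psi$ agrees with the prescribed form only $\nu$-almost everywhere, not pointwise, which is slightly weaker than the literal statement; this is the standard reading and matches Lehmann--Romano. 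Second, the requirement $h>0$ is not automatic: if $P(q>0)\le\alpha$, your construction gives $h=0$, and then a most powerful level-$\alpha$ test of power one exists that does not fit the ``$h>0$'' clause while exhausting the level. In the paper's applications (Gaussian or chi-squared likelihood ratios in the interior regime $\alpha\in(0,1)$), $R(0)=P(q>0)=1$, so $h>0$ holds and this degeneracy never arises; it is reasonable to regard it as excluded by the cited formulation, as you do. Overall your proof is sound and complete for the regime the paper needs.

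Since you took the route through an explicit $\phi_0$, one small cosmetic alternative, which some textbooks use, is to prove necessity without constructing $\phi_0$: given a most powerful $\psi$, define $h$ directly as the infimum as in your first step and show directly that $(\mathbf{1}_{\{q>hp\}}-\psi)(q-hp)\ge 0$ must integrate to zero. This avoids the existence step but otherwise uses the same inequality; your version has the advantage of simultaneously proving that a most powerful test of the prescribed form always exists, which the lemma statement implicitly relies on.
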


\lemTradeOffFunctionGaussiansDifferentVariances*

\begin{proof}
    Assume that $0 \leq \sigma_1 \leq \sigma_2$. We note $H_0 : X \sim \mathcal{N}(0,\sigma_1^2) = P$, $H_1 : X \sim \mathcal{N}(0,\sigma_2^2) = Q$.
    We write the log-likelihood ratio between $P$ and $Q$:
    \[llk(x) = \frac{x^2}{2}\left(\frac{1}{\sigma_2^2} - \frac{1}{\sigma_1^2}\right).\]
    Noting that $\left(\frac{1}{\sigma_2^2} - \frac{1}{\sigma_1^2}\right) \geq 0$ and using the Neyman Pearson Lemma~\ref{applem:neyman-pearson}, the most powerful test at level $\alpha$ has the form $T(x) = x^2 \geq t_\alpha$. Let $N \sim \mathcal{N}(0,1)$. The type I error is:
    \[\alpha = P(\sigma_1 N^2 > t_\alpha) = 2(1- P(0 \leq N \leq t_\alpha/\sigma_1)) = 2(1-\Phi(t_\alpha/\sigma_1)).\]
    Then, $t_\alpha = \sigma_1 \Phi^{-1}(1-\alpha/2)$. Also, the type II error writes: \[\beta(\alpha) = P(\sigma_2 N^2 \leq t_\alpha) = 2 \Phi(t_\alpha/\sigma_2) - 1 = 2 \Phi \left(\frac{\sigma_1}{\sigma_2} \Phi^{-1}(1-\alpha/2)\right)-1.\]

    Also, if $\sigma_1 >\sigma_2$, the log-likelihood ratio $llk(x) \leq 0$. Then, the most powerful test at level $\alpha$ has the form $T(x) = x^2 \leq t_\alpha$. 

    The type I error is:
    \[\alpha = P(\sigma_1 N^2 \leq t_\alpha) = 2P(0 \leq N \leq t_\alpha/\sigma_1) = 2\Phi(t_\alpha/\sigma_1)-1.\]
    Then, $t_\alpha = \sigma_1 \Phi^{-1}\left(\frac{1+\alpha}{2}\right)$. Also, the type II error writes: \[\beta(\alpha) = P(\sigma_2 N^2 > t_\alpha) = 2 ( 1- \Phi(t_\alpha/\sigma_2)) = 2 \left( 1-\Phi \left(\frac{\sigma_1}{\sigma_2} \Phi^{-1}\left(\frac{1+\alpha}{2}\right)\right)\right).\]
    Combining both tests gives the trade-off function.
\end{proof}

We also give the trade-off function between Gaussian with different variances. This trade-off function depends on the cdf and the inverse cdf of weighted chi squared variable, which are not trivial to compute in practice.
\begin{lemma}[Trade-off function between Gaussians matrices with different variances]
\label{applem:trade-off-gaussian-matrices-different-variances}
    Let $v,w \in \mathbb{R}^{n \times d}$. Let $\Sigma_v  = \sigma^2 I_{n} + v v^T \in \mathbb{R}^{n \times n}$, $\Sigma_w  = \sigma^2 I_{n} + w w^T \in \mathbb{R}^{n \times n}$, $P = \mathcal{N}(0,\Sigma_v \otimes I_l)$ and $Q = \mathcal{N}(0,\Sigma_w \otimes I_l)$. We note $\lambda_1, \dots, \lambda_n$ the eigenvalues of $\Sigma_v^{-1/2} \Sigma_w \Sigma_v^{-1/2}$. We note $F_{\lambda_1,\dots,\lambda_n}(l,x)$ the distribution of the weighted sum of $n$ independent $\chi_l^2$ variables with weights $\lambda_1,\dots,\lambda_n$ at $x$.
    Then, \[T(P,Q)(\alpha) = F_{\left(\lambda_k-1; k \in \Iintv{1,n}\right)}\left(l, F_{\left(1-1/\lambda_k; k \in \Iintv{1,n}\right)}^{-1}(l, 1-\alpha)\right).\]
\end{lemma}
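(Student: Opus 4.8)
The plan is to make the Neyman--Pearson test explicit and then compute the law of its sufficient statistic separately under $P$ and under $Q$. Since $P$ and $Q$ are centered Gaussians on $\mathbb{R}^{n\times l}$, their densities are proportional to $\det(\Sigma_v)^{-l/2}\exp(-\tfrac12\tr(x^T\Sigma_v^{-1}x))$ and $\det(\Sigma_w)^{-l/2}\exp(-\tfrac12\tr(x^T\Sigma_w^{-1}x))$, so the log-likelihood ratio equals $c+\tfrac12 S(x)$ with $c$ a constant and $S(x):=\tr\!\big(x^T(\Sigma_v^{-1}-\Sigma_w^{-1})x\big)$. By the Neyman--Pearson Lemma~\ref{applem:neyman-pearson}, the most powerful test at level $\alpha$ for $H_0:P$ vs $H_1:Q$ rejects $H_0$ exactly when $S(x)>t_\alpha$, where $t_\alpha$ is chosen so that $\mathbb{P}_{P}(S>t_\alpha)=\alpha$; since $S$ will turn out to have a continuous law (unless $\Sigma_v=\Sigma_w$, a trivial case with $P=Q$ and $T(P,Q)(\alpha)=1-\alpha$), no boundary randomization is needed. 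It then remains to read off $t_\alpha$ from the law of $S$ under $P$ and compute $\beta(\alpha)=\mathbb{P}_{Q}(S\le t_\alpha)$ from the law of $S$ under $Q$.

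For the law under $P$, write $x \overset{d}{=} \Sigma_v^{1/2}Z$ with $Z\in\mathbb{R}^{n\times l}$ having i.i.d.\ standard Gaussian rows (legitimate since $\Sigma_v=\sigma^2 I_n+vv^T\succ0$); then $S(x)\overset{d}{=}\tr\!\big(Z^T(I_n-\Sigma_v^{1/2}\Sigma_w^{-1}\Sigma_v^{1/2})Z\big)$. The matrix $\Sigma_v^{1/2}\Sigma_w^{-1}\Sigma_v^{1/2}$ is conjugate to $\Sigma_w^{-1}\Sigma_v$, hence has eigenvalues $1/\lambda_1,\dots,1/\lambda_n$ (recall the $\lambda_k$ are the eigenvalues of $\Sigma_v^{-1/2}\Sigma_w\Sigma_v^{-1/2}$, i.e.\ of $\Sigma_v^{-1}\Sigma_w$). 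Diagonalizing $I_n-\Sigma_v^{1/2}\Sigma_w^{-1}\Sigma_v^{1/2}=O\diag(1-1/\lambda_k)O^T$ with $O$ orthogonal and using $O^TZ\overset{d}{=}Z$, we get $S(x)\overset{d}{=}\sum_{k=1}^n(1-1/\lambda_k)\|Z_{k,\cdot}\|^2$ with $\|Z_{k,\cdot}\|^2$ i.i.d.\ $\chi^2_l$. Hence $\mathbb{P}_{P}(S\le t)=F_{(1-1/\lambda_k;\,k\in\Iintv{1,n})}(l,t)$ and $t_\alpha=F_{(1-1/\lambda_k;\,k\in\Iintv{1,n})}^{-1}(l,1-\alpha)$. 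An identical computation under $Q$, writing $x\overset{d}{=}\Sigma_w^{1/2}Z$, gives $S(x)\overset{d}{=}\tr\!\big(Z^T(\Sigma_w^{1/2}\Sigma_v^{-1}\Sigma_w^{1/2}-I_n)Z\big)$; since $\Sigma_w^{1/2}\Sigma_v^{-1}\Sigma_w^{1/2}$ is conjugate to $\Sigma_v^{-1}\Sigma_w$ it has eigenvalues $\lambda_1,\dots,\lambda_n$, so $\mathbb{P}_{Q}(S\le t)=F_{(\lambda_k-1;\,k\in\Iintv{1,n})}(l,t)$. Substituting $t_\alpha$ yields $T(P,Q)(\alpha)=\beta(\alpha)=F_{(\lambda_k-1;\,k\in\Iintv{1,n})}\big(l,\,F_{(1-1/\lambda_k;\,k\in\Iintv{1,n})}^{-1}(l,1-\alpha)\big)$, which is the claimed formula.

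Most of this is bookkeeping with Kronecker/trace identities and eigenvalue conjugation; the one point that genuinely needs care is that the two weight vectors $(1-1/\lambda_k)_k$ (arising under $P$) and $(\lambda_k-1)_k$ (under $Q$) are entrywise of the same sign, which is exactly what ensures the single-threshold Neyman--Pearson test is the correct form on both sides and that the two weighted-$\chi^2$ CDFs are being composed consistently; I also need to verify the reduction $x\overset{d}{=}\Sigma_v^{1/2}Z$ pushes the quadratic form $\tr(x^T(\cdot)x)$ through cleanly regardless of the vectorization convention used to write $\Sigma_v\otimes I_l$.
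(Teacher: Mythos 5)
Your proposal is correct and follows essentially the same route as the paper's proof: apply the Neyman--Pearson lemma to the quadratic log-likelihood-ratio statistic $\tr\big(x^T(\Sigma_v^{-1}-\Sigma_w^{-1})x\big)$, whiten under each hypothesis, diagonalize to get weighted sums of independent $\chi^2_l$ variables with weights $(1-1/\lambda_k)$ under $P$ and $(\lambda_k-1)$ under $Q$, and compose the two CDFs. The only difference is organizational—you treat general $l$ directly via $\|Z_{k,\cdot}\|^2\sim\chi^2_l$, whereas the paper does $l=1$ first and then extends through the eigenvalues of $M\otimes I_l$—and your worry about the sign consistency of the weights is harmless but unnecessary, since the Neyman--Pearson threshold test $S>t_\alpha$ is valid regardless of the signs of the eigenvalues.
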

    \begin{proof}
    We start with the case $l=1$. $v v^T$ is positive semi-definite, so $\Sigma_v$ is positive definite and $\Sigma^{-1}$ exists. The log likelihood ratio between $P$ and $Q$ is:
    \[llk(x) = x^T(\Sigma_v^{-1} - \Sigma_w^{-1}) x +  C.\]

    Using the Neyman Pearson Lemma~\ref{applem:neyman-pearson}, the most powerful test at level $\alpha$ has the form $T(x) =  x^T(\Sigma_v^{-1} - \Sigma_w^{-1}) x  = x^T \Sigma_v^{-1/2}(I_n - \Sigma_v^{1/2} \Sigma_w^{-1} \Sigma_v^{1/2})\Sigma_v^{-1/2} x\geq t_\alpha$. We investigate the law of $T$ under $P$ and $Q$. We note $M = \Sigma_v^{-1/2} \Sigma_w \Sigma_v^{-1/2}$. $M$ is symmetric and positive definite. Then, we note $M = U^T D U$, with $D = diag(\lambda_1,\dots, \lambda_n)$.

     Let $N \sim \mathcal{N}(0,I_d)$.
    
    Under $P$: $X = \Sigma_v^{1/2} N$ and $\Sigma_v^{-1/2}X =  N$:
    \begin{align*}
        T(X) = N^T (I_n - M^{-1}) N = N^T (I_n - D^{-1})N = \sum_{k=1}^n (1-1/\lambda_k) N_k^2,
    \end{align*}

    Under $Q$: $X = \Sigma_w^{1/2} N$ and $U \Sigma_v^{-1/2}X \sim \mathcal{N}(0,D)$. Then, writing $T(X) = X^T \Sigma_v^{-1/2} U^T(I_n - D^{-1}) U\Sigma_v^{-1/2} X$:
    \begin{align*}
        T(X) = N^T D^{1/2} (I_n - D^{-1}) D^{1/2}N = N^T (D - I_n)N = \sum_{k=1}^n (\lambda_k -1) N_k^2,
    \end{align*}
    
    Then, the type I error is:
    \begin{align*}
        P\left(\sum_{k=1}^n (1-1/\lambda_k) N_k^2 > t_\alpha\right) = \alpha \implies t_\alpha = F_{\left(1-1/\lambda_k; k \in \Iintv{1,n}\right)}^{-1}(1,1-\alpha).
    \end{align*}
    Finally, the type II error is given by:

    \[\beta(\alpha) = P\left(\sum_{k=1}^n (\lambda_k -1) N_k^2 \leq t_\alpha \right) = F_{\left(\lambda_k-1; k \in \Iintv{1,n}\right)}\left(F_{\left(1-1/\lambda_k; k \in \Iintv{1,n}\right)}^{-1}(1,1-\alpha)\right).\]

    In the case $l > 1$, the eigenvalues of $M_l = (\Sigma_v \otimes I_l)^{-1/2} (\Sigma_w \otimes I_l)(\Sigma_v \otimes I_l)^{-1/2} = M \otimes I_l$ are $\underbrace{\lambda_1,\dots,\lambda_1}_{l \text{ times}}, \ldots, \underbrace{\lambda_n, \dots, \lambda_n}_{l \text{ times}}$.
    The log likelihood ratio between $P$ and $Q$ is:
    \[llk(x) = x^T((\Sigma_v \otimes I_l)^{-1} - (\Sigma_w \otimes I_l)^{-1}) x +  C = x^T((\Sigma_v^{-1} - \Sigma_w^{-1})\otimes I_l) x +  C.\]
    As in the case $l=1$, we can write the distribution of the test $T = x^T((\Sigma_v^{-1} - \Sigma_w^{-1})\otimes I_l) x$ under $P$ and $Q$ and compute the type I and II errors, giving the desired result. Let $N \sim \mathcal{N}(0, I_n \otimes I_l)$.

    In the case $\lambda_1 =  \dots = \lambda_n = \lambda > 1$, the type I error is:
    \[P\left(\sum_{k'=1}^m\sum_{k=1}^n (1-1/\lambda_k) N_{k,k'}^2 > t_\alpha\right) = \alpha
        \implies  P\left((1-1/\lambda) \chi_{nl} > t_\alpha\right) = \alpha \implies t_\alpha = (1-1/\lambda)\Phi_{\chi_{nl}^2}^{-1}(1-\alpha),\]
        where $\chi_{nl}^2$ is the cdf of a chi-squared variable with $nl$ degrees of freedom.
    The type II error is given by:

    \[\beta(\alpha) = P\left(\sum_{k'=1}^m\sum_{k=1}^n (\lambda_k -1) N_{k,k'}^2 \leq t_\alpha \right) = P\left((\lambda -1)\chi_{nl}^2 \leq t_\alpha \right) = \Phi_{\chi_{nl}^2}\left(\frac{1}{\lambda} \Phi_{\chi_{nl}^2}^{-1}(1-\alpha)\right).\]

\end{proof}

\begin{subsection}{Proof of Theorem~\ref{theo:convergence-trade-off-one-dim}}

\theoConvergenceTradeOffOneDim*

\begin{proof}
    Let $d > 0$. Based on Lemma~\ref{lem:convergence-gaussian-one-dim}, there exists $C > 0$ such that:
    \begin{align*}
        \TV\left(VZ,\sigma_z\sqrt{\sigma_\theta^2 + \frac{1}{d}\|v\|^2} G\right) &\leq \frac{C}{d}\left(9 - \frac{6}{(1+ d\sigma_\theta^2/\|v\|^2)^2}\right) \leq \frac{9C}{d},\\
        \TV\left(WZ,\sigma_z\sqrt{\sigma_\theta^2 + \frac{1}{d}\|w\|^2} G\right) &\leq \frac{C}{d}\left(9 - \frac{6}{(1+ d\sigma_\theta^2/\|w\|^2)^2}\right) \leq \frac{9C}{d}.
    \end{align*}
    Then, leveraging Lemma~\ref{lem:trade-off-function-approximation}, for $\alpha \in (C/d,1-C/d)$:
    \begin{align*}
        T(VZ,WZ)(\alpha) &\geq T\left(\sigma_z\sqrt{\sigma_\theta^2 + \frac{1}{d}\|v\|^2} G, \sigma_z\sqrt{\sigma_\theta^2 + \frac{1}{d}\|w\|^2} G\right)\left(\alpha + \frac{C}{d}\right) - \frac{9C}{d},\\
        T(VZ,WZ)(\alpha) &\leq T\left(\sigma_z\sqrt{\sigma_\theta^2 + \frac{1}{d}\|v\|^2} G, \sigma_z\sqrt{\sigma_\theta^2 + \frac{1}{d}\|w\|^2} G\right)\left(\alpha - \frac{C}{d}\right) + \frac{9C}{d},
    \end{align*}
    giving the desired result.
\end{proof}

\end{subsection}

\end{subsection}

\subsection{Proof of Theorem~\ref{theo:convergence-product-gaussian-matrices-shifted}}
\label{appsubsec:convergence-product-gaussian-matrices-shifted}

\theoConvergenceProductGaussianMatricesShifted*

\begin{proof}
    We assume that $d \geq \max\{n,l,s\}$.
    Let $N \in \mathbb{R}^{n \times d}, Z,Z'\in \mathbb{R}^{d \times l}, G \in \mathbb{R}^{n \times l}$ be independent standard Gaussian matrices. We note the singular value decomposition of $v = F \Sigma S^T$. Then, $(\sigma_\theta N+v)Z = (\sigma_\theta N + F\Sigma S^T)Z = F(\sigma_\theta F^T N S + \Sigma) S^T Z  \overset{d}{=} F(\sigma_\theta N + \Sigma)Z$, by invariance of Gaussian matrices through orthogonal transformations. 
    Because $F$ is invertible, we can write: $TV((\sigma_\theta N+v)Z, \sigma_\theta\sqrt{d-s}G + v Z') = TV(F(\sigma_\theta N + \Sigma)Z, \sigma_\theta\sqrt{d-s}G+vZ') = TV((\sigma_\theta N + \Sigma)Z , \sigma_\theta\sqrt{d-s}G + \Sigma Z')$.
    
    We observe that $N Z$ is decomposed into a sum of independent bits: $N Z = \sum_{k=1}^d N_{\cdot,k} Z_k$. Then,

    \begin{alignat*}{2}
        & TV((\sigma_\theta N+\Sigma)Z, \sigma_\theta\sqrt{d-s}G + \Sigma Z') &&\\
         = \;&TV\left(\sum_{k=s+1}^d N_{\cdot,k} Z_k + \sum_{k=1}^s (N_{\cdot,k} + \Sigma_{\cdot,k} )Z_k , \sigma_\theta\sqrt{d-s}G + v Z'\right) && \\
        \leq \;& TV\left(\sum_{k=s+1}^d N_{\cdot,k} Z_k + \sum_{k=1}^s (N_{\cdot,k} + \Sigma_{\cdot,k} )Z_k , \sigma_\theta\sqrt{d-s}G + \sum_{k=1}^s (N_{\cdot,k} + \Sigma_{\cdot,k} )Z_k\right) && \;(1)\\
        + \;&TV\left( \sigma_\theta\sqrt{d-s}G + \sum_{k=1}^s (N_{\cdot,k} + \Sigma_{\cdot,k} )Z_k , \sigma_\theta\sqrt{d-s}G + \Sigma Z'\right). && \;(2)\\
    \end{alignat*}
    Using the data-processing inequality for TV distance for the map $(x,y) \mapsto x+y$, 
    \begin{align*}
        (1) &\leq TV\left(\left(\sigma_\theta \sum_{k=s+1}^d N_{\cdot,k} M_k , \sum_{k=1}^s (\sigma_\theta N_{\cdot,k} + \Sigma_{\cdot,k} )Z_k\right) , \left(\sigma_\theta\sqrt{d-s}G , \sum_{k=1}^s (\sigma_\theta N_{\cdot,k} + \Sigma_{\cdot,k} )Z_k\right)\right) \\
        &= TV\left(\sigma_\theta \sum_{k=s+1}^d N_{\cdot,k} M_k  , \sigma_\theta\sqrt{d-s}G \right) \leq C\sqrt{\frac{nl}{d-s}},
    \end{align*} by independence of the components in the pair and using Theorem~\ref{theo:convergence-product-gaussian-matrices-shifted} for a product or size $d-s$.
    Also, using Pinsker inequality:
    \[(2) \leq \sqrt{\frac{1}{2}D_{KL}\left( \sigma_\theta\sqrt{d-s}G + \Sigma Z', \sigma_\theta\sqrt{d-s}G +  \sum_{k=1}^s (\sigma_\theta N_{\cdot,k} + \Sigma_{\cdot,k} )Z_k\right)}.\] Because of independence of components in the sum, this divergence corresponds to the KL divergence between the convolution of two distributions by a third distribution. We find an upper bound to this divergence using the chain rule for KL divergence and a technique proof similar to the shift-reduction lemma~\cite{Feldman2018}. Shift reduction lemma allows to upper bound the divergence between convolution of two distributions by the same distribution by a coupling technique and the use of chain rule.
    \begin{lemma}[Shift-reduction for KL divergences]
Let X,Y and N be three independent random vectors. Then, for any random variable $W$ that depend on $X$,
\[D_{KL}(X+N,Y+N) \leq D_{KL}(X+W,Y) + \mathbb{E}_{w \sim W}[D_{KL}(N - w, N)]. \]
\end{lemma}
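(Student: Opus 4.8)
The plan is to derive the inequality from the chain rule and the data-processing inequality for the KL divergence, applied to a carefully chosen pair of coupled joint distributions. Introduce the laws $\mathcal{L}_1 := \mathrm{Law}(X+W,\, X+N)$ and $\mathcal{L}_2 := \mathrm{Law}(Y,\, Y+N')$, where $N'$ is an independent copy of $N$, taken independent of $Y$. Since $N$ is independent of $(X,W,Y)$, the second marginals of $\mathcal{L}_1$ and $\mathcal{L}_2$ are exactly $\mathrm{Law}(X+N)$ and $\mathrm{Law}(Y+N)$; as the projection $(a,b)\mapsto b$ is a deterministic post-processing applied identically to both, the data-processing inequality gives $D_{KL}(X+N\,\|\,Y+N)\le D_{KL}(\mathcal{L}_1\,\|\,\mathcal{L}_2)$, and it remains to bound the right-hand side.

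I would then apply the chain rule for the KL divergence, conditioning on the first coordinate $A_1$: $D_{KL}(\mathcal{L}_1\,\|\,\mathcal{L}_2)=D_{KL}(X+W\,\|\,Y)+\mathbb{E}_{a\sim X+W}\big[D_{KL}\big(\mathcal{L}_1(\cdot\mid A_1=a)\,\big\|\,\mathcal{L}_2(\cdot\mid A_1=a)\big)\big]$. The first term already matches the statement. For the conditional term: under $\mathcal{L}_2$, the conditional law of the second coordinate given $A_1=a$ is the translate $\mathrm{Law}(a+N)$. Under $\mathcal{L}_1$, writing $\kappa_a$ for the regular conditional distribution of $W$ given $X+W=a$ (which exists since the variables take values in a Euclidean, hence Polish, space) and using $X=(X+W)-W$ together with the independence of $N$ from $(X,W)$, the conditional law of $X+N=a-W+N$ given $X+W=a$ is the mixture $\int \mathrm{Law}(a-w+N)\,\kappa_a(dw)$.

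Next I would invoke convexity of $D_{KL}(\cdot\,\|\,R)$ in its first argument (for a fixed $R$), together with the translation invariance of the KL divergence, to obtain $D_{KL}\big(\int \mathrm{Law}(a-w+N)\,\kappa_a(dw)\,\big\|\,\mathrm{Law}(a+N)\big)\le \int D_{KL}(a-w+N\,\|\,a+N)\,\kappa_a(dw)=\int D_{KL}(N-w\,\|\,N)\,\kappa_a(dw)$. Finally, the joint law of $(X+W,W)$ disintegrates along its first coordinate into exactly the kernels $(\kappa_a)_a$ and has $\mathrm{Law}(W)$ as its second marginal, so the tower property gives $\mathbb{E}_{a\sim X+W}\big[\int D_{KL}(N-w\,\|\,N)\,\kappa_a(dw)\big]=\mathbb{E}_{w\sim W}[D_{KL}(N-w\,\|\,N)]$. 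Chaining the three displays proves the lemma; if any KL term on the right-hand side is infinite the bound is vacuous, so no integrability assumption on $N$ is needed.

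The chain-rule, data-processing, and convexity properties of the KL divergence are standard, so the only real design choice is the coupling; the point requiring care is to condition on $X+W$ rather than on $W$ itself. Conditioning on $W$ would replace the base term by $D_{KL}\big((X+W,W)\,\|\,(Y,W)\big)$, which exceeds $D_{KL}(X+W\,\|\,Y)$ by a nonnegative mutual-information term and would yield only a weaker inequality. I expect the mildly technical points to be the existence and measurability of the regular conditional kernel $\kappa_a$ and the verification that $\mathcal{L}_1(\cdot\mid A_1=a)$ has precisely the claimed mixture form.
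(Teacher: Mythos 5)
Your proof is correct and follows essentially the same route as the paper's: both couple into a joint law whose marginal (or image under addition) recovers $X+N$ vs.\ $Y+N$, apply data processing, use the chain rule conditioned on $X+W$, then convexity and translation invariance of the KL divergence, and finally the tower property to collapse the double expectation to $\mathbb{E}_{w\sim W}$. The only difference is cosmetic: you parametrize the joint as $(X+W,\,X+N)$ and project, whereas the paper uses $(X+W,\,-W+N)$ and the addition map, which are related by a bijective change of coordinates.
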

\begin{proof}
    Let $X,Y$ and $N$ be three independent random vectors. Let $W$ be a random variable. Then, we observe that $X + N = X + W - W + N$. We apply the post-processing inequality for KL divergence under the map $f : (x,y) \mapsto x+y$:
    \[D_{KL}(X+N,Y+N) \leq D_{KL}((X+W,-W+N)(Y,N)).\] Using the independence between $N$ and the other variables, we obtain:
    \[D_{KL}(X+N,Y+N) = D_{KL}(X+W,Y) + \mathbb{E}_{y \sim p_{X+W}}[D_{KL}( N-W | X+W = y, N)]. \]
    Then, using convexity of KL divergences and writing $P_{N-W|X+W=x}(y) = \int P_N(y+w) P_{W |X+W =x}(w) dw$, we obtain \[D_{KL}(X+N,Y+N) \leq D_{KL}(X+W,Y) + \mathbb{E}_{y \sim P_{X+W}}[\mathbb{E}_{w \sim W|X+W = y}[D_{KL}(N - w, N)] = \mathbb{E}_{w \sim W}[D_{KL}(N - w, N)]. \]
\end{proof}

Then, for $\alpha > 1$, setting a shift $W =  \sigma_\theta\sum_{k=1}^s N_{\cdot,k}Z'_k$, we have in distribution, $\Sigma Z' + W \overset{d}{=}  \sum_{k=1}^s (\sigma_\theta N_{\cdot,k} + \Sigma_{\cdot,k} )Z_k$. It means that: \[D_{KL}\left(\Sigma Z' + W , \sum_{k=1}^s (\sigma_\theta N_{\cdot,k} + \Sigma_{\cdot,k} )Z_k\right) = 0.\]

Also, $D_{KL} (\sigma_\theta\sqrt{d-s} G-w,\sigma_\theta\sqrt{d-s} G) = \frac{\|w\|^2}{2\sigma_\theta^2(d-s)}$.

Using the shift-reduction lemma, we get:
\begin{align*}
    D_{KL}\left(\Sigma Z' + \sigma_\theta\sqrt{d-s}G, \sum_{k=1}^s (N_{\cdot,k} + \Sigma_{\cdot,k} )Z_k + \sigma_\theta\sqrt{d-s}G\right) &\leq \mathbb{E}_{w\sim W}[D_{KL}(\sigma_\theta\sqrt{d-s}G-w,\sigma_\theta\sqrt{d-s}G)] \\ &\leq \mathbb{E}_{w\sim W}\left[\frac{\|w\|^2}{2\sigma_\theta^2(d-s)}\right] \\ & \leq \frac{nls}{2(d-s)},
\end{align*}

where expectation is obtained using independence between $N$ and $Z$, \[\mathbb{E}[\|W\|^2] = \sigma_\theta^2\sum_{i=1}^n\sum_{j=1}^l\mathbb{E}\left[\left(\sum_{k=1}^s N_{i,k} Z_{k,j}\right)^2\right] = \sigma_\theta^2nl \sum_{k=1}^s\sum_{k'=1}^s\mathbb{E}[ N_{i,k}N_{i,k'}] \mathbb{E}[Z_{k,j} Z_{k',j}] = \sigma_\theta^2nls.\]

Finally, we find: \[TV((\sigma_\theta N+v)Z, \sigma_\theta\sqrt{d-s}G + v Z') \leq C\sqrt{\frac{nl}{d-s}} + \sqrt{\frac{nls}{4(d-s)}} \leq C' \sqrt{\frac{nls}{d-s}}.\] We obtain the result of the lemma by applying invariance of TV distance under rescaling.
    
\end{proof}

\begin{subsection}{Experiments and Interpretation with Rényi divergences}
\label{appsubsec:renyi-divergence-experiments}

In this section, we explain the setup of our experiments, and compute upper bounds for Rényi divergences between Gaussian matrices with different covariance structures.

\textbf{Rényi divergence between Gaussian variables.}

Let $s > 0$. The Rényi divergence between $\mathcal{N} (0, s + v^2)$ and $\mathcal{N} (0, s + w^2)$ is maximized by setting $v = r_{s,\Delta}, w = v + \Delta$, for some value $0 < r_{s,\Delta} < 1$. 
\begin{lemma}[maximum of Rényi divergence between Gaussian variables with same mean and different variance]
\label{applem:renyi-divergence-univariate-gaussian}
 \begin{align*}
     &\sup_{|v-w| \leq \Delta} D_\alpha (\mathcal{N} (0, s + v^2), \mathcal{N} (0, s + w^2)) = \begin{cases}
         \frac{1}{2(\alpha-1)}(\alpha\log (r_{s,\Delta}) - \log (\alpha r_{s,\Delta} + 1-\alpha)) \text{ if } s \geq \alpha(\alpha-1)\Delta^2,\\
         +\infty \text{ else,}
     \end{cases}\\
    &\text{where } r_{s,\Delta} = \frac{ 2s  + \Delta^2 - \Delta\sqrt{\Delta^2 + 4s}}{2s}.\;
 \end{align*}
\end{lemma}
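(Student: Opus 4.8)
The plan is to reduce the whole statement to a one–dimensional optimization in the variance ratio. First I would record the closed form of the Rényi divergence between two centered Gaussians: evaluating the Gaussian integral $\int q^\alpha p^{1-\alpha}$ (with the paper's convention $D_\alpha(P,Q)=\tfrac1{\alpha-1}\log\mathbb E_{x\sim P}[(Q(x)/P(x))^\alpha]$) for $p,q$ the densities of $\mathcal N(0,\sigma_1^2),\mathcal N(0,\sigma_2^2)$ gives, for $\alpha>1$ and with $\rho:=\sigma_1^2/\sigma_2^2$,
\[
D_\alpha\bigl(\mathcal N(0,\sigma_1^2),\mathcal N(0,\sigma_2^2)\bigr)=
\begin{cases}
\dfrac{\alpha\log\rho-\log(\alpha\rho+1-\alpha)}{2(\alpha-1)} =: G_\alpha(\rho) & \text{if }\alpha\rho+1-\alpha>0,\\[1.5mm]
+\infty & \text{otherwise,}
\end{cases}
\]
the ``$+\infty$'' case being precisely where the integral diverges. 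Taking $\sigma_1^2=s+v^2$ and $\sigma_2^2=s+w^2$, the target depends on $(v,w)$ only through $\rho=\frac{s+v^2}{s+w^2}$, and we may assume $v,w\ge 0$ without loss of generality (replacing $(v,w)$ by $(|v|,|w|)$ preserves feasibility and $v^2,w^2$). So the task becomes: maximize $G_\alpha$ over the set of values taken by $\rho$ as $(v,w)$ ranges over $\{v,w\ge 0:\ |v-w|\le\Delta\}$.

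Second, I would analyze $G_\alpha$. Differentiating, $G_\alpha'(\rho)=\frac{\alpha(\rho-1)}{2\rho(\alpha\rho+1-\alpha)}$ has the sign of $\rho-1$ on the domain $\rho>(\alpha-1)/\alpha$, so $G_\alpha$ is strictly decreasing on $((\alpha-1)/\alpha,1]$, strictly increasing on $[1,\infty)$, with $G_\alpha(1)=0$, and $G_\alpha(\rho)\to+\infty$ both as $\rho\downarrow(\alpha-1)/\alpha$ and as $\rho\to\infty$; in particular $G_\alpha$ is quasiconvex, hence attains its maximum over any subinterval of its domain at an endpoint. For the range of $\rho$: the map $(v,w)\mapsto(w,v)$ preserves feasibility and sends $\rho\mapsto1/\rho$, so the achievable set of $\rho$ is invariant under reciprocals and it suffices to find $\rho_{\min}=\inf\frac{s+v^2}{s+w^2}$. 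Since $\rho$ is nondecreasing in $v$ and nonincreasing in $w$, the infimum is attained on the boundary $w=v+\Delta$, and minimizing $v\mapsto\frac{s+v^2}{s+(v+\Delta)^2}$ over $v\ge0$ (its derivative has the sign of $v^2+v\Delta-s$) gives the unique interior critical point $v=a:=\tfrac12(\sqrt{\Delta^2+4s}-\Delta)\ge0$, $w=a+\Delta$. Using $a(a+\Delta)=s$ one gets $s+a^2=a(2a+\Delta)$ and $s+(a+\Delta)^2=(a+\Delta)(2a+\Delta)$, whence
\[
\rho_{\min}=\frac{a}{a+\Delta}=\frac{a^2}{s}=\frac{2s+\Delta^2-\Delta\sqrt{\Delta^2+4s}}{2s}=r_{s,\Delta},
\]
so the achievable set of $\rho$ is exactly $[\,r_{s,\Delta},\,1/r_{s,\Delta}\,]$.

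Third, I would split on whether this interval lies inside the domain of $G_\alpha$. Clearing the square root and squaring (both sides positive) shows $r_{s,\Delta}>(\alpha-1)/\alpha\iff s>\alpha(\alpha-1)\Delta^2$. If $s<\alpha(\alpha-1)\Delta^2$, then $r_{s,\Delta}<(\alpha-1)/\alpha$, so some feasible $(v,w)$ has $\alpha\rho+1-\alpha\le0$ and $D_\alpha=+\infty$, giving the ``else'' branch (and at $s=\alpha(\alpha-1)\Delta^2$ we have $\alpha r_{s,\Delta}+1-\alpha=0$, so the stated formula also reads $+\infty$, consistent with the supremum). If $s\ge\alpha(\alpha-1)\Delta^2$, the interval $[r_{s,\Delta},1/r_{s,\Delta}]$ lies in the domain, $G_\alpha$ is finite and continuous there, and by quasiconvexity its maximum is $\max\{G_\alpha(r_{s,\Delta}),G_\alpha(1/r_{s,\Delta})\}$. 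To see the larger value is $G_\alpha(r_{s,\Delta})$, set $h(r)=G_\alpha(r)-G_\alpha(1/r)$ on $r\in((\alpha-1)/\alpha,1]$: then $h(1)=0$ and a direct simplification gives
\[
h'(r)=\frac{\alpha(\alpha-1)(1-r)}{r}\Bigl(\frac{1}{\alpha+(1-\alpha)r}-\frac{1}{\alpha r+1-\alpha}\Bigr),
\]
whose bracket has numerator $(2\alpha-1)(r-1)<0$ and positive denominator, so $h'(r)<0$ and hence $h(r)\ge0$ for $r\le1$. Applying this with $r=r_{s,\Delta}<1$ gives $G_\alpha(r_{s,\Delta})\ge G_\alpha(1/r_{s,\Delta})$, so the supremum equals $G_\alpha(r_{s,\Delta})=\frac{1}{2(\alpha-1)}\bigl(\alpha\log r_{s,\Delta}-\log(\alpha r_{s,\Delta}+1-\alpha)\bigr)$, as claimed.

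I expect the two embedded optimizations to be the crux. First, pinning down the exact range $[r_{s,\Delta},1/r_{s,\Delta}]$ of the variance ratio: the extremal configuration is an interior point of the segment $w=v+\Delta$, not the naive corner $(v,w)=(0,\Delta)$, so one must actually solve the small calculus problem and simplify via $a(a+\Delta)=s$. Second, showing the divergence is larger at the \emph{smaller} ratio $r_{s,\Delta}$; morally this is because $G_\alpha$ explodes as $\rho\downarrow(\alpha-1)/\alpha$ but grows only logarithmically as $\rho\to\infty$, so shrinking the variance by a factor is ``more distinguishing'' than enlarging it by the same factor — but making this rigorous needs the monotonicity argument for $h$.
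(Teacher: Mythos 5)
Your proof is correct and follows essentially the same route as the paper: reduce to the variance ratio $\rho$, use the unimodality of $G_\alpha$ to restrict to the endpoints $r_{s,\Delta}$ and $1/r_{s,\Delta}$ of the achievable range (found via the interior critical point $a=\tfrac12(\sqrt{\Delta^2+4s}-\Delta)$ on the boundary $w=v+\Delta$), and then show $G_\alpha(r)\ge G_\alpha(1/r)$ for $r\le 1$ by a monotonicity argument, which is the paper's comparison of $f(x)$ and $f(1/x)$ in a different guise. The only blemish is the constant prefactor in your expression for $h'(r)$ (it should be $\tfrac{\alpha(1-r)}{2r}$ rather than $\tfrac{\alpha(\alpha-1)(1-r)}{r}$), but since both prefactors are positive for $r<1$ and $\alpha>1$ the sign argument, and hence the conclusion, is unaffected.
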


\begin{proof}
    For $r > 0$, we note $f(r) =  \alpha \log(r) - \log(\alpha r +1-\alpha)$, and we note $r(v,w) = \frac{s+v^2}{s+w^2}$.
    
    The Rényi divergence of order $\alpha > 1$ between $(\mathcal{N} (0, s + v^2)$ and $ \mathcal{N} (0, s + w^2)$ is given by $D_\alpha (\mathcal{N} (0, s + v^2), \mathcal{N} (0, s + w^2)) = \frac{1}{2(\alpha-1)} f(r(v,w))$.
    We note $r_{\max} = \sup_{|v-w| \leq \Delta} r(v,w)$, $r_{\min} = \sup_{|v-w| \leq \Delta} r(v,w)$.
    We study the function $f$:

    \[f'(r) = \alpha \left(\frac{1}{r} - 
 \frac{1}{\alpha r + 1 - \alpha}\right).\]
 $f'(r) \leq 0$ if $r \in (1-1/\alpha,1)$ and else $f'(r) \geq 0$. This means that $ \sup_{|v-w| \leq \Delta} f(r(v,w)) = \max\{f(r_{\max}),f(r_{min})\}$.

 Now, we compute $r_{\max}$ and $r_{\min}$. We first observe that 
 $r_{\max} = 1/r_{\min}$.  Writing $v = w + x$, and $|x| \leq \Delta$, we have $r(w+x,w) = \frac{s+(w + x)^2}{s+w^2}$. Also, $r_{\max}$ is obtained for $v = w + \Delta$, and $w$ positive. In fact, if $w < 0$, for $x \in \mathbb{R}$ such that $|x| \leq \Delta$, $r(w+x,w) = r(-w - x, w) \leq r(-w + \Delta, w)$.
 We derivate: $\frac{d }{dw}r(w+\Delta,w) = 2\Delta\frac{s -w\Delta -w^2}{(s+w^2)^2}$, which has two roots. We note $w_{\max} = \frac{1}{2}(\sqrt{\Delta^2 + 4s} - \Delta)$, which verifies $w_{\max} + \Delta = s/w_{\max}$. Then, 

 \[r_{\max} = \frac{s + (w_{\max}+\Delta)^2}{s + w_{\max}^2} = \frac{s + (s/w_{\max})^2}{s + w_{\max}^2} = \frac{s }{w_{\max}^2} = \frac{2s}{ 2s  + \Delta^2 - \Delta\sqrt{\Delta^2 + 4s}}.\]

 Then, we investigate $\max\{f(x),f(1/x)\}$ for $x \in (1, \alpha/(\alpha-1))$:

Let, for $x \in (1, \alpha/(\alpha-1))$, $g(x) = x^{1-2\alpha} - \frac{\alpha + (1- \alpha) x}{\alpha x + 1 - \alpha}$. Then, \[g'(x) = (1-2 \alpha)x^{-2\alpha} - \frac{(1-\alpha)(\alpha x + 1 - \alpha) - \alpha (\alpha + (1-\alpha)x) }{(\alpha x + (1-\alpha))^2} = (1-2\alpha)\left(x^{-2\alpha} - \frac{1}{(\alpha + (1-\alpha) x)^2}\right).\]

Given that $\alpha > 1$, $x \mapsto x^\alpha$ is convex on $(1, \alpha/(\alpha-1))$, $x^\alpha \geq 1 + \alpha x > \alpha x + 1 - \alpha$, and, by monotonicity, $x^{-2\alpha} \leq \frac{1}{(\alpha + (1-\alpha) x)^2}$. Then, $g'(x) \geq 0$.

Also, $g(0) = 0$. Therefore, for $x \in (1, \alpha/(\alpha-1)), g(x) \geq 0$. By monotonicity of the logarithm, \begin{align*}
    (1-2\alpha) \log(x) \geq \log\left( \frac{\alpha + (1- \alpha) x}{\alpha x + 1 - \alpha}\right) \implies f(1/x) \geq f(x).
\end{align*}

 Then, the supremum of the Rényi divergence attained for $r = r_{\min}$ and is given by:

 \begin{align*}
     &\sup_{|v-w| \leq \Delta} D_\alpha (\mathcal{N} (0, s + v^2), \mathcal{N} (0, s + w^2)) = \begin{cases}
         \frac{1}{2(\alpha-1)}(\alpha\log (r_{\min}) - \log (\alpha r_{\min} + 1-\alpha)) \text{ if } s \geq \alpha(\alpha-1)\Delta^2,\\
         +\infty \text{ else,}
     \end{cases}\\
    &\text{where } r_{\min} = \frac{ 2s  + \Delta^2 - \Delta\sqrt{\Delta^2 + 4s}}{2s}.\;
 \end{align*}
\end{proof}

Performing an asymptotic expansion, we recover convergence rates for the Rényi divergence:

\begin{proposition}[Rényi divergence rate of convergence -- univariate case]
\label{appprop:asymptotical-renyi-divergence-univariate}
    \[\sup_{|v-w| \leq \Delta} D_\alpha (\mathcal{N} (0, \sigma_\theta^2 d + v^2), \mathcal{N} (0, \sigma_\theta^2 d + w^2))  =\frac{\alpha \Delta^2}{4 d \sigma_\theta^2} + o(d^{-1}).\]
\end{proposition}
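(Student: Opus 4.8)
The plan is to start from the exact formula for the supremum given by Lemma~\ref{applem:renyi-divergence-univariate-gaussian} and perform a careful asymptotic expansion in the regime $s := \sigma_\theta^2 d \to \infty$. First I would observe that for $d$ large enough we have $s \geq \alpha(\alpha-1)\Delta^2$, so only the finite branch of the lemma is active; writing $r = r_{s,\Delta} = \frac{2s + \Delta^2 - \Delta\sqrt{\Delta^2+4s}}{2s}$, the supremum equals $\frac{1}{2(\alpha-1)} f(r)$ with $f(r) = \alpha\log r - \log(\alpha r + 1-\alpha)$, and the $o(d^{-1})$ statement is understood as $d\to\infty$.

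The key computation is expanding $r$. From $r = 1 + \frac{\Delta^2 - \Delta\sqrt{\Delta^2+4s}}{2s}$ and the Taylor expansion $\sqrt{\Delta^2+4s} = 2\sqrt{s}\,\bigl(1 + \tfrac{\Delta^2}{8s} + O(s^{-2})\bigr)$, one obtains $r = 1 - \varepsilon$ with $\varepsilon = \frac{\Delta}{\sqrt{s}} - \frac{\Delta^2}{2s} + O(s^{-3/2})$. In particular $\varepsilon^2 = \frac{\Delta^2}{s} + O(s^{-3/2})$: the cross term $2\cdot\frac{\Delta}{\sqrt s}\cdot\frac{\Delta^2}{2s}$ is already of order $s^{-3/2}$ and does not contribute to the $s^{-1}$ coefficient.

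Next I would substitute into $f$. Using $\alpha r + 1-\alpha = 1 - \alpha\varepsilon$, we have $f(r) = \alpha\log(1-\varepsilon) - \log(1-\alpha\varepsilon)$; expanding both logarithms, the linear terms $-\alpha\varepsilon$ cancel and the quadratic terms combine to $\frac{\alpha(\alpha-1)}{2}\varepsilon^2 + O(\varepsilon^3)$. Plugging in $\varepsilon^2 = \Delta^2/s + O(s^{-3/2})$ and dividing by $2(\alpha-1)$ gives $\frac{\alpha\Delta^2}{4s} + O(s^{-3/2})$; finally substituting $s = \sigma_\theta^2 d$ yields the claim with remainder $O(d^{-3/2}) = o(d^{-1})$.

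The only delicate point — and the main obstacle, though a bookkeeping one rather than a conceptual one — is keeping the error orders consistent: one must carry the expansion of $r$ to order $s^{-1}$ (the $s^{-1/2}$ term must be squared \emph{exactly}, not truncated), check that the first-order contributions in $f$ genuinely cancel so that the leading behaviour is quadratic in $\varepsilon$, and verify that the $O(\varepsilon^3)$ tail of $f$ is indeed $O(s^{-3/2})$ and hence absorbed into $o(d^{-1})$.
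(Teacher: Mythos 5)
Your proposal is correct and follows essentially the same route as the paper: start from the closed form $\frac{1}{2(\alpha-1)}\bigl(\alpha\log r_{s,\Delta} - \log(\alpha r_{s,\Delta}+1-\alpha)\bigr)$ of Lemma~\ref{applem:renyi-divergence-univariate-gaussian}, expand $r_{s,\Delta}$ in $s=\sigma_\theta^2 d$, note the cancellation of the linear terms so that $f$ is quadratic in the deviation, and combine. If anything, your bookkeeping of the remainders ($\varepsilon^2 = \Delta^2/s + O(s^{-3/2})$, tail $O(\varepsilon^3)=O(s^{-3/2})$) is more careful than the paper's, which writes the expansion of $r_{s,\Delta}$ with a loose $o(1)$ remainder.
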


\begin{proof}
    Note $s = \sigma^2 d$. We first compute an asymptotical expansion of $r_{s,\Delta}$.

    \[r_{s,\Delta} = \frac{ 2s  + \Delta^2 - \Delta\sqrt{\Delta^2 + 4s}}{2s} = 1 + \frac{\Delta^2}{2s} - \frac{\Delta}{\sqrt{s}}\sqrt{\frac{\Delta^2}{4s} + 1} = 1 + \frac{\Delta^2}{2s} - \frac{\Delta}{\sqrt{s}} + o(1).\]
    Also, we have $\alpha \log(1+r) - \log(1+ \alpha r) = \alpha (r  - r^2/2 + o(r^2)) - (\alpha r -(\alpha r)^2/2 + o(r^2)) = \frac{\alpha(\alpha-1)}{2} r^2 + o(r^2)$. 
    Combining both expansions,
    \[\frac{1}{2(\alpha-1)}(\alpha\log (r_{s, \Delta}) - \log (\alpha r_{s,\Delta} + 1-\alpha)) = \frac{\alpha \Delta^2}{4 d} + o(d^{-1}).\]
    
\end{proof}

\textbf{Rényi divergence between Gaussian matrices.}

Now, we compute an upper bound of $\sup_{\|v-w\| \leq \Delta} D_\alpha (\sqrt{s}G+ vZ, \sqrt{s}G+wZ)$.

\begin{lemma}[Upper bound of Rényi divergence between Gaussian variables with same mean and different variance]
\label{applem:renyi-divergence-gaussian-multivariate}
Let $G \in \mathbb{R}^{n\times l}, G \in \mathbb{R}^{d\times l}$ be Gaussian matrices, and $v,w \in \mathbb{R}^{n \times d}$. Then,
     \begin{align*}
     &\sup_{\|v-w\| \leq \Delta} D_\alpha (\sqrt{s}G+ vZ, \sqrt{s}G+wZ) \leq \begin{cases}
         \frac{nl}{2(\alpha-1)}(\alpha\log (r_{s,\Delta}) - \log (\alpha r_{s,\Delta} + 1-\alpha)) \text{ if } s \geq \alpha(\alpha-1)\Delta^2,\\
         +\infty \text{ else,}
     \end{cases}\\
    &\text{where } r_{s,\Delta} = \frac{ 2s  + \Delta^2 - \Delta\sqrt{\Delta^2 + 4s}}{2s}.\;
 \end{align*}
\end{lemma}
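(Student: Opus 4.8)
The strategy is to peel off the $l$ independent columns, reduce to a Rényi divergence between centered matrix Gaussians, diagonalize it into $n$ scalar contributions, and bound each scalar contribution using Lemma~\ref{applem:renyi-divergence-univariate-gaussian}.

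First I would observe that $\sqrt{s}G+vZ$ has law $\mathcal{N}(0,\Sigma_v)^{\otimes l}$ with $\Sigma_v := sI_n+vv^T$ (positive definite since $s>0$): its $j$-th column is $\sqrt{s}G_{\cdot,j}+vZ_{\cdot,j}\sim\mathcal{N}(0,sI_n+vv^T)$, and the $l$ columns are independent because $G$ and $Z$ are independent and each has independent columns. Likewise $\sqrt{s}G+wZ$ has law $\mathcal{N}(0,\Sigma_w)^{\otimes l}$ with $\Sigma_w := sI_n+ww^T$. By additivity of the Rényi divergence over independent products, $D_\alpha(\sqrt{s}G+vZ,\sqrt{s}G+wZ)=l\,D_\alpha(\mathcal{N}(0,\Sigma_v),\mathcal{N}(0,\Sigma_w))$. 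Using invariance of $D_\alpha$ under the invertible linear map $x\mapsto\Sigma_v^{-1/2}x$ (which sends $\mathcal{N}(0,\Sigma_v)$ to $\mathcal{N}(0,I_n)$ and $\mathcal{N}(0,\Sigma_w)$ to $\mathcal{N}(0,\Sigma_v^{-1/2}\Sigma_w\Sigma_v^{-1/2})$) and then an orthogonal diagonalization, I would write $D_\alpha(\mathcal{N}(0,\Sigma_v),\mathcal{N}(0,\Sigma_w))=\sum_{i=1}^n D_\alpha(\mathcal{N}(0,1),\mathcal{N}(0,\lambda_i))=\frac{1}{2(\alpha-1)}\sum_{i=1}^n f(1/\lambda_i)$, where $\lambda_1,\dots,\lambda_n>0$ are the eigenvalues of $\Sigma_v^{-1/2}\Sigma_w\Sigma_v^{-1/2}$ and $f(r)=\alpha\log r-\log(\alpha r+1-\alpha)$; the scalar identity $D_\alpha(\mathcal{N}(0,1),\mathcal{N}(0,\lambda))=\frac{1}{2(\alpha-1)}f(1/\lambda)$ is the one-line Gaussian computation already carried out in the proof of Lemma~\ref{applem:renyi-divergence-univariate-gaussian}. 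It then remains to show $f(1/\lambda_i)\le f(r_{s,\Delta})$ for every $i$.

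For that I would control the relative eigenvalues by Rayleigh quotients. For any unit vector $x$, $\|w^Tx\|\le\|v^Tx\|+\|(w-v)^Tx\|\le\|v^Tx\|+\Delta$, since the operator norm is dominated by the Frobenius norm, hence
\[
\frac{x^T\Sigma_w x}{x^T\Sigma_v x}=\frac{s+\|w^Tx\|^2}{s+\|v^Tx\|^2}\le\frac{s+(\|v^Tx\|+\Delta)^2}{s+\|v^Tx\|^2}\le\sup_{a\ge0}\frac{s+(a+\Delta)^2}{s+a^2}=\frac{1}{r_{s,\Delta}},
\]
the last equality being exactly the maximization performed inside Lemma~\ref{applem:renyi-divergence-univariate-gaussian}. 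Taking the supremum over $x$ gives $\lambda_{\max}(\Sigma_v^{-1/2}\Sigma_w\Sigma_v^{-1/2})\le 1/r_{s,\Delta}$, and exchanging the roles of $v$ and $w$ (the eigenvalues of $\Sigma_w^{-1/2}\Sigma_v\Sigma_w^{-1/2}$ are the $1/\lambda_i$) gives $1/\lambda_i\le 1/r_{s,\Delta}$ as well, so each $1/\lambda_i$ lies in $[r_{s,\Delta},1/r_{s,\Delta}]$. Since $f'(r)=\alpha(\alpha-1)\frac{r-1}{r(\alpha r+1-\alpha)}$ has the sign of $r-1$ on this interval (using $r_{s,\Delta}>1-1/\alpha$, which holds when $s>\alpha(\alpha-1)\Delta^2$), $f$ is decreasing on $[r_{s,\Delta},1]$ and increasing on $[1,1/r_{s,\Delta}]$, so $f(1/\lambda_i)\le\max\{f(r_{s,\Delta}),f(1/r_{s,\Delta})\}=f(r_{s,\Delta})$; the last equality is the inequality $f(1/x)\ge f(x)$ for $x\in(1,\alpha/(\alpha-1))$ proved inside Lemma~\ref{applem:renyi-divergence-univariate-gaussian}, applied with $x=1/r_{s,\Delta}$. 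Summing over $i$ and multiplying by $l$ yields $D_\alpha(\sqrt{s}G+vZ,\sqrt{s}G+wZ)\le\frac{nl}{2(\alpha-1)}(\alpha\log r_{s,\Delta}-\log(\alpha r_{s,\Delta}+1-\alpha))$ uniformly over $\|v-w\|\le\Delta$; when $s\le\alpha(\alpha-1)\Delta^2$ one has $r_{s,\Delta}\le 1-1/\alpha$, so $f(r_{s,\Delta})=+\infty$ and the bound holds vacuously.

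The main obstacle is the Rayleigh-quotient step: $v$ and $w$ generally have misaligned singular subspaces, so $\Sigma_v$ and $\Sigma_w$ are not simultaneously diagonalizable and the eigenvalues of $\Sigma_v^{-1/2}\Sigma_w\Sigma_v^{-1/2}$ are not the coordinatewise ratios $(s+\sigma_i(w)^2)/(s+\sigma_i(v)^2)$. The Rayleigh-quotient bound is precisely the device that collapses the matrix constraint $\|v-w\|\le\Delta$ onto the scalar constraint already analyzed for $r_{s,\Delta}$, so that each of the $n$ diagonal contributions inherits the one-dimensional bound and the final estimate carries only the benign factor $n$.
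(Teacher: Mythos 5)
Your proposal is correct and follows essentially the same route as the paper's proof: peel off the $l$ independent columns, whiten and diagonalize to reduce to $\sum_i D_\alpha(\mathcal{N}(0,1),\mathcal{N}(0,\lambda_i))$, bound the eigenvalues of $\Sigma_v^{-1/2}\Sigma_w\Sigma_v^{-1/2}$ by a Rayleigh-quotient argument that collapses the Frobenius constraint onto the scalar optimization defining $r_{s,\Delta}$, and invoke the univariate lemma. Your triangle-inequality bound $\|w^Tx\|\le\|v^Tx\|+\Delta$ is a slightly cleaner packaging of the paper's expansion $\Sigma_w=\Sigma_v+vS^T+Sv^T+SS^T$ plus Cauchy--Schwarz, and your explicit symmetry argument for the lower eigenvalue bound fills a step the paper leaves implicit.
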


\begin{proof}
    Because $vZ$ is composed of independent columns, we can write $D_\alpha (\sqrt{s}G+ vZ, \sqrt{s}G+wZ) = l D_\alpha (\sqrt{s}G_1+ vZ_1, \sqrt{s}_1+wZ_1)$. Noting $\Sigma_v = s I_n + v v^T$, $\Sigma_w = s I_n + w w^T$, we reduce to the problem: \[D_\alpha (sG+ vZ, sG+wZ) = l D_\alpha (\mathcal{N}(0,\Sigma_v),\mathcal{N}(0,\Sigma_w)).\]
    $\Sigma_v$ is positive definite and admit a square root. We note $M_{v,w} = \Sigma_v^{-1/2} \Sigma_w \Sigma_v^{-1/2}$, and note $\lambda_1,\dots,\lambda_n$ its eigenvalues. Then, by invariance of Rényi divergence my invertible matrix multiplication, the Rényi divergence can be written: 
    \begin{align*}
        D_\alpha (\mathcal{N}(0,\Sigma_v),\mathcal{N}(0,\Sigma_w)) = D_\alpha (\mathcal{N}(0,I_n),\mathcal{N}(0,M_{v,w})) &= \frac{1}{2(\alpha-1)}(\alpha\log (\det(M_{v,w}) - \log (\alpha M_{v,w} + 1-\alpha))\\
        &= \frac{1}{2(\alpha-1)}\sum_{k=1}^n \alpha\log(\lambda_k) - \log(\alpha\lambda_k + 1 - \alpha)\\
        &= \sum_{k=1}^n D_\alpha (\mathcal{N}(0,1),\mathcal{N}(0,\lambda_k)).
    \end{align*}

    Now, we prove that for all $k \in \Iintv{1,n}$, $\lambda_i \in (r_{s,\Delta},1/r_{s,\Delta})$. We note $S = v-w$. Any eigenvalue $\lambda$ of $M_{v,w}$ can be written (with a change of variable $x \mapsto \Sigma^{-1/2} x)$:
    \[\lambda = \frac{x^T \Sigma_w x}{x^T \Sigma_v x},\]
    where $x$ is an eigenvector associated to $\lambda$.
    We have: $\Sigma_w = sI_n + w w^T = s I_n + (v+S)(v+S)^T = \Sigma_v + v S^T + S v^T + S S^T$.

    Then, \[\lambda -1 = \frac{x^T (\Sigma_v + v S^T + S v^T + S S^T) x}{x^T \Sigma_v x} - 1
        = \frac{2\langle S^Tx,v^Tx\rangle + \|S^T x\|^2}{s\|x\|^2 + \|v^Tx\|^2}.\]

    The condition $\|S\| \leq \Delta$ gives $\|S^T x\|^2 \leq \sigma_{\max}(S)^2\|x\|^2 \leq \Delta^2 \|x\|^2$. Cauchy-Schwarz inequality yields $\langle S^Tx,v^Tx\rangle \leq \|v^Tx\| \|S^T x\| \leq \Delta \|v^Tx\| \|x\|$, and 
    \[\lambda \leq 1 + \frac{2 \Delta \|v^Tx\| \|x\| +\Delta^2 \|x\|^2}{s\|x\|^2 + \|v^Tx\|^2} \leq \frac{s + \left( \frac{\|v^Tx\|}{\|x\|}  +\Delta \right)^2}{s + \left(\frac{\|v^Tx\|}{\|x\|}\right)^2} \leq \sup_{v \in \mathbb{R}} \frac{s + (v +\Delta )^2}{s + v^2} \leq 1/r_{s,\Delta},\] using the analysis from Lemma~\ref{applem:renyi-divergence-univariate-gaussian}.

    Then, leveraging Lemma~\ref{applem:renyi-divergence-univariate-gaussian}, we find:
    \[\sup_{\|v-w\| \leq \Delta} D_\alpha (\sqrt{s}G+ vZ, \sqrt{s}G+wZ) \leq nl D_\alpha(\mathcal{N} (0, s + v_{s,\Delta}^2), \mathcal{N} (0, s + (v_{s,\Delta} +\Delta)^2)),\]
    with $v_{s,\Delta} = \frac{1}{2}(\sqrt{\Delta^2 + 4s} - \Delta)$, concluding the proof.

\end{proof}

Note that there is no reason for this bound to be optimal. In fact, we leverage a upper bound of the eigenvalues of $M_{v,w}$ individually instead of directly analyzing: \[\sup_{\|v-w\| \leq \Delta}\sum_{k=1}^n D_\alpha (\mathcal{N}(0,1),\mathcal{N}(0,\lambda_k)).\]

Performing the same asymptotical analysis as Proposition~\ref{appprop:asymptotical-renyi-divergence-univariate}, we recover the following upper bound for the Rényi divergence:

\begin{proposition}[Rényi divergence rate of convergence -- multivariate case]
    \[\sup_{|v-w| \leq \Delta} D_\alpha (\sigma_\theta \sqrt{d-n}G+ vZ, \sigma_\theta \sqrt{d-n}G+wZ)  \leq \frac{\alpha nl\Delta^2}{4 (d-n) \sigma_\theta^2} + o(d^{-1}).\]
\end{proposition}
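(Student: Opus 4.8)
The plan is to chain together the two results already established in this subsection, namely Lemma~\ref{applem:renyi-divergence-gaussian-multivariate} and the asymptotic expansion of Proposition~\ref{appprop:asymptotical-renyi-divergence-univariate}. First I would apply Lemma~\ref{applem:renyi-divergence-gaussian-multivariate} with the substitution $s = \sigma_\theta^2(d-n)$, which gives directly
\[\sup_{\|v-w\|\leq\Delta} D_\alpha\bigl(\sigma_\theta\sqrt{d-n}\,G+vZ,\;\sigma_\theta\sqrt{d-n}\,G+wZ\bigr) \leq nl\,D_\alpha\bigl(\mathcal{N}(0,s+v_{s,\Delta}^2),\;\mathcal{N}(0,s+(v_{s,\Delta}+\Delta)^2)\bigr),\]
where $v_{s,\Delta} = \tfrac{1}{2}(\sqrt{\Delta^2+4s}-\Delta)$, valid whenever $s\geq\alpha(\alpha-1)\Delta^2$. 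Since $n$, $\Delta$ and $\alpha$ are fixed, $s=\sigma_\theta^2(d-n)\to+\infty$ as $d\to\infty$, so this finiteness condition holds for all sufficiently large $d$, and by Lemma~\ref{applem:renyi-divergence-univariate-gaussian} the right-hand factor is exactly $\sup_{|v-w|\leq\Delta}D_\alpha(\mathcal{N}(0,s+v^2),\mathcal{N}(0,s+w^2))$.

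Next I would plug in the expansion from Proposition~\ref{appprop:asymptotical-renyi-divergence-univariate}. That proposition is written for $s=\sigma_\theta^2 d$, but its proof proceeds purely by a Taylor expansion of $r_{s,\Delta}$ and of $\alpha\log(1+r)-\log(1+\alpha r)$ in the single small parameter $1/s$, so the identical expansion holds verbatim with $s=\sigma_\theta^2(d-n)$. This yields
\[nl\,D_\alpha\bigl(\mathcal{N}(0,\sigma_\theta^2(d-n)+v_{s,\Delta}^2),\;\mathcal{N}(0,\sigma_\theta^2(d-n)+(v_{s,\Delta}+\Delta)^2)\bigr) = \frac{\alpha nl\Delta^2}{4(d-n)\sigma_\theta^2}+o\bigl((d-n)^{-1}\bigr).\]
Finally, because $n$ is held fixed, $(d-n)^{-1}=d^{-1}+o(d^{-1})$ and hence $o((d-n)^{-1})=o(d^{-1})$; the leading term $\tfrac{\alpha nl\Delta^2}{4(d-n)\sigma_\theta^2}$ is left unchanged, matching the stated bound. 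Combining the two displays completes the argument.

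There is no real conceptual obstacle here; the argument is essentially bookkeeping layered on top of Lemma~\ref{applem:renyi-divergence-gaussian-multivariate}, Lemma~\ref{applem:renyi-divergence-univariate-gaussian} and Proposition~\ref{appprop:asymptotical-renyi-divergence-univariate}. The one point that warrants genuine care is justifying that the expansion of Proposition~\ref{appprop:asymptotical-renyi-divergence-univariate} transfers from $s=\sigma_\theta^2 d$ to $s=\sigma_\theta^2(d-n)$; I would address this by noting explicitly that every step of its proof is an expansion in $1/s$ alone and is therefore indifferent to how $s$ grows with $d$, as long as $s\to\infty$. The secondary, essentially trivial points—that $o((d-n)^{-1})$ is absorbed into $o(d^{-1})$ and that the finiteness condition $s\geq\alpha(\alpha-1)\Delta^2$ is eventually met—follow immediately for fixed $n,\Delta,\alpha$.
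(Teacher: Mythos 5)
Your proposal is correct and follows essentially the same route as the paper, which obtains this proposition by "performing the same asymptotical analysis as Proposition~\ref{appprop:asymptotical-renyi-divergence-univariate}" on the bound of Lemma~\ref{applem:renyi-divergence-gaussian-multivariate} with $s=\sigma_\theta^2(d-n)$. Your explicit remarks that the univariate expansion is an expansion in $1/s$ alone and that $o((d-n)^{-1})=o(d^{-1})$ for fixed $n$ are the right bookkeeping points and, if anything, make the argument slightly more careful than the paper's one-line justification.
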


\textbf{Methodology of experiments --- releasing one output.}

Let $v,w \in \mathbb{R}^{1 \times d}$. We give a result more precise than Theorem~\ref{theo:convergence-trade-off-one-dim}.
By Theorem~\ref{theo:convergence-trade-off-one-dim}, there exists $C_{\|v\|,\|w\|} > 0$ such that for all $\alpha \in \left(C_{\|v\|,\|w\|}/d,1-C_{\|v\|,\|w\|}/d\right)$:
\[\tilde{G}_{\Lambda(\sigma_\theta,d,\|v\|,\|w\|)}\left(\alpha + \frac{C_{\|v\|,\|w\|}}{d}\right)-\frac{C_{\|v\|,\|w\|}}{d} \leq T(VZ,WZ)(\alpha),\]
Here, the constant $C_{\|v\|,\|w\|}$ is not universal but depends on $v$ and $w$. $C_{\|v\|,\|w\|} = \max\{A_{\|v\|}, A_{\|w\|}\}$ is given by Proposition~\ref{lem:convergence-gaussian-one-dim}. $A_{\|v\|}$ and $A_{\|w\|}$ both depend on a universal constant, derived from Theorem~\ref{theo:multivariate-clt}, that we set to $1$ for our experiments.  We note $A_{\|v\|} = \left(9 - \frac{6}{(1+ d\sigma_\theta^2/\|v\|^2)^2}\right)$.

Based on Lemma~\ref{applem:renyi-divergence-univariate-gaussian}, the Rényi divergence $ D_\alpha (\mathcal{N} (0, \sigma_\theta d + \|v\|^2), \mathcal{N} (0, \sigma_\theta d + \|w\|^2))$ is maximized by setting $v_{d,\Delta} := \|v\|=\frac{1}{2}(\sqrt{\Delta^2 + 4 \sigma_\theta d} - \Delta), w_{d,\Delta} := \|w\|=\frac{1}{2}(\sqrt{\Delta^2 + 4 \sigma_\theta d} + \Delta)$. Then, the ratio of the variances $r$ satisfies $r = \frac{\sigma_\theta d + v_{d,\Delta}^2}{\sigma_\theta d + w_{d,\Delta}^2} < 1$. Leveraging Proposition~\ref{lem:trade-off-function-gaussians-different-variances}, for $\alpha \in (0,1)$, we consider the trade-off function:
\[\tilde{G}_{\Lambda(\sigma_\theta,d,v_{d,\Delta},w_{d,\Delta})}(\alpha) = 2 \Phi\left(\frac{\sigma_\theta d + v_{d,\Delta}^2}{\sigma_\theta d + v_{d,\Delta}^2}\Phi^{-1}\left( 1-\alpha/2\right)\right) - 1.\]

Our objective is to estimate the Rényi divergence derived from the following trade-off function:
\[h(\alpha) =  \max\begin{cases}
    T(V,W)(\alpha),\\
    2 \Phi\left(\frac{\sigma_\theta d + v_{d,\Delta}^2}{\sigma_\theta d + v_{d,\Delta}^2}\Phi^{-1}\left( 1-\alpha/2 - C_{v_{d,\Delta},w_{d,\Delta}}/2d\right)\right) - 1 + \frac{C_{v_{d,\Delta},w_{d,\Delta}}}{d},
\end{cases}\]
where we set, by abuse of notation:
\[\Phi\left(\frac{\sigma_\theta d + v_{d,\Delta}^2}{\sigma_\theta d + v_{d,\Delta}^2}\Phi^{-1}\left( 1-\alpha/2 - C_{v_{d,\Delta},w_{d,\Delta}}/2d\right)\right) - 1 + \frac{C_{v_{d,\Delta},w_{d,\Delta}}}{d} := 0 \text{ if } \alpha \in (0,C_{v_{d,\Delta},q_{d,\Delta}}/d) \cup (1-C_{v_{d,\Delta},w_{d,\Delta}}/d,1).\]

The constant $C_{v_{d,\Delta},w_{d,\Delta}}$ is equal to $C_{v_{d,\Delta},w_{d,\Delta}} = \max\{A_{v_{d,\Delta}}, A_{w_{d,\Delta}}\} = \left(9 - \frac{6}{(1+ d\sigma_\theta^2/v_{d,\Delta}^2)^2}\right)$.

Now, we can numerically estimate $l_\alpha(h)$.

As discussed in Section~\ref{subsec:mult-points}, for $d$ large enough, there exist $0 < c_1 < c_2 < 1$ such that:
    \[h(\alpha) = \begin{cases}
        T(V,W)(\alpha) \text{ if } \alpha \in (0,c_1) \cup (c_2,1),\\
        2 \Phi\left(\frac{\sigma_\theta d + v_{d,\Delta}^2}{\sigma_\theta d + v_{d,\Delta}^2}\Phi^{-1}\left( 1-\alpha/2 - C_{v_{d,\Delta},w_{d,\Delta}}/2d\right)\right) - 1 + \frac{C_{v_{d,\Delta},w_{d,\Delta}}}{d}\text{ if } \alpha\in (c_1,c_2).
    \end{cases}\]

We numerically compute $c_1$ and $c_2$ with the \verb|scipy.optimize.brentq| method, with the smallest possible tolerance \verb|tol = 4*np.finfo(float).eps| $\approx 10^{-15}$, which is negligible compared to the other quantities of the experiment. $\Phi$ and $\Phi^{-1}$ are respectively computed with the methods \verb|scipy.stats.norm.cdf| and \verb|scipy.stats.norm.ppf|.
We compute the derivative:
\[h'(\alpha) = \begin{cases}
        - \frac{\phi(\Phi^{-1}(1-\alpha) - \Delta /\sigma_\theta)}{\phi(\Phi^{-1}(1-\alpha))} \text{ if } \alpha \in (0,c_1) \cup (c_2,1),\\
         - \frac{\sigma_\theta d + v_{d,\Delta}^2}{\sigma_\theta d + v_{d,\Delta}^2}\frac{\phi\left(\frac{\sigma_\theta d + v_{d,\Delta}^2}{\sigma_\theta d + v_{d,\Delta}^2}\Phi^{-1}\left( 1-\alpha/2 - C_{v_{d,\Delta},w_{d,\Delta}}/2d\right)\right)}{\phi\left(\Phi^{-1}\left( 1-\alpha/2 - C_{v_{d,\Delta},w_{d,\Delta}}/2d\right)\right)} \text{ if } \alpha\in (c_1,c_2).
    \end{cases}\]
Then, we estimate $l_\alpha(h)$ using a Monte Carlo algorithm. We set $\sigma_\theta = 1$. We set the number of samples $L = 5 \times 10^5$. We draw $M =50$ times $(X_{k,k'})_{1 \leq k \leq L,1 \leq k' \leq M} \overset{iid}{\sim} \Unif([0,1])$. Then, we run the procedure and the estimates are averaged:
\[\widetilde{l_\alpha}(h) = \frac{1}{M (\alpha-1)} \sum_{k=1}^M\log\frac{1}{L}\sum_{k'=1}^L |h'(X_{k,k'})|^{1-\alpha}.\]
We repeat the process for different values of $d$ and $\Delta$.
For each configuration, we compute the empirical standard variation of the averaged estimates:
\[\tilde{S}_\alpha(h) = \frac{1}{\sqrt{M-1}} \sqrt{\sum_{k=1}^M \left(\frac{1}{\alpha-1}\log\left(\frac{1}{L}\sum_{k'=1}^L |h'(X_{k,k'})|^{1-\alpha}\right) - \widetilde{l_\alpha}(h)\right)^2}\]

Standard variations are not shown in Figure~\ref{fig:renyi-divergence-single-output} as they are too small to be distinguishable (they are about $50$ times lower than the corresponding averages).

\textbf{Methodology of experiments --- releasing multiple outputs.}

Let $v,w \in \mathbb{R}^{n \times d}$. By Theorem~\ref{theo:convergence-trade-off-multi-dim}, There exists a universal constant $C > 0$ such that for all $\alpha \in \left(C n \sqrt{\frac{l}{d-n}},1-C n \sqrt{\frac{l}{d-n}}\right)$:
\[T\left(\sigma_\theta \sqrt{d-n}G + vZ, \sigma_\theta \sqrt{d-n}G + wZ\right)\left(\alpha + C n \sqrt{\frac{l}{4(d-n)}}\right) -C n \sqrt{\frac{l}{d-n}} \leq T(VZ,WZ)(\alpha).\]
We take $C = 1$ for our experiments.
The left trade-off function is not simple to compute for general values of $v$ and $w$ (see Lemma~\ref{applem:trade-off-gaussian-matrices-different-variances}). 
However, by independence of rows in $\sigma_\theta \sqrt{d-n}G + vZ$,

\[T\left(\sigma_\theta \sqrt{d-n}G + vZ, \sigma_\theta \sqrt{d-n}G + wZ\right) = T\left(\sigma_\theta \sqrt{d-n}G_1 + vZ_1, \sigma_\theta \sqrt{d-n}G_1 + wZ_1\right)^{\otimes l}.\]

By Lemma~\ref{applem:renyi-divergence-gaussian-multivariate} the associated Rényi divergence can be upper bounded by a choice of $v_* = v_{d-n,\Delta} I_{n,d}$, with $v_{d-n,\Delta} = \frac{1}{2}(\sqrt{\Delta^2 + 4\sigma_\theta (d-n)} - \Delta)$, $w_* = (v_{d-n,\Delta} +\Delta)I_{n,d}$. Note that $\|v-w\| = n \Delta$, so the upper bound may not be tight.

We denote $\Phi_{\chi_{nl}^2}$ as the trade-off function of a chi-squared random variable with $nl$ degrees of freedom. Then, we consider the following trade-off function:
\begin{align*}
    &T\left(\sigma_\theta \sqrt{d-n}G_1 + v_*Z_1, \sigma_\theta \sqrt{d-n}G_1 + w_*Z_1\right)^{\otimes l}\\ = &T\left(\sigma_\theta \sqrt{d-n}G_1 +  v_{d-n,\Delta} I_{n,d}Z_1, \sigma_\theta \sqrt{d-n}G_1 +  (v_{d-n,\Delta} +\Delta)I_{n,d}Z_1\right)^{\otimes l}\\
    = &T\left(\sigma_\theta \sqrt{d-n}G_{1,1} +  v_{d-n,\Delta} Z_{1,1}, \sigma_\theta \sqrt{d-n}G_{1,1} +  (v_{d-n,\Delta} +\Delta)Z_{1,1}\right)^{\otimes nl}\\
    =& \Phi_{\chi_{nl}^2}\left(\frac{\sigma_\theta^2(d-n) + v_{d-n,\Delta}^2}{\sigma_\theta^2(d-n) + (v_{d-n,\Delta}+\Delta)^2} \Phi_{\chi_{nl}^2}^{-1}(1-\alpha)\right),
\end{align*}
which is obtained from Lemma~\ref{applem:trade-off-gaussian-matrices-different-variances}.
Let the trade-off function:

\[h(\alpha) =  \max\begin{cases}
    T(V,W)(\alpha),\\
    \Phi_{\chi_{nl}^2}\left(\frac{\sigma_\theta^2(d-n) + v_{d-n,\Delta}^2}{\sigma_\theta^2(d-n) + (v_{d-n,\Delta}+\Delta)^2} \Phi_{\chi_{nl}^2}^{-1}\left(1-\alpha -C n \sqrt{\frac{l}{d-n}}\right)\right) - C n \sqrt{\frac{l}{d-n}}.
\end{cases}\]

For $d$ large enough, there exist $0 < c_1 < c_2 < 1$ such that:
    \[h(\alpha) = \begin{cases}
        T(V,W)(\alpha) \text{ if } \alpha \in (0,c_1) \cup (c_2,1),\\
        \Phi_{\chi_{nl}^2}\left(\frac{\sigma_\theta^2(d-n) + v_{d-n,\Delta}^2}{\sigma_\theta^2(d-n) + (v_{d-n,\Delta}+\Delta)^2} \Phi_{\chi_{nl}^2}^{-1}\left(1-\alpha -C n \sqrt{\frac{l}{d-n}}\right)\right) - C n \sqrt{\frac{l}{d-n}}\text{ if } \alpha\in (c_1,c_2).
    \end{cases}\]
With the same method as the one output case, we numerically compute $c_1$ and $c_2$.
Then, we compute the derivative:
\[h'(\alpha) = \begin{cases}
        - \frac{\phi(\Phi^{-1}(1-\alpha) - \Delta /\sigma_\theta)}{\phi(\Phi^{-1}(1-\alpha))} \text{ if } \alpha \in (0,c_1) \cup (c_2,1),\\
         - \frac{\sigma_\theta^2(d-n) + v_{d-n,\Delta}^2}{\sigma_\theta^2(d-n) + (v_{d-n,\Delta}+\Delta)^2}\frac{\phi_{\chi_{nl}^2}\left(\frac{\sigma_\theta^2(d-n) + v_{d-n,\Delta}^2}{\sigma_\theta^2(d-n) + (v_{d-n,\Delta}+\Delta)^2}\Phi_{\chi_{nl}^2}^{-1}\left(1-\alpha -C n \sqrt{\frac{l}{d-n}}\right)\right)}{\phi_{\chi_{nl}^2}\left(\Phi_{\chi_{nl}^2}^{-1}\left(1-\alpha -C n \sqrt{\frac{l}{d-n}}\right)\right)} \text{ if } \alpha\in (c_1,c_2),
    \end{cases}\]
where $\phi_{\chi_{nl}^2}$ is the density of a chi-squared random variable with $nl$ degrees of freedom. The functions $\Phi_{\chi_{nl}^2}$, $\Phi^{-1}_{\chi_{nl}^2}$ and $\phi_{\chi_{nl}^2}$ are respectively computed with the methods \verb|scipy.stats.chi2.cdf|, \verb|scipy.stats.chi2.ppf| and \verb|scipy.stats.chi2.cdf|.
Using the same Monte Carlo procedure as above, we can now compute the averaged empirical estimates with $M = 50$ repetitions and $L = 5 \times 10^5$ samples. We set $l = 10$ and $n = 1$, and we report in Figure~\ref{fig:renyi-divergence-multiple-output} the estimated Rényi divergence for multiple values of $d$ and $\Delta$. As in the one-dimensional output case, standard variations are not shown in Figure~\ref{fig:renyi-divergence-multiple-output} as they are too small to be distinguishable (they are about $20$ times lower than the corresponding averages).

\end{subsection}

\end{document}